\documentclass{article}

 \usepackage[preprint]{main}

\usepackage[utf8]{inputenc} 
\usepackage[T1]{fontenc}    
\usepackage{hyperref}       
\usepackage{url}            
\usepackage{booktabs}       
\usepackage{amsfonts}       
\usepackage{nicefrac}       
\usepackage{microtype}      
\usepackage{xcolor}         

\usepackage{tikz}
\usepackage{tikz-cd}
\usetikzlibrary{arrows.meta, positioning, calc}

\usepackage{graphicx}
\usepackage{subcaption}

\usepackage{wrapfig}

\usepackage{amsmath} 
\usepackage{algorithm}
\usepackage{algorithmic}

\usepackage{amsmath}
\usepackage{amssymb}
\usepackage{mathtools}
\usepackage{amsthm}
\usepackage{stmaryrd}

\theoremstyle{plain}
\newtheorem{theorem}{Theorem}[section]
\newtheorem{proposition}[theorem]{Proposition}
\newtheorem{lemma}[theorem]{Lemma}
\newtheorem{corollary}[theorem]{Corollary}
\theoremstyle{definition}
\newtheorem{definition}[theorem]{Definition}
\newtheorem{assumption}[theorem]{Assumption}
\theoremstyle{remark}
\newtheorem{remark}[theorem]{Remark}

\usepackage{multirow}

\newcommand{\Stoch}{\mathsf{Stoch}}
\newcommand{\Meas}{\mathsf{Meas}}

\newcommand{\Tr}{\mathsf{Tr}}
\newcommand{\id}{\mathrm{id}}

\newcommand{\cV}{\mathcal{V}}
\newcommand{\cQ}{\mathcal{Q}}
\newcommand{\E}{\mathbb{E}}
\newcommand{\R}{\mathbb{R}}

\newcommand{\TV}{d_{\mathrm{TV}}}

\newcommand{\CcMet}{\operatorname{CcMet}}

\DeclareMathOperator{\Dirac}{Dirac}
\DeclareMathOperator{\lfp}{lfp}

\title{A Contractive Feedback Semantics for Reinforcement Learning}

\author{%
    Zuyuan Zhang\\
    The George Washington University\\
    \texttt{zuyuan.zhang@gwu.edu}\\
}

\begin{document}

\maketitle

\begin{abstract}
Discounted reinforcement learning is usually presented through Bellman equations on closed Markov decision processes. This paper develops a compositional view: a one-step decision process is treated as an open stochastic component, and infinite-horizon policy evaluation is obtained by closing a contractive feedback loop. The resulting semantics assigns typed Bellman transformers to open components, interprets series and parallel wiring as composition and tensoring of transformers, and interprets feedback as an admissible guarded Banach trace realized by a unique fixed point. This perspective yields three theoretical consequences. First, approximate component equivalence is a contextual congruence for admitted well-typed guarded one-hole contexts: local operator error remains controlled after plugging the component into a surrounding circuit that uses the hole once and whose feedback nodes have certified uniform guardedness. Second, exact and approximate state abstractions become commuting or near-commuting coalgebraic diagrams, giving value-preservation and explicit sup-norm distortion bounds. Third, under monotone $\omega$-continuous contract-transformer semantics, safety, risk, and resource specifications can be represented as quantale-valued contracts, where local inductive bounds lift through wiring and feedback by least-fixed-point reasoning. Its central claim is not that all RL morphisms form a global traced monoidal category, but that discounted Bellman evaluation admits a contractive feedback semantics on the admissible class of guarded circuits.
\end{abstract}

\section{Introduction}
\label{sec:intro}

Reinforcement learning (RL) is commonly formalized as dynamic programming on a closed Markov decision process: fix a policy, write down a Bellman operator, and study the fixed point of that operator \citep{bellman1966dynamic,puterman2014markov,bertsekas1995neuro,sutton1998reinforcement}. This view is effective, but it hides a structural fact. The one-step transition is not intrinsically closed. It is an open component with an input interface, an output interface, a reward channel, and a continuation. The infinite-horizon value appears only after the continuation is fed back into the component. In this paper we make this feedback operation explicit.

The proposed viewpoint is simple but deliberately restricted: \emph{Bellman recursion is a guarded fixed-point trace}. A policy closes the action interface of a one-step stochastic component; discounting makes the continuation map contractive; and policy evaluation is the Banach fixed point obtained by closing the remaining value wire. This converts a familiar analytic fact into a compositional semantics. Series wiring becomes composition of continuation transformers, parallel wiring becomes tensor/product semantics, and feedback becomes a guarded fixed-point trace in the sense of traced monoidal reasoning \citep{joyal1996traced,hasegawa1997recursion,selinger2010survey} and iteration/fixed-point semantics \citep{bloom1993iteration}, specialized here to contractive maps. We use ``trace'' in this guarded sense throughout: only maps whose feedback wire is uniformly contractive are traceable, and all trace laws are invoked only inside this admissible class.

This shift matters because modern decision systems are not monolithic MDPs. They are assembled from modules: perception or representation maps, local controllers, reward shapers, safety monitors, belief-state filters, and off-policy evaluation components. Belief-state filtering is the standard semantic reduction for partially observed control \citep{smallwood1973optimal,kaelbling1998planning}, while off-policy evaluation is naturally phrased as a change of measure between trajectory laws \citep{precup2000eligibility,jiang2016doubly,thomas2016data}. A theory that only describes the final closed MDP has limited language for asking whether a local replacement is valid inside an arbitrary surrounding system. The semantic problem is therefore contextual: if two open components are close at their interface, when are they close after arbitrary composition and feedback?

We answer this by treating open decision components as typed semantic objects. Each component induces a Bellman transformer from output continuations to input values. The transformer type records the direction in which value information flows. Once types are explicit, composition is no longer an informal engineering operation: it is ordinary function composition, with discount factors recording the depth at which each local reward or perturbation enters the feedback loop. This yields quantitative contextual equivalence bounds for the admitted class of well-typed linear one-hole contexts: the hole is used once, side components are fixed, tensoring does not copy the hole, and every feedback node satisfies a uniform guardedness certificate.

The same semantics also clarifies abstraction. A deterministic representation map is a coalgebraic morphism when it commutes with the one-step transition and preserves rewards \citep{rutten2000universal,jacobs2017introduction}. In RL and planning, closely related abstraction, homomorphism, and bisimulation principles have been used to characterize when reduced models preserve decision-relevant behavior \citep{givan2003equivalence,ravindran2004approximate,li2006towards,ferns2004metrics,ferns2011bisimulation}. Exact commutation gives value preservation by an operator-intertwining proof. Approximate commutation gives an explicit sup-norm distortion bound controlled by reward mismatch and total-variation transition mismatch. Because the error is stated at the transformer level, it propagates through admitted larger decision circuits by the contextual congruence theorem.

Finally, the framework separates metric fixed-point reasoning from order-theoretic contract reasoning. Discounted real-valued returns use Banach contraction. Safety, risk, and resource constraints are often better expressed as ordered guarantees. We therefore add a quantale-valued contract layer \citep{lawvere1973metric,eklund2018fundamentals,yetter1990quantales}. In this layer, a circuit is admitted when its open semantics is a monotone $\omega$-continuous contract transformer and its wiring obeys the compilation laws stated later. Under these assumptions, closed loops are interpreted as least fixed points, and local pre-fixed-point certificates lift compositionally through wiring and feedback. This layer connects the paper to classical lattice-theoretic and fixed-point semantics \citep{tarski1955lattice,kleene1952introduction,studer2008proof,cousot1977abstract}.

The contribution is not a new learning algorithm and not an empirical claim. It is a theoretical reframing of RL semantics around contractive feedback. The main results are as follows.

\begin{itemize}
\item \textbf{Guarded traced Bellman semantics.} We formulate open discounted decision components and prove that policy evaluation is the guarded Banach trace of an admissible feedback circuit; no global trace operation on arbitrary maps is assumed.
\item \textbf{Typed compositionality.} We prove that series and parallel wiring correspond to composition and tensoring of Bellman transformers, with explicit discount bookkeeping.
\item \textbf{Contextual equivalence.} We define an operator metric on invariant value balls and prove that approximate equivalence is preserved by admitted well-typed linear contexts generated by composition, tensoring, and guarded trace.
\item \textbf{Coalgebraic abstraction.} We prove exact value preservation under commuting homomorphisms and quantitative value distortion under approximate commutation.
\item \textbf{Quantale contracts.} We prove that inductive safety/risk/resource contracts lift through series, parallel, and feedback wiring whenever the contract-transformer semantics is monotone, $\omega$-continuous, and compatible with the stated wiring laws.
\item \textbf{Extensions and sanity checks.} We show how the same interfaces cover belief-state POMDP lifting, decomposable change of measure under explicit factorization assumptions, time-varying operator tracking, and minimal modular robustness examples.
\end{itemize}

The paper is organized as follows. Section~\ref{sec:motivation} gives the semantic principles. Section~\ref{sec:prelim} fixes the categorical, metric, and order-theoretic interfaces. Section~\ref{sec:oddc} proves Bellman-as-guarded-trace. Section~\ref{sec:compositionality} proves compositionality and contextual congruence. Section~\ref{sec:abstraction} develops exact and approximate abstraction. Section~\ref{sec:contracts} gives the quantale contract layer. Sections~\ref{sec:extensions}--\ref{sec:example} record extensions and explicit robustness examples. All proofs are collected in Appendix~\ref{app:proofs}.

\section{Background and Motivation}
\label{sec:motivation}

The paper is based on three semantic principles.

\textbf{Feedback principle.} Infinite-horizon value is not merely a solution of an equation; it is the result of feeding a continuation output back into a one-step component. Discounting makes this feedback guarded, hence contractive, so the closed-loop value is uniquely defined by Banach's theorem \citep{banach1922operations}; in standard RL notation, this is exactly the contraction argument behind discounted Bellman evaluation \citep{puterman2014markov,sutton1998reinforcement}.

\textbf{Contextual principle.} Components should be compared at their interfaces. If two components induce close typed transformers, then their difference should remain controlled after the component is inserted into a larger decision circuit. This is the decision-theoretic analogue of contextual equivalence in denotational semantics, but with explicit sup-norm sensitivity constants induced by discounted Bellman operators.

\textbf{Contract principle.} Many correctness requirements are upper bounds rather than rewards. Such bounds should compose locally: a guarantee for a subcomponent should lift to a guarantee for a larger wired system. This is naturally expressed by monotone transformers and least fixed points on ordered value spaces \citep{tarski1955lattice,kleene1952introduction,studer2008proof,cousot1977abstract}.

The technical development uses Markov kernels as morphisms in the stochastic layer \citep{frivc2010categorical,panangaden2009labelled,fritz2020synthetic}, standard Borel assumptions to avoid pathologies in conditional probability and disintegration \citep{kallenberg1997foundations}, Banach contraction for discounted real-valued evaluation, and quantale-enriched complete lattices for order-valued contracts \citep{lawvere1973metric,eklund2018fundamentals}. The point of combining these ingredients is not to replace standard MDP theory, but to expose the compositional structure that standard closed-MDP notation suppresses.

\section{Preliminaries}
\label{sec:prelim}

This section fixes the mathematical interfaces used throughout the paper.
Our denotational viewpoint has two layers:
\emph{(i)} the \emph{stochastic layer} in the Markov category $\Stoch$, where one-step dynamics/policies are composable morphisms;
and \emph{(ii)} the \emph{value layer}, where a closed-loop induces a Bellman \emph{transformer} acting on a Banach (or ordered) value space.
We therefore (a) present discounted MDPs in a kernel-theoretic form so that ``one-step dynamics'' becomes a morphism in $\Stoch$;
(b) introduce quantale-enriched value scalars $\cQ$ to uniformly model additive returns and more general ordered guarantees
(risk/resource/safety); and (c) recall traced symmetric monoidal categories, emphasizing the \emph{Banach trace} as a concrete
feedback semantics realized by a unique contractive fixed point. These choices are precisely what makes
Bellman recursion an instance of categorical feedback in Section~\ref{sec:oddc}.

\paragraph{Standing measurability assumptions.}
To keep the development measure-theoretic but lightweight, we assume all spaces
($S,A,R$ and their finite products) are standard Borel measurable spaces.
This ensures the usual closure properties for Markov kernels and expectations used below.

\subsection{Discounted MDPs}
A discounted discrete-time MDP, following the classical Bellman--MDP dynamic-programming formulation \citep{bellman1966dynamic,puterman2014markov,bertsekas1995neuro,sutton1998reinforcement}, is $\mathcal{M}=(S,A,P,r,\gamma)$ with $\gamma\in(0,1)$, where
$S$ and $A$ are measurable spaces,
$P(\cdot\mid s,a)$ is a Markov kernel on $S$ (equivalently $P:S\times A\to \Delta(S)$),
and $r:S\times A\to\R$ is a bounded measurable reward.
A stationary policy is a Markov kernel $\pi(\cdot\mid s)$ on $A$ (equivalently $\pi:S\to\Delta(A)$).
Given $s_0=s$, trajectories are generated by
$a_t\sim \pi(\cdot\mid s_t)$ and $s_{t+1}\sim P(\cdot\mid s_t,a_t)$.
The (discounted) value is
$
V^\pi(s)=\E^\pi\Big[\sum_{t\ge 0}\gamma^t r(s_t,a_t)\,\Big|\,s_0=s\Big].
$

\begin{lemma}[Boundedness of discounted values]
\label{lem:value-bounded}
If $|r(s,a)|\le R_{\max}$ for all $(s,a)$, then for every stationary policy $\pi$,
$V^\pi$ is bounded and satisfies $\|V^\pi\|_\infty \le V_{\max}:=\frac{R_{\max}}{1-\gamma}$.
\end{lemma}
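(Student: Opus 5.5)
The plan is to bound the discounted series pathwise and then pass to the expectation. The one point needing care is that the expectation in the definition of $V^\pi$ is well defined, because the series must be summable before we can take an expectation of it.

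\textbf{Step 1: pathwise domination.} Fix $s$ and the trajectory law $\mathbb{P}^\pi_s$ on $(S\times A)^{\mathbb{N}}$. This law exists by the Ionescu--Tulcea theorem applied to the kernels $\pi$ and $P$, under the standing standard Borel assumptions. Each random variable $\gamma^t r(s_t,a_t)$ is measurable, and since $|r(s,a)|\le R_{\max}$ for every $(s,a)$, every trajectory satisfies
\[
\sum_{t\ge 0}\gamma^t |r(s_t,a_t)| \le \sum_{t\ge 0}\gamma^t R_{\max} = \frac{R_{\max}}{1-\gamma}.
\]
The series therefore converges absolutely for every trajectory. Its sum $G$ is a measurable limit of partial sums, and it satisfies $|G|\le V_{\max}$.

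\textbf{Step 2: well-definedness and the bound.} Because $G$ is bounded, it is integrable, so $V^\pi(s)=\E^\pi_s[G]$ is well defined. Monotonicity of expectation then gives
\[
|V^\pi(s)|\le \E^\pi_s|G|\le V_{\max}.
\]
We could equally justify interchanging the sum and the expectation by dominated convergence, using the constant dominator $V_{\max}$. That route gives the alternative expression $V^\pi(s)=\sum_t \gamma^t\E^\pi_s[r(s_t,a_t)]$, and each term there is bounded by $\gamma^t R_{\max}$, which yields the same bound.

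\textbf{Step 3: conclusion.} The bound in Step 2 is uniform in $s$, so taking the supremum over $s$ gives $\|V^\pi\|_\infty\le V_{\max}$. Measurability of $s\mapsto V^\pi(s)$ follows from the measurability of $s\mapsto \mathbb{P}^\pi_s$ in the Ionescu--Tulcea construction. I expect no real obstacle here; the only subtle point is the existence and measurability of the trajectory measure, which we take from the standing assumptions.
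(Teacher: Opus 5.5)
Your proposal is correct and follows essentially the same route as the paper: bound the discounted return pathwise by $R_{\max}/(1-\gamma)$, pass that bound through the expectation, and take the supremum over $s$. The paper phrases convergence through partial sums $G_n$ that are Cauchy in $L^\infty$, whereas you use pathwise absolute convergence directly; you also address the existence of the trajectory law (Ionescu--Tulcea) and the measurability of $s\mapsto V^\pi(s)$, which the paper leaves implicit.
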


\subsection{The Markov category \texorpdfstring{$\Stoch$}{Stoch}}
\label{sec:stoch}
Let $\Meas$ denote measurable spaces and measurable maps, and write $\Delta(Y)$ for probability
measures on $Y$. This kernel-based presentation follows the standard measure-theoretic treatment of Markov processes and its categorical formulation via stochastic/Markov categories \citep{frivc2010categorical,panangaden2009labelled,fritz2020synthetic,kallenberg1997foundations}.

\begin{definition}[Markov kernel]
A Markov kernel $k:X \to \Delta(Y)$ assigns to each $x\in X$ a probability measure $k(\cdot\mid x)$ on $Y$
such that $x\mapsto k(C\mid x)$ is measurable for every measurable $C\subseteq Y$.
\end{definition}

\paragraph{Deterministic maps as kernels.}
Every measurable map $f:X\to Y$ induces a \emph{deterministic} kernel
$\Dirac(f):X\to \Delta(Y)$ defined by $\Dirac(f)(\cdot\mid x):=\delta_{f(x)}$.
We use this embedding implicitly whenever a deterministic wiring map appears inside $\Stoch$ compositions.

\begin{definition}[$\Stoch$]
$\Stoch$ has measurable spaces as objects and Markov kernels as morphisms.
Composition is Chapman--Kolmogorov:
\[
(\ell \circ k)(C \mid x) := \int_Y \ell(C \mid y)\, k(dy \mid x).
\]
The symmetric monoidal product $\otimes$ is the cartesian product of measurable spaces,
with unit object $I=\{\star\}$.
\end{definition}

\paragraph{Closed-loop pairing kernel.}
Given a policy kernel $\pi:S\to \Delta(A)$, the canonical kernel
$\langle \id_S,\pi\rangle:S\to \Delta(S\otimes A)$ is
$
\langle \id_S,\pi\rangle(d(s,a)\mid s_0)
\;:=\;
\delta_{s_0}(ds)\,\pi(da\mid s_0).
$
Intuitively, it \emph{copies} the current state to the first output wire and samples an action from $\pi$ on the second wire.

\noindent
An MDP transition $P(\cdot\mid s,a)$ is exactly a $\Stoch$-morphism $P:S\otimes A\to S$,
and a policy is a $\Stoch$-morphism $\pi:S\to A$.
Hence one-step closed-loop dynamics becomes
$
P^\pi := P\circ \langle \id_S,\pi\rangle : S\to S
\quad\text{in }\Stoch,
$
which is the categorical backbone of our compositional wiring results later.

\subsection{Quantale-enriched value spaces}
\label{sec:quantale}

Additive discounted return is only one instance of a value algebra.
Safety, risk, and resource budgets are naturally phrased as \emph{ordered guarantees}
(e.g., ``value is below a specification bound'').
Quantales provide a minimal algebraic interface that supports (i) composition of guarantees via a monoidal product
and (ii) joins capturing choice/nondeterminism \citep{lawvere1973metric,eklund2018fundamentals,yetter1990quantales}; Section~\ref{sec:contracts} builds the contract layer on this interface.

\begin{definition}[Quantale]
A quantale is $(\cQ,\le,\bigvee,\otimes,e)$ where $(\cQ,\le)$ is a complete lattice,
$(\cQ,\otimes,e)$ is a monoid, and $\otimes$ distributes over arbitrary joins:
\[
q\otimes\Big(\bigvee_i q_i\Big) \;=\; \bigvee_i (q\otimes q_i),
\qquad
\Big(\bigvee_i q_i\Big)\otimes q \;=\; \bigvee_i (q_i\otimes q).
\]
\end{definition}

\begin{definition}[Discount endomorphism]
A discount is a monotone map $\delta:\cQ\to\cQ$ that preserves joins and is a monoid homomorphism:
\[
\delta\Big(\bigvee_i q_i\Big)=\bigvee_i\delta(q_i),
\qquad
\delta(q\otimes q')=\delta(q)\otimes\delta(q'),
\qquad
\delta(e)=e.
\]
\end{definition}

For ordered contracts, our running quantitative example is
\[
\cQ=[0,\infty],\qquad \bigvee=\sup,\qquad q\otimes q'=q+q',\qquad e=0,
\]
with the convention $q+\infty=\infty$ and discount endomorphism $\delta(q)=\gamma q$ for
$\gamma\in(0,1)$, where $\delta(\infty)=\infty$.
This gives a well-defined complete quantale for nonnegative risk, cost, violation, or resource bounds.

Real-valued discounted return is handled separately.  It is not modeled as the complete quantale
$(\overline{\R},+,0)$, since $+\infty+(-\infty)$ is not a well-defined monoid operation.  When we analyze
standard rewards and Banach fixed points (Section~\ref{sec:oddc}), we instead work on the Banach space
of bounded real-valued functions
\[
\mathcal{B}(S):=\{V:S\to\R \text{ measurable and bounded}\},
\qquad
\|V\|_\infty:=\sup_{s\in S}|V(s)|.
\]
Lemma~\ref{lem:value-bounded} ensures $V^\pi\in \mathcal{B}(S)$ under bounded rewards.

\paragraph{Contract value space.}
For a state space $S$ and a fixed contract quantale $\cQ$, define the pointwise ordered function space
\[
\cV(S):=\cQ^S,\qquad
(V\le W) \iff (\forall s\in S,\; V(s)\le W(s)).
\]
The notation $\cV(S)$ is used only for the order-valued contract layer; the Banach layer uses
$\mathcal{B}(S)$.

\begin{lemma}[Banachness of bounded measurable functions]
\label{lem:BS-banach}
$\big(\mathcal{B}(S),\|\cdot\|_\infty\big)$ is a Banach space.
\end{lemma}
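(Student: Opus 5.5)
The plan is the standard three-step argument: $\mathcal{B}(S)$ is a normed vector space, it is a closed subspace of a complete space, and completeness passes to closed subspaces. First, pointwise sums and scalar multiples of bounded measurable functions are again bounded and measurable, so $\mathcal{B}(S)$ is a real vector space. The triangle inequality and homogeneity of $\|\cdot\|_\infty$ follow pointwise from those of $|\cdot|$ on $\R$. Definiteness, $\|V\|_\infty=0\Rightarrow V=0$, holds because we take the supremum over all points rather than an essential supremum, so no quotient by null functions is needed. Hence $(\mathcal{B}(S),\|\cdot\|_\infty)$ is a normed space, and only completeness remains.

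For completeness, take a Cauchy sequence $(V_n)$ in $\mathcal{B}(S)$. For each fixed $s$ we have $|V_n(s)-V_m(s)|\le \|V_n-V_m\|_\infty$, so $(V_n(s))$ is Cauchy in $\R$. By completeness of $\R$ it converges to a limit, which defines $V(s):=\lim_n V_n(s)$. Next we upgrade this to uniform convergence. Given $\varepsilon>0$, choose $N$ such that $\|V_n-V_m\|_\infty\le\varepsilon$ for $n,m\ge N$. Letting $m\to\infty$ pointwise gives $|V_n(s)-V(s)|\le\varepsilon$ for all $s$ and all $n\ge N$, that is, $\|V_n-V\|_\infty\le\varepsilon$.

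It remains to check that $V\in\mathcal{B}(S)$.
\begin{itemize}
\item \emph{Boundedness:} $\|V\|_\infty\le \|V_N\|_\infty+\varepsilon<\infty$.
\item \emph{Measurability:} $V$ is the pointwise limit of real-valued measurable functions on the measurable space $S$. Such a limit equals $\limsup_n V_n$, which is measurable because $\{\limsup_n V_n>c\}$ can be written with countable unions and intersections of the sets $\{V_n>c-1/k\}$.
\end{itemize}
So $V\in\mathcal{B}(S)$ and $V_n\to V$ in norm, which proves completeness.

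No step is a real obstacle. The only point needing care is measurability of the limit: we must use the pointwise-limit closure of measurable functions, not merely uniform convergence. Equivalently, $\mathcal{B}(S)$ is a closed subspace of the Banach space $\ell^\infty(S)$, since uniform limits of measurable functions are measurable. The standing standard Borel assumption is not needed here; the argument works for any measurable space.
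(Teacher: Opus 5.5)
Your proof is correct and follows the paper's argument essentially step for step: pointwise Cauchy limit, uniform convergence by letting $m\to\infty$, boundedness from a single $V_N$, and measurability of the limit as a pointwise limit. You also verify the normed-space axioms, which the paper takes for granted.

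One small slip in the measurability bullet: $\{\limsup_n V_n>c\}$ cannot be built from the sets $\{V_n>c-1/k\}$ alone, since these sets cannot separate a point where $V_n\equiv c$ from one where $V_n\equiv c+1$. What you want is $\{\limsup_n V_n\ge c\}=\bigcap_k\bigcap_N\bigcup_{n\ge N}\{V_n>c-1/k\}$, or alternatively the paper's $\{V<a\}=\bigcup_{q<a,\,q\in\mathbb{Q}}\bigcup_N\bigcap_{n\ge N}\{V_n<q\}$.
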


\subsection{Traces and guarded fixed-point feedback}
\label{sec:trace}

\begin{definition}[Guarded Banach trace]
\label{def:banach-trace}
Let $X,Y,Z$ be complete metric spaces. A map
$
f:X\times Z\to Y\times Z,\qquad f=(f_Y,f_Z),
$
is \emph{$Z$-guarded} if its feedback component is uniformly contractive in the feedback variable: there exists
$c\in[0,1)$ such that
$
d_Z\big(f_Z(x,z),f_Z(x,z')\big)\le c\,d_Z(z,z')
\quad
\forall x\in X,\ z,z'\in Z.
$
For a $Z$-guarded map $f$ and each $x\in X$, Banach's fixed-point theorem gives a unique point
$z_x\in Z$ satisfying
$
z_x=f_Z(x,z_x).
$
The guarded Banach trace is the partial operation
$
\Tr^Z_{X,Y}(f)(x):=f_Y(x,z_x).
$
It is defined only for guarded maps. Thus $\Tr$ in this paper denotes a guarded fixed-point operation, not a total trace on all maps.
\end{definition}

\noindent
\emph{Scope of the trace terminology.}
We use the word ``trace'' in the restricted fixed-point sense above. The Bellman transformers considered below are not claimed to form a global traced symmetric monoidal category. Whenever we use trace-style identities such as naturality, superposing, or vanishing, the identity is invoked only for expressions whose relevant feedback coordinates are guarded on the stated invariant balls. This is the only trace structure needed for discounted Bellman evaluation: the discount factor supplies the contraction, and Banach's theorem supplies the unique closed-loop value.

\noindent
\emph{Interpretation.}
Tracing out $Z$ closes a feedback loop on $Z$ by solving a unique contractive fixed point. In Section~\ref{sec:oddc}, $Z$ is a value-function space and the contraction is exactly the discount factor. Appendix~\ref{app:banach-trace-axioms} records the fixed-point facts and the guarded trace laws used in the sequel.

\section{Open Decision Components and Traced Bellman Semantics}
\label{sec:oddc}

Building on Section~\ref{sec:prelim}, we package a \emph{one-step} decision process as an \emph{open} stochastic component
(a morphism in $\Stoch$) and show that the usual infinite-horizon value semantics is obtained by \emph{closing a feedback loop}.
Concretely, we (i) define open discounted decision components (ODDCs) as kernels producing next state and one-step reward;
(ii) derive the induced Bellman backup operator on a Banach value space; and (iii) show that the Bellman fixed point is
exactly the \emph{trace} of a feedback circuit, realized as a contractive fixed point (Banach trace).
This section focuses on the standard \emph{additive discounted return} so the feedback semantics is fully explicit;
general quantale contracts are developed later.

The \emph{component} lives in $\Stoch$ (kernel-level wiring).
Fixing a policy turns the component into a \emph{Bellman transformer} $\mathcal{T}$ acting on $\mathcal{B}(S)$.
The feedback/trace argument is carried out in $\CcMet_{\mathrm{ctr}}$ on the value space:
this is where contractive fixed points provide a canonical notion of ``closing the loop.''

\subsection{Open discounted decision components}
\label{sec:oddc-def}

Throughout this section, $S$ and $A$ are measurable spaces (state/action),
$R$ is a measurable space of one-step reward signals, and
$K:S\otimes A \to S\otimes R$ is a Markov kernel in $\Stoch$ that outputs a pair $(s',r)$ given $(s,a)$.
A policy is a kernel $\pi:S\to A$ in $\Stoch$.

\begin{definition}[Open discounted decision component (ODDC)]
An open discounted decision component is $\mathsf{M}=(S,A,R,K,\rho,\gamma)$ where
$K:(S\otimes A)\to (S\otimes R)$ is a kernel in $\Stoch$,
$\rho:R\to\R$ is a measurable scalarization (e.g., $\rho(r)=r$ when $R=\R$),
and $\gamma\in(0,1)$ is the discount factor.
\end{definition}

\paragraph{Closing the loop with a policy.}
Given $\pi:S\to A$, define the closed-loop one-step kernel
$
K^\pi := K \circ \langle \id_S,\pi\rangle : S \to (S\otimes R),
$
so that $(s',r)\sim K^\pi(\cdot\mid s)$ corresponds to
$a\sim\pi(\cdot\mid s)$ followed by $(s',r)\sim K(\cdot\mid s,a)$.

\paragraph{Value space.}
Let $\mathcal{B}(S)$ denote the Banach space of bounded measurable functions $V:S\to\R$
equipped with $\|\cdot\|_\infty$ (Lemma~\ref{lem:BS-banach}).

\paragraph{Bellman backup (additive discounted return).}
Define $\mathcal{T}_{\mathsf{M},\pi}:\mathcal{B}(S)\to\mathcal{B}(S)$ by
\begin{equation}
\label{eq:bellman-additive}
(\mathcal{T}_{\mathsf{M},\pi}V)(s)
:=
\int_{S\otimes R}\Bigl(\rho(r) + \gamma V(s')\Bigr)\,K^\pi(d(s',r)\mid s).
\end{equation}
This is the standard expectation $\E[\rho(r)+\gamma V(s')]$, written in kernel notation so that later wiring
statements can be expressed purely through $\Stoch$ composition/tensoring.

\begin{lemma}[Well-definedness and boundedness]
\label{lem:bellman-well-defined}
Assume the scalarized one-step reward is uniformly bounded:
$
|\rho(r)|\le R_{\max}\qquad \text{for all } r\in R .
$
Then $\mathcal{T}_{\mathsf{M},\pi}$ maps $\mathcal{B}(S)$ to itself and
$
\|\mathcal{T}_{\mathsf{M},\pi}V\|_\infty \;\le\; R_{\max} + \gamma \|V\|_\infty.
$
In particular, $\mathcal{T}_{\mathsf{M},\pi}$ is a self-map on the closed ball
$\{V:\|V\|_\infty\le V_{\max}\}$ with $V_{\max}:=\frac{R_{\max}}{1-\gamma}$.
\end{lemma}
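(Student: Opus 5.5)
The plan has three parts: show that $\mathcal{T}_{\mathsf{M},\pi}V$ is a genuine element of $\mathcal{B}(S)$ (measurable, bounded, with a well-defined integral), derive the norm bound, and deduce the invariance of the ball.

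\textbf{Integrand.} Fix $V\in\mathcal{B}(S)$ and set
\[
g(s',r):=\rho(r)+\gamma V(s').
\]
This $g$ is measurable on $S\otimes R$, because $\rho$ and $V$ are measurable and the projections are measurable. It is bounded, since $|g|\le R_{\max}+\gamma\|V\|_\infty$.

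\textbf{Measurability of the integral.} Next I unfold $K^\pi=K\circ\langle \id_S,\pi\rangle$. The pairing kernel $\langle\id_S,\pi\rangle$ is a Markov kernel: on rectangles $C\times D$ it gives $s_0\mapsto \mathbf{1}_C(s_0)\pi(D\mid s_0)$, and a $\pi$-$\lambda$ argument extends measurability to all product sets. Composition in $\Stoch$ then yields a Markov kernel $K^\pi:S\to\Delta(S\otimes R)$. To show that $s\mapsto\int g\,dK^\pi(\cdot\mid s)$ is measurable, I use the standard monotone-class route:
\begin{itemize}
\item for indicators $g=\mathbf{1}_C$ this is exactly the kernel measurability condition;
\item linearity extends it to simple functions;
\item uniform approximation of bounded measurable $g$ by simple functions, together with dominated convergence (uniform limits suffice), extends it to all bounded measurable $g$.
\end{itemize}
This is the only step needing care. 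The standard Borel assumption is not even essential here, but it guarantees that the Chapman--Kolmogorov composition behaves as expected.

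\textbf{Norm bound.} Since $K^\pi(\cdot\mid s)$ is a probability measure, for each $s$
\[
|(\mathcal{T}_{\mathsf{M},\pi}V)(s)|\le\int |g|\,dK^\pi(\cdot\mid s)\le R_{\max}+\gamma\|V\|_\infty .
\]
Taking the supremum over $s$ gives both boundedness and the stated inequality, so $\mathcal{T}_{\mathsf{M},\pi}$ maps $\mathcal{B}(S)$ into itself.

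\textbf{Ball invariance.} If $\|V\|_\infty\le V_{\max}=R_{\max}/(1-\gamma)$, then
\[
\|\mathcal{T}_{\mathsf{M},\pi}V\|_\infty\le R_{\max}+\gamma\frac{R_{\max}}{1-\gamma}=\frac{R_{\max}}{1-\gamma}=V_{\max}.
\]
So the closed ball is mapped into itself. The main obstacle, if any, is only the measurability bookkeeping in the second step; the remaining estimates are immediate.
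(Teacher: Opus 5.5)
Your proof is correct and follows the paper's argument. You define the bounded measurable integrand $\rho(r)+\gamma V(s')$, obtain measurability of $s\mapsto(\mathcal{T}_{\mathsf{M},\pi}V)(s)$ from the standard kernel-integration (monotone-class) argument, bound it by $R_{\max}+\gamma\|V\|_\infty$ because $K^\pi(\cdot\mid s)$ is a probability measure, and then check the ball arithmetic. The differences are cosmetic: you additionally verify that $\langle \id_S,\pi\rangle$, and hence $K^\pi$, are Markov kernels, which the paper takes for granted, and you pass from simple to bounded integrands by uniform approximation instead of the paper's monotone convergence plus the split $f=f^+-f^-$.
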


\begin{lemma}[Monotonicity and affine Lipschitz structure]
\label{lem:bellman-monotone}
Under Lemma~\ref{lem:bellman-well-defined}, $\mathcal{T}_{\mathsf{M},\pi}$ is monotone w.r.t.\ the pointwise order:
if $V\le W$ then $\mathcal{T}_{\mathsf{M},\pi}V\le \mathcal{T}_{\mathsf{M},\pi}W$.
Moreover, for any constant $c\in\R$ (identified with the constant function $s\mapsto c$),
\[
\mathcal{T}_{\mathsf{M},\pi}(V+c)=\mathcal{T}_{\mathsf{M},\pi}(V)+\gamma c,
\qquad
\|\mathcal{T}_{\mathsf{M},\pi}V-\mathcal{T}_{\mathsf{M},\pi}W\|_\infty\le \gamma\|V-W\|_\infty.
\]
\end{lemma}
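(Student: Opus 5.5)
The plan is to work directly from the integral formula \eqref{eq:bellman-additive}, using only the fact that $K^\pi(\cdot\mid s)$ is a probability measure on $S\otimes R$ for each $s$. Lemma~\ref{lem:bellman-well-defined} guarantees that every integrand appearing below is bounded and measurable, so all integrals are finite. I will also use that integration against a probability measure is linear and monotone.

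\emph{Monotonicity.} Suppose $V\le W$ pointwise. Then for every $(s',r)$ we have $\rho(r)+\gamma V(s')\le \rho(r)+\gamma W(s')$, because $\gamma>0$. Integrating both sides against $K^\pi(d(s',r)\mid s)$ gives $(\mathcal{T}_{\mathsf{M},\pi}V)(s)\le(\mathcal{T}_{\mathsf{M},\pi}W)(s)$ for every $s$.

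\emph{Constant shift.} Replacing $V$ by $V+c$ changes the integrand to $\rho(r)+\gamma V(s')+\gamma c$. By linearity of the integral, the extra term contributes $\gamma c\cdot K^\pi(S\otimes R\mid s)=\gamma c$, since the total mass is one. This gives $\mathcal{T}_{\mathsf{M},\pi}(V+c)=\mathcal{T}_{\mathsf{M},\pi}V+\gamma c$.

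\emph{Contraction.} I will obtain it from the two previous properties rather than from a fresh integral estimate. Set $c:=\|V-W\|_\infty$, so that $W-c\le V\le W+c$ pointwise. Applying monotonicity and then the shift identity yields
\[
\mathcal{T}_{\mathsf{M},\pi}W-\gamma c\;\le\;\mathcal{T}_{\mathsf{M},\pi}V\;\le\;\mathcal{T}_{\mathsf{M},\pi}W+\gamma c
\]
pointwise. Taking the supremum over $s$ gives $\|\mathcal{T}_{\mathsf{M},\pi}V-\mathcal{T}_{\mathsf{M},\pi}W\|_\infty\le\gamma\|V-W\|_\infty$. As a cross-check, the direct bound $|\int\gamma(V-W)(s')\,dK^\pi|\le\gamma\|V-W\|_\infty$ gives the same result.

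There is no real obstacle here. The only point needing care is measurability: the function $(s',r)\mapsto V(s')$ is measurable on the product space, being $V$ composed with the projection. This, together with boundedness of $\rho$, is already implicit in Lemma~\ref{lem:bellman-well-defined}.
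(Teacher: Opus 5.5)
Your proof is correct. The monotonicity and constant-shift parts are exactly the paper's argument.

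For the Lipschitz bound the routes differ. The paper estimates the difference directly,
$|(\mathcal{T}_{\mathsf{M},\pi}V)(s)-(\mathcal{T}_{\mathsf{M},\pi}W)(s)| = |\int \gamma\,(V(s')-W(s'))\,K^\pi(d(s',r)\mid s)| \le \gamma\|V-W\|_\infty$,
which is what you list only as a cross-check. Your main argument is instead Blackwell's sufficient-condition argument: you sandwich $V$ between $W\pm\|V-W\|_\infty$ and then apply monotonicity and the shift identity.

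The direct estimate is a one-liner, but it uses that the policy-evaluation operator is affine in $V$, so the difference collapses to a single integral. Your route uses only the order structure and the shift property. This makes the third claim a formal consequence of the first two. It also carries over verbatim to nonlinear monotone operators with the same shift property, e.g.\ the optimality operator $\mathcal{T}^\star$ of Lemma~\ref{lem:opt-intertwine}. The paper invokes the $\gamma$-contractivity of $\mathcal{T}^\star$ in the proof of Corollary~\ref{cor:opt-preserve} without a separate argument. There the direct route would need the extra inequality $|\sup_a f(a)-\sup_a g(a)|\le\sup_a|f(a)-g(a)|$.
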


\begin{proposition}[Trajectory semantics agrees with fixed-point semantics]
\label{prop:traj-fp}
Let $(s_{t+1},r_t)\sim K^\pi(\cdot\mid s_t)$ and assume the scalarized reward satisfies
$|\rho(r)|\le R_{\max}$ for all $r\in R$.
Define the closed-loop trajectory value
$
V^\pi(s)
:=
\E\!\left[
\sum_{t\geq 0}\gamma^t \rho(r_t)
\,\middle|\, s_0=s
\right].
$
Then $V^\pi\in\mathcal{B}(S)$ and is the unique fixed point of
$\mathcal{T}_{\mathsf{M},\pi}$ on $\mathcal{B}(S)$.
\end{proposition}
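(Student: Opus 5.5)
The plan has three parts: construct the trajectory law, verify the one-step Bellman identity by conditioning on the first transition, and get uniqueness from Banach's theorem via Lemma~\ref{lem:bellman-monotone} and Lemma~\ref{lem:BS-banach}.

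\emph{Trajectory law.} Using the standard Borel assumptions, the Ionescu--Tulcea theorem yields, for each initial state $s$, a unique probability measure $\mathbb{P}_s$ on $(S\times R)^{\mathbb{N}}$. Under $\mathbb{P}_s$ the pairs $(s_{t+1},r_t)$ are drawn successively from $K^\pi(\cdot\mid s_t)$ with $s_0=s$, and $s\mapsto\mathbb{P}_s(C)$ is measurable for every measurable cylinder set $C$. Since $|\rho(r_t)|\le R_{\max}$, the series $G:=\sum_{t\ge0}\gamma^t\rho(r_t)$ converges absolutely and uniformly, with $|G|\le R_{\max}/(1-\gamma)$.

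\emph{$V^\pi\in\mathcal{B}(S)$.} Set $V^\pi(s):=\E_s[G]$, so that $\|V^\pi\|_\infty\le V_{\max}$, exactly as in Lemma~\ref{lem:value-bounded}. Measurability follows because each truncation $V_n(s)=\sum_{t<n}\gamma^t\E_s[\rho(r_t)]$ is an iterated integral of a bounded measurable function against kernels, hence measurable. Moreover $V_n\to V^\pi$ uniformly, with error at most $\gamma^n V_{\max}$, so the limit $V^\pi$ is measurable.

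\emph{Fixed point.} Write $G=\rho(r_0)+\gamma G'$, where $G'$ is the return of the shifted trajectory $(s_{t+1},r_{t+1})_{t\ge0}$. The Markov property of the Ionescu--Tulcea construction says that, conditionally on $(s_1,r_0)$, the shifted process has law $\mathbb{P}_{s_1}$. Hence
\[
\E_s[G'\mid s_1,r_0]=V^\pi(s_1),
\]
and integrating over $(s_1,r_0)\sim K^\pi(\cdot\mid s)$ gives exactly~\eqref{eq:bellman-additive}, that is, $V^\pi=\mathcal{T}_{\mathsf{M},\pi}V^\pi$. If one prefers to avoid conditional expectations on path space, the same identity can be obtained from the truncations. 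By induction on $n$, $V_{n+1}=\mathcal{T}_{\mathsf{M},\pi}V_n$ with $V_0=0$, where each step uses only Chapman--Kolmogorov composition of the kernels. Since $\mathcal{T}_{\mathsf{M},\pi}$ is continuous by the Lipschitz bound of Lemma~\ref{lem:bellman-monotone}, letting $n\to\infty$ yields $\mathcal{T}_{\mathsf{M},\pi}V^\pi=V^\pi$.

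\emph{Uniqueness.} Lemma~\ref{lem:bellman-monotone} gives that $\mathcal{T}_{\mathsf{M},\pi}$ is a $\gamma$-contraction on the complete space $\mathcal{B}(S)$ (Lemma~\ref{lem:BS-banach}). Banach's theorem then gives a unique fixed point, which must be $V^\pi$.

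The main obstacle is justifying the induction step $V_{n+1}=\mathcal{T}_{\mathsf{M},\pi}V_n$. This requires that the $n$-step expected reward, viewed as a function of the initial state, factors through the first transition; in other words, that the iterated-kernel (Ionescu--Tulcea) construction is consistent with the shift. I would handle this by defining $V_n$ directly as iterated kernel integrals and checking the recursion by Fubini for kernels, which is legitimate because all integrands are bounded.
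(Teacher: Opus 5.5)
Your proposal is correct and follows the same route as the paper: bound the discounted return uniformly, obtain $V^\pi=\mathcal{T}_{\mathsf{M},\pi}V^\pi$ by conditioning on the first transition via the Markov property, and get uniqueness from the $\gamma$-contraction on the Banach space $\mathcal{B}(S)$. You go somewhat beyond the paper's proof in two places: you prove measurability of $V^\pi$ in $s$ as a uniform limit of the truncations $V_n$, which the paper uses implicitly but does not check, and you give an alternative derivation of the fixed-point identity via $V_{n+1}=\mathcal{T}_{\mathsf{M},\pi}V_n$ with $V_0=0$.
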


\subsection{Why Bellman equals trace}
\label{sec:bellman-trace-explain}

Equation~\eqref{eq:bellman-additive} describes a \emph{one-step} continuation update:
given a continuation $V$ for the next state, $\mathcal{T}_{\mathsf{M},\pi}V$ is the resulting current-state value.
Infinite-horizon evaluation closes the loop by demanding consistency of the continuation with its own update:
$V=\mathcal{T}_{\mathsf{M},\pi}V$.
A traced symmetric monoidal category axiomatizes exactly this ``wire the output back to the input'' operation.
When the feedback variable is contractive (here via $\gamma\in(0,1)$ under $\|\cdot\|_\infty$),
the trace is realized concretely by a unique fixed point (Banach trace).

\begin{lemma}[Contraction of the Bellman backup]
\label{lem:bellman-contraction}
Under the assumptions of Lemma~\ref{lem:bellman-well-defined},
for all $V,W\in\mathcal{B}(S)$,
\[
\|\mathcal{T}_{\mathsf{M},\pi}V - \mathcal{T}_{\mathsf{M},\pi}W\|_\infty
\;\le\; \gamma\,\|V-W\|_\infty.
\]
Hence $\mathcal{T}_{\mathsf{M},\pi}$ is a $\gamma$-contraction on $\big(\mathcal{B}(S),\|\cdot\|_\infty\big)$.
\end{lemma}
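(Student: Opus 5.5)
The plan is to prove the estimate directly from the integral form \eqref{eq:bellman-additive}, cancelling the reward term before bounding anything. Fix $V,W\in\mathcal{B}(S)$ and $s\in S$. Lemma~\ref{lem:bellman-well-defined} guarantees that both $\mathcal{T}_{\mathsf{M},\pi}V$ and $\mathcal{T}_{\mathsf{M},\pi}W$ are well defined. Each integrand $\rho(r)+\gamma V(s')$ is bounded and measurable on $S\otimes R$, and $K^\pi(\cdot\mid s)$ is a probability measure. Hence both integrals are finite, and linearity of the integral gives
\[
(\mathcal{T}_{\mathsf{M},\pi}V)(s)-(\mathcal{T}_{\mathsf{M},\pi}W)(s)
=\gamma\int_{S\otimes R}\bigl(V(s')-W(s')\bigr)\,K^\pi(d(s',r)\mid s).
\]
The reward term $\rho(r)$ cancels exactly, so the reward bound $R_{\max}$ is needed only to make the two integrals finite.

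The second step bounds the remaining integral pointwise. The integrand satisfies $|V(s')-W(s')|\le\|V-W\|_\infty$ for every $(s',r)$, and $K^\pi(\cdot\mid s)$ has total mass $1$. Therefore the absolute value of the difference is at most $\gamma\|V-W\|_\infty$. Since this holds for every $s$, taking the supremum over $s\in S$ gives the stated inequality.

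An alternative route goes through Lemma~\ref{lem:bellman-monotone}. With $c:=\|V-W\|_\infty$ we have $W-c\le V\le W+c$. Monotonicity and the constant-shift identity then give $\mathcal{T}W-\gamma c\le\mathcal{T}V\le\mathcal{T}W+\gamma c$. That lemma in fact already states the Lipschitz bound. I would nonetheless give the direct computation, so that the result does not depend circularly on how Lemma~\ref{lem:bellman-monotone} was proved.

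The conclusion follows at once: since $\gamma\in(0,1)$ and $\mathcal{T}_{\mathsf{M},\pi}$ is a self-map of $\mathcal{B}(S)$, it is a $\gamma$-contraction on the Banach space of Lemma~\ref{lem:BS-banach}. The only point needing care is measurability. The function $(s',r)\mapsto V(s')-W(s')$ must be measurable on the product space, which holds because it is the composite of the measurable projection with $V-W$. The resulting function of $s$ must also lie in $\mathcal{B}(S)$, which is exactly the content of Lemma~\ref{lem:bellman-well-defined}. I expect no substantive obstacle beyond this.
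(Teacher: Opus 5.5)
Your proof is correct and takes the same approach as the paper. The paper proves this lemma by citing the Lipschitz part of Lemma~\ref{lem:bellman-monotone}, and that part is proved by exactly your direct computation: cancel $\rho(r)$, bound $\gamma\int|V(s')-W(s')|\,K^\pi(d(s',r)\mid s)$ by $\gamma\|V-W\|_\infty$ using total mass one, then take the supremum over $s$.
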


\begin{theorem}[Value iteration converges for policy evaluation]
\label{thm:policy-eval-vi}
Under Lemma~\ref{lem:bellman-well-defined}, $\mathcal{T}_{\mathsf{M},\pi}$ has a unique fixed point $V^\pi$.
Moreover, for any initial $V_0\in\mathcal{B}(S)$, the iterates $V_{k+1}:=\mathcal{T}_{\mathsf{M},\pi}V_k$ satisfy
\[
\|V_k - V^\pi\|_\infty \le \gamma^k \|V_0 - V^\pi\|_\infty,
\]
so the convergence is geometric with rate $\gamma$.
\end{theorem}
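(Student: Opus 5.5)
The plan is to apply Banach's fixed-point theorem directly. The two ingredients needed are already in place. Lemma~\ref{lem:BS-banach} shows that $(\mathcal{B}(S),\|\cdot\|_\infty)$ is a complete metric space. Lemma~\ref{lem:bellman-well-defined} shows that $\mathcal{T}_{\mathsf{M},\pi}$ is a self-map of $\mathcal{B}(S)$, and Lemma~\ref{lem:bellman-contraction} shows it is a $\gamma$-contraction with $\gamma\in(0,1)$. Banach's theorem then gives a unique fixed point in $\mathcal{B}(S)$. To justify the name $V^\pi$, I would invoke Proposition~\ref{prop:traj-fp}. It says the trajectory value lies in $\mathcal{B}(S)$ and is a fixed point of $\mathcal{T}_{\mathsf{M},\pi}$, so by uniqueness it coincides with the Banach fixed point. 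This identification only matters for notation, since the theorem is stated purely in operator terms.

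For the rate, I would argue by induction on $k$. The case $k=0$ is trivial. For the inductive step, use the fixed-point identity $V^\pi=\mathcal{T}_{\mathsf{M},\pi}V^\pi$ and write
\[
\|V_{k+1}-V^\pi\|_\infty=\|\mathcal{T}_{\mathsf{M},\pi}V_k-\mathcal{T}_{\mathsf{M},\pi}V^\pi\|_\infty\le\gamma\|V_k-V^\pi\|_\infty .
\]
Chaining this inequality gives $\|V_k-V^\pi\|_\infty\le\gamma^k\|V_0-V^\pi\|_\infty$. The right-hand side is finite because $V_0$ and $V^\pi$ both lie in $\mathcal{B}(S)$. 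Since $\gamma^k\to 0$, convergence is geometric. Optionally, I could add the a priori bound $\|V_0-V^\pi\|_\infty\le\|V_0-\mathcal{T}_{\mathsf{M},\pi}V_0\|_\infty/(1-\gamma)$, which follows from the triangle inequality and the contraction property, but the theorem does not require it.

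I expect no real obstacle. The only delicate point is checking that every iterate $V_k$ stays in $\mathcal{B}(S)$, so that the contraction estimate applies at each step. In particular, each $V_k$ must remain measurable and bounded. Measurability of $s\mapsto\int(\rho(r)+\gamma V(s'))\,K^\pi(d(s',r)\mid s)$ is exactly what Lemma~\ref{lem:bellman-well-defined} asserts, using the kernel measurability conditions and the standard Borel assumptions. Boundedness comes from the estimate $\|\mathcal{T}_{\mathsf{M},\pi}V\|_\infty\le R_{\max}+\gamma\|V\|_\infty$ in the same lemma. I will therefore cite that lemma rather than reprove it.
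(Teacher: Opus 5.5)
Your proposal is correct and matches the paper's proof: self-map plus $\gamma$-contraction on the complete space $\mathcal{B}(S)$, Banach's theorem for existence and uniqueness, and the inductive contraction estimate for the geometric rate. Your extra identification of the fixed point with the trajectory value via Proposition~\ref{prop:traj-fp} is harmless and not circular, because that proposition's uniqueness argument relies only on Lemmas~\ref{lem:bellman-contraction} and~\ref{lem:BS-banach}, not on this theorem.
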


\begin{theorem}[Bellman recursion as a guarded Banach trace]
\label{thm:bellman-trace}
Assume the conditions of Lemma~\ref{lem:bellman-well-defined}.
Define the feedback map
\[
f_\pi:\ I\times \mathcal{B}(S)\ \longrightarrow\ \mathcal{B}(S)\times \mathcal{B}(S),
\qquad
f_\pi(\star,V):=\big(\mathcal{T}_{\mathsf{M},\pi}V,\ \mathcal{T}_{\mathsf{M},\pi}V\big).
\]
The feedback component of $f_\pi$ is $\gamma$-contractive by Lemma~\ref{lem:bellman-contraction}, so $f_\pi$ is admissible for the guarded Banach trace of Definition~\ref{def:banach-trace}. Its trace is well-defined and satisfies
$
\Tr^{\mathcal{B}(S)}_{I,\mathcal{B}(S)}(f_\pi)(\star)=V^\pi,
$
where $V^\pi$ is the unique Bellman fixed point (equivalently, the discounted return in
Proposition~\ref{prop:traj-fp}).
\end{theorem}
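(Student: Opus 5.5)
The plan is to unfold Definition~\ref{def:banach-trace} with $X=I$, $Y=Z=\mathcal{B}(S)$, and check the three ingredients in turn: the spaces are complete, the feedback component is uniformly contractive, and the resulting trace output is the Bellman fixed point.

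\textbf{Step 1 (completeness).} By Lemma~\ref{lem:BS-banach}, $\mathcal{B}(S)$ with $d(V,W)=\|V-W\|_\infty$ is a complete metric space. The unit $I=\{\star\}$ with the trivial metric is trivially complete. Hence the hypotheses on $X,Y,Z$ in Definition~\ref{def:banach-trace} hold.

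\textbf{Step 2 (guardedness).} The feedback component is $f_Z(\star,V)=\mathcal{T}_{\mathsf{M},\pi}V$. It lands in $\mathcal{B}(S)$ by Lemma~\ref{lem:bellman-well-defined}. Lemma~\ref{lem:bellman-contraction} gives $\|f_Z(\star,V)-f_Z(\star,W)\|_\infty\le\gamma\|V-W\|_\infty$ with $c=\gamma\in[0,1)$. Since $X$ is a singleton, the uniformity in $x$ required by the definition is automatic. So $f_\pi$ is $\mathcal{B}(S)$-guarded.

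\textbf{Step 3 (identifying the trace).} Banach's theorem, packaged as Theorem~\ref{thm:policy-eval-vi}, yields a unique $z_\star\in\mathcal{B}(S)$ with $z_\star=\mathcal{T}_{\mathsf{M},\pi}z_\star$. By uniqueness this is $V^\pi$. Then
\[
\Tr^{\mathcal{B}(S)}_{I,\mathcal{B}(S)}(f_\pi)(\star)=f_Y(\star,z_\star)=\mathcal{T}_{\mathsf{M},\pi}z_\star=z_\star=V^\pi .
\]
The key observation is that the output leg equals the feedback leg, so evaluating at the fixed point returns the fixed point itself. Finally, Proposition~\ref{prop:traj-fp} says the trajectory value is also the unique fixed point of $\mathcal{T}_{\mathsf{M},\pi}$ on $\mathcal{B}(S)$. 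Uniqueness therefore identifies the trace with the discounted return.

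\textbf{Main obstacle.} There is no genuine analytic difficulty here. The only point needing care is bookkeeping: the fixed point furnished by Definition~\ref{def:banach-trace} and the ones in Theorem~\ref{thm:policy-eval-vi} and Proposition~\ref{prop:traj-fp} must all live in the same space $\mathcal{B}(S)$. This holds because boundedness and measurability are preserved (Lemma~\ref{lem:bellman-well-defined}), so the uniqueness statements can legitimately be equated.
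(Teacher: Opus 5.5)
Your proposal is correct and follows the paper's proof essentially step for step: guardedness comes from Lemma~\ref{lem:bellman-contraction}, the unique feedback fixed point is identified with $V^\pi$ via Theorem~\ref{thm:policy-eval-vi}, the trace output is computed as $f_Y(\star,z_\star)=\mathcal{T}_{\mathsf{M},\pi}z_\star=z_\star=V^\pi$, and Proposition~\ref{prop:traj-fp} gives the trajectory interpretation. Your explicit completeness check for $I$ and $\mathcal{B}(S)$ spells out a hypothesis the paper uses implicitly.
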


This theorem is a representation theorem for standard discounted policy evaluation. It does not assert that arbitrary stochastic components or arbitrary Bellman transformers carry a total trace. It says that the usual Bellman fixed point is exactly the result of closing the continuation wire of the one-step Bellman transformer, provided the feedback loop is guarded by discounting.

By Definition~\ref{def:banach-trace}, $\Tr^{\mathcal{B}(S)}_{I,\mathcal{B}(S)}(f_\pi)(\star)$
solves the unique fixed point of the feedback map
$V\mapsto \pi_Z f_\pi(\star,V)=\mathcal{T}_{\mathsf{M},\pi}V$,
and outputs the corresponding $Y$-component, which equals $\mathcal{T}_{\mathsf{M},\pi}V$.
Thus the traced value is exactly the Bellman fixed point.

\section{Compositionality and Contextual Equivalence}
\label{sec:compositionality}

Section~\ref{sec:oddc} established the \emph{value-layer} semantics of an open one-step component:
after fixing a policy, a kernel induces a \emph{Bellman transformer}
$\mathcal{T}:\mathcal{B}(\text{out-state})\to\mathcal{B}(\text{in-state})$, and closing feedback corresponds to a trace.
This section proves the next pillar: \emph{compositionality}.
We first show that wiring open components in string diagrams (series/parallel) is reflected \emph{exactly}
by structural operations on their induced transformers, with the correct discount bookkeeping.
We then formalize \emph{contextual equivalence}: if two open components are close (in an operator metric),
they remain close after being plugged into any admissible well-formed circuit context generated by $\circ,\otimes,\Tr$.
Crucially, we make the stability constants \emph{explicitly computable} from the context structure, rather than existential.

\subsection{String-diagram wiring: series and parallel}
\label{sec:wiring}

In $\Stoch$, the monoidal product is cartesian product.
Thus ``parallel'' wiring is modeled by a product kernel, while ``series'' wiring is modeled by kernel composition
through an interface object.
Throughout this subsection, \emph{series} means temporal serial wiring: the first component advances one micro-step and the second component advances a subsequent micro-step.  Consequently, rewards or errors generated by the second component appear one discount factor deeper.  If two engineering modules are merely internal subroutines of the same environment step, they should first be fused into a single primitive one-step transformer rather than assigned an additional factor of $\gamma$ by the series rule.
When composing rewards, the stochastic layer naturally aggregates one-step signals as a pair
$(r_1,r_2)\in R_1\otimes R_2 \cong R_1\times R_2$, which is then mapped to a scalar at the value layer
(e.g., by an additive scalarization $\rho_\otimes(r_1,r_2)=\rho_1(r_1)+\rho_2(r_2)$).

\begin{figure}[t]
\centering
\resizebox{\linewidth}{!}{%
\begin{tikzpicture}[x=1.25cm,y=1cm,baseline=(current bounding box.center)]
\node[draw,rounded corners,minimum width=1.7cm,minimum height=0.8cm] (M1) at (0,0) {$\mathsf{M}_1$};
\node[draw,rounded corners,minimum width=1.7cm,minimum height=0.8cm] (M2) at (2.6,0) {$\mathsf{M}_2$};
\draw[-{Latex}] (-1.1,0.25) -- (M1.west |- 0,0.25);
\draw[-{Latex}] (M1.east |- 0,0.25) -- (M2.west |- 0,0.25);
\draw[-{Latex}] (M2.east |- 0,0.25) -- (3.7,0.25);
\node at (-1.05,0.55) {\scriptsize $S\otimes A$};
\node at (3.2,0.55) {\scriptsize $S\otimes (R_1\otimes R_2)$};

\draw[-{Latex}] (-1.1,-0.25) -- (M1.west |- 0,-0.25);
\draw[-{Latex}] (M1.east |- 0,-0.25) -- (M2.west |- 0,-0.25);
\draw[-{Latex}] (M2.east |- 0,-0.25) -- (3.7,-0.25);

\node at (1.3,-0.95) {\scriptsize (a) Series: wiring by composition};

\begin{scope}[xshift=7cm] 
\node[draw,rounded corners,minimum width=1.7cm,minimum height=0.8cm] (P1) at (0,0) {$\mathsf{M}_1$};
\node[draw,rounded corners,minimum width=1.7cm,minimum height=0.8cm] (P2) at (2.6,0) {$\mathsf{M}_2$};

\draw[-{Latex}] (-1.1,0.25) -- (P1.west |- 0,0.25);
\draw[-{Latex}] (P1.east |- 0,0.25) -- (3.7,0.25);
\node at (-1.05,0.55) {\scriptsize $S_1\otimes A_1$};
\node at (3.7,0.55) {\scriptsize $S_1\otimes R_1$};

\draw[-{Latex}] (-1.1,-0.25) -- (P2.west |- 0,-0.25);
\draw[-{Latex}] (P2.east |- 0,-0.25) -- (3.7,-0.25);
\node at (-1.05,-0.55) {\scriptsize $S_2\otimes A_2$};
\node at (3.7,-0.55) {\scriptsize $S_2\otimes R_2$};

\node at (1.3,-0.9) {\scriptsize (b) Parallel: wiring by tensor product};

\end{scope}

\end{tikzpicture}%
}
\caption{Series and parallel wiring. Series rewards are scalarized at the value layer.}
\label{fig:string-diagrams}
\vspace{-0.2cm}
\end{figure}

\subsection{Bellman transformers with explicit types}
\label{sec:typed-bellman}

A component with kernel $K:X\otimes A\to Y\otimes R$, scalarization $\rho:R\to\R$, and discount $\gamma\in(0,1)$
induces (after fixing a policy $\pi_X:X\to A$) a closed-loop kernel
$K^{\pi_X}:=K\circ\langle \id_X,\pi_X\rangle:X\to Y\otimes R$.
The induced Bellman \emph{transformer} is the typed map
\[
\mathcal{T}_{K,\pi_X}:\mathcal{B}(Y)\to \mathcal{B}(X),
\qquad
(\mathcal{T}_{K,\pi_X}V)(x):=\int_{Y\otimes R}\big(\rho(r)+\gamma V(y)\big)\,K^{\pi_X}(d(y,r)\mid x).
\]
The key point is the \emph{continuation direction}: $V$ lives on the \emph{output state} $Y$ and is pulled back to $X$.
This is exactly why series wiring becomes ordinary function composition.

\subsection{Compositional Bellman semantics}
\label{sec:comp-bellman}

Consider two temporal micro-step components connected through an interface $U$:
\[
K_1:S\otimes A_1\to U\otimes R_1,
\qquad
K_2:U\otimes A_2\to S\otimes R_2,
\]
with scalarizations $\rho_1:R_1\to\R$ and $\rho_2:R_2\to\R$.
Let $\pi_S:S\to \Delta(A_1)$ and $\pi_U:U\to \Delta(A_2)$ be policies for the two micro-steps.
The two action interfaces are allowed to differ; if a model supplies a deterministic interface controller or a transported policy, it is included as part of $\pi_U$.

Define the micro-step transformers
\[
\mathcal{T}_1:=\mathcal{T}_{K_1,\pi_S}:\mathcal{B}(U)\to\mathcal{B}(S),
\qquad
\mathcal{T}_2:=\mathcal{T}_{K_2,\pi_U}:\mathcal{B}(S)\to\mathcal{B}(U).
\]
Their series wiring yields a macro-step transformer on $\mathcal{B}(S)$:
$
\mathcal{T}_{\mathrm{series}} := \mathcal{T}_1\circ \mathcal{T}_2:\mathcal{B}(S)\to\mathcal{B}(S).
$
Unfolding the definitions makes the discount bookkeeping explicit:
the second-stage reward is discounted once more, and the continuation is discounted twice.

\begin{theorem}[Compositionality under series wiring]
\label{thm:compositionality-series}
Assume bounded scalar rewards $|\rho_i(r_i)|\le R_{\max,i}$ and common discount $\gamma\in(0,1)$.
For series wiring through $U$ as above, for every $V\in\mathcal{B}(S)$ and $s\in S$,
\[
(\mathcal{T}_{\mathrm{series}}V)(s)
=
\E\Big[\rho_1(r_1) + \gamma\,\rho_2(r_2) + \gamma^2 V(s_2)\ \Big|\ s_0=s\Big],
\]
where $(u_1,r_1)\sim K_1^{\pi_S}(\cdot\mid s_0)$ and $(s_2,r_2)\sim K_2^{\pi_U}(\cdot\mid u_1)$.
Equivalently,
$
\mathcal{T}_{\mathrm{series}}=\mathcal{T}_{K_1,\pi_S}\circ \mathcal{T}_{K_2,\pi_U}.
$
Moreover, $\mathcal{T}_{\mathrm{series}}$ is a $\gamma^2$-contraction on $\big(\mathcal{B}(S),\|\cdot\|_\infty\big)$.
\end{theorem}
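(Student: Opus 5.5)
The plan is to unfold both typed transformers, substitute one into the other, and then read off the contraction constant from the affine structure. First, I apply the definition of $\mathcal{T}_{K_2,\pi_U}$ to $V\in\mathcal{B}(S)$. Boundedness of $\rho_2$ and the argument of Lemma~\ref{lem:bellman-well-defined} (with $X=U$, $Y=S$) give
\[
W(u):=(\mathcal{T}_2V)(u)=\int\big(\rho_2(r_2)+\gamma V(s_2)\big)\,K_2^{\pi_U}(d(s_2,r_2)\mid u),
\]
with $W\in\mathcal{B}(U)$ and $\|W\|_\infty\le R_{\max,2}+\gamma\|V\|_\infty$. Measurability of $W$ holds because $u\mapsto\int h\,dK_2^{\pi_U}(\cdot\mid u)$ is measurable for bounded measurable $h$; this is the standard kernel fact on standard Borel spaces. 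So $\mathcal{T}_1 W$ is well-defined in $\mathcal{B}(S)$.

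Next, I expand $(\mathcal{T}_1W)(s)=\int\big(\rho_1(r_1)+\gamma W(u_1)\big)\,K_1^{\pi_S}(d(u_1,r_1)\mid s)$ and substitute the integral formula for $W(u_1)$. Linearity of the integral and integrability of all terms (everything is bounded, and the kernels are probability measures) let me pull the constant $\gamma$ inside. Fubini/Tonelli for the Chapman--Kolmogorov composite of the two kernels then rewrites the iterated integral as a single expectation over the joint law of $(u_1,r_1,s_2,r_2)$. In that joint law, $(u_1,r_1)\sim K_1^{\pi_S}(\cdot\mid s)$ and, conditionally on it, $(s_2,r_2)\sim K_2^{\pi_U}(\cdot\mid u_1)$. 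The result is $\E[\rho_1(r_1)+\gamma\rho_2(r_2)+\gamma^2V(s_2)\mid s_0=s]$, which is the displayed identity. The equality $\mathcal{T}_{\mathrm{series}}=\mathcal{T}_1\circ\mathcal{T}_2$ is the definition, and the computation above shows the two descriptions coincide pointwise.

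For the contraction, I first establish the typed analogue of Lemma~\ref{lem:bellman-contraction}: for each $i$, $\|\mathcal{T}_iV-\mathcal{T}_iV'\|_\infty\le\gamma\|V-V'\|_\infty$. The reward terms cancel, and the remaining term is $\gamma$ times the integral of $V-V'$ against a probability measure, so it is bounded by $\gamma\|V-V'\|_\infty$. Composing gives $\|\mathcal{T}_1\mathcal{T}_2V-\mathcal{T}_1\mathcal{T}_2V'\|_\infty\le\gamma\|\mathcal{T}_2V-\mathcal{T}_2V'\|_\infty\le\gamma^2\|V-V'\|_\infty$. The same bound can also be read directly from the unfolded formula, since $\gamma^2\,\E[(V-V')(s_2)]$ is at most $\gamma^2\|V-V'\|_\infty$ in absolute value.

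The main obstacle is the measure-theoretic justification of the substitution step. It requires building the joint kernel $s\mapsto\Delta(U\times R_1\times S\times R_2)$ with the stated conditional structure and showing that iterated integration against it equals the expectation. I would handle this by forming $\langle\id,K_2^{\pi_U}\circ\mathrm{proj}_U\rangle\circ K_1^{\pi_S}$ in $\Stoch$ and invoking Fubini for kernel products (Ionescu-Tulcea for two steps), using the standing standard Borel assumption. Everything else is routine algebra.
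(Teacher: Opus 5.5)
Your proposal is correct and follows essentially the same route as the paper: unfold $\mathcal{T}_2V$, substitute it into $\mathcal{T}_1$, recognize the iterated integral as the expectation over $(u_1,r_1)\sim K_1^{\pi_S}(\cdot\mid s)$ followed by $(s_2,r_2)\sim K_2^{\pi_U}(\cdot\mid u_1)$, and get the $\gamma^2$ modulus by chaining the two one-step $\gamma$-Lipschitz bounds. The measurability of $W$ and the joint-kernel/Fubini justification you add are details the paper leaves implicit.
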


For parallel wiring, the stochastic layer forms the product kernel $K_\otimes:=K_1\otimes K_2$ in $\Stoch$
together with the product policy $\pi_\otimes:=\pi_1\otimes \pi_2$.
The resulting Bellman operator acts on $\mathcal{B}(S_1\otimes S_2)$ and uses the additive scalarization
$\rho_\otimes(r_1,r_2):=\rho_1(r_1)+\rho_2(r_2)$.
To connect this to the values of the two subsystems, we use the standard separable embedding below.

\begin{definition}[Separable embedding]
\label{def:separable-embed}
For $V_1\in\mathcal{B}(S_1)$ and $V_2\in\mathcal{B}(S_2)$, define
\[
(V_1\oplus V_2)(s_1,s_2)\;:=\;V_1(s_1)+V_2(s_2),
\qquad (s_1,s_2)\in S_1\otimes S_2.
\]
\end{definition}

\begin{theorem}[Additive factorization under parallel wiring]
\label{thm:compositionality-parallel}
Assume the two subsystems are independent in the sense that
$K_\otimes = K_1\otimes K_2$ and $\pi_\otimes=\pi_1\otimes \pi_2$, and use the additive scalarization
$\rho_\otimes(r_1,r_2)=\rho_1(r_1)+\rho_2(r_2)$ with a common discount $\gamma\in(0,1)$.
Let $\mathcal{T}_\otimes$ denote the Bellman operator induced by $(K_\otimes,\pi_\otimes)$ on $\mathcal{B}(S_1\otimes S_2)$,
and let $\mathcal{T}_i$ be the Bellman operators induced by $(K_i,\pi_i)$ on $\mathcal{B}(S_i)$.

Then the separable subspace is invariant:
\[
\mathcal{T}_\otimes(V_1\oplus V_2) \;=\; (\mathcal{T}_1 V_1)\oplus(\mathcal{T}_2 V_2)
\quad\text{for all }V_1,V_2.
\]
Consequently, if $V^{\pi_i}$ is the unique fixed point of $\mathcal{T}_i$ for $i\in\{1,2\}$,
then the unique fixed point of $\mathcal{T}_\otimes$ satisfies
$
V^{\pi_\otimes}(s_1,s_2)=V^{\pi_1}(s_1)+V^{\pi_2}(s_2).
$
\end{theorem}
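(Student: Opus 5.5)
The plan is a direct computation on separable functions, followed by uniqueness of fixed points.

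\textbf{Step 1: unfold the product closed-loop kernel.} Since $K_\otimes=K_1\otimes K_2$ and $\pi_\otimes=\pi_1\otimes\pi_2$ are product kernels in $\Stoch$, the closed-loop kernel $K_\otimes^{\pi_\otimes}=K_\otimes\circ\langle\id,\pi_\otimes\rangle$ factorizes as a product measure:
\[
K_\otimes^{\pi_\otimes}(\cdot\mid s_1,s_2)=K_1^{\pi_1}(\cdot\mid s_1)\otimes K_2^{\pi_2}(\cdot\mid s_2).
\]
I would check this on measurable rectangles $C_1\times C_2\subseteq (S_1\otimes R_1)\times(S_2\otimes R_2)$. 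The Chapman--Kolmogorov integral over $(a_1,a_2)\sim\pi_1(\cdot\mid s_1)\otimes\pi_2(\cdot\mid s_2)$ splits by Fubini into the product $K_1^{\pi_1}(C_1\mid s_1)\,K_2^{\pi_2}(C_2\mid s_2)$. Since rectangles form a $\pi$-system generating the product $\sigma$-algebra, the two measures agree everywhere. This is the one step needing care: the wire reordering $S_1\otimes S_2\otimes A_1\otimes A_2\cong (S_1\otimes A_1)\otimes(S_2\otimes A_2)$ and the copy map must be handled by the symmetric monoidal structure. The standard Borel assumption guarantees measurability of all sections.

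\textbf{Step 2: split the integrand.} For $V_1\in\mathcal{B}(S_1)$ and $V_2\in\mathcal{B}(S_2)$, the integrand defining $\mathcal{T}_\otimes(V_1\oplus V_2)$ at $(s_1,s_2)$ is
\[
\rho_1(r_1)+\rho_2(r_2)+\gamma V_1(s_1')+\gamma V_2(s_2').
\]
This is a sum of a bounded function of $(s_1',r_1)$ alone and a bounded function of $(s_2',r_2)$ alone. Integrating each summand against the product measure, the other factor integrates to its total mass $1$. The result is $(\mathcal{T}_1V_1)(s_1)+(\mathcal{T}_2V_2)(s_2)$, which proves the invariance identity. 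Integrability follows from the bounded rewards and Lemma~\ref{lem:bellman-well-defined}.

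\textbf{Step 3: identify the fixed point.} Take $V^{\pi_i}=\mathcal{T}_iV^{\pi_i}$, which exists and is unique by Theorem~\ref{thm:policy-eval-vi}. Step 2 gives
\[
\mathcal{T}_\otimes(V^{\pi_1}\oplus V^{\pi_2})=V^{\pi_1}\oplus V^{\pi_2}.
\]
The function $V^{\pi_1}\oplus V^{\pi_2}$ is bounded and measurable, so it lies in $\mathcal{B}(S_1\otimes S_2)$. The scalarized reward is bounded by $R_{\max,1}+R_{\max,2}$, so Lemma~\ref{lem:bellman-contraction} makes $\mathcal{T}_\otimes$ a $\gamma$-contraction with a unique fixed point. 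Hence that fixed point is $V^{\pi_1}\oplus V^{\pi_2}$, and by Proposition~\ref{prop:traj-fp} it coincides with the trajectory value $V^{\pi_\otimes}$.
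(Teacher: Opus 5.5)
Your proposal is correct and follows essentially the same route as the paper: split the additive integrand against the product closed-loop kernel, then use uniqueness of the fixed point of the $\gamma$-contraction $\mathcal{T}_\otimes$. Your Step 1, which checks on rectangles that $K_\otimes^{\pi_\otimes}=K_1^{\pi_1}\otimes K_2^{\pi_2}$, makes explicit a factorization that the paper writes down without justification.
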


Theorems~\ref{thm:compositionality-series}--\ref{thm:compositionality-parallel} state that our denotational semantics
is \emph{structural}: series wiring corresponds to ordinary composition of continuation transformers (with explicit discount depth),
while parallel wiring preserves an additive/separable value structure under independence.
This structurality is the foundation for contextual reasoning: once the semantic constructors are Lipschitz and every feedback node carries a guardedness certificate, approximate equivalences propagate through admitted linear circuit contexts. Here ``linear'' means that the hole is used exactly once; contexts that duplicate, merge, or nonlinearly inspect the hole require separate sensitivity bounds.

\subsection{Contextual equivalence as congruence}
\label{sec:contextual}

To avoid the trivial $\sup_V=\infty$ issue, discrepancies are always measured on typed invariant balls.
For a state space $X$ and radius $M>0$, write
\[
\mathbb{B}_M(X):=\{V\in\mathcal{B}(X):\|V\|_\infty\le M\}.
\]
For two typed transformers
$T,T':\mathbb{B}_{M_Y}(Y)\to\mathbb{B}_{M_X}(X)$ define
\begin{equation}
\label{eq:typed-operator-metric}
d_\infty^{X\leftarrow Y}(T,T')
:=\sup_{V\in\mathbb{B}_{M_Y}(Y)}\|TV-T'V\|_{\infty,X}.
\end{equation}
When $X=Y=S$ and $M_X=M_Y=M$, this reduces to the closed-operator metric $d_\infty^{(S,M)}$.
All Bellman transformers below are assumed to come with such invariant balls; by Lemma~\ref{lem:bellman-well-defined}, this is automatic after choosing a radius at least as large as the corresponding value bound.

\begin{lemma}[Fixed-point stability for contractions]
\label{lem:fp-stability}
Let $T,T':\mathbb{B}_M(S)\to\mathbb{B}_M(S)$ be $\kappa$-contractions in $\|\cdot\|_\infty$ for some $\kappa\in(0,1)$,
with unique fixed points $V^\star$ and ${V^\star}'$.
Then
\[
\|V^\star-{V^\star}'\|_\infty \le \frac{1}{1-\kappa}\, d_\infty^{(S,M)}(T,T').
\]
\end{lemma}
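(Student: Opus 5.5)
The plan is the standard fixed-point perturbation argument: write both fixed points as fixed points, insert a cross term, and absorb the contractive part. First, existence and uniqueness. The ball $\mathbb{B}_M(S)$ is a closed subset of the Banach space $\mathcal{B}(S)$ (Lemma~\ref{lem:BS-banach}), so it is complete. Both $T$ and $T'$ are self-maps of it, so Banach's theorem gives the unique fixed points $V^\star$ and ${V^\star}'$, both lying in $\mathbb{B}_M(S)$. This membership matters, because $d_\infty^{(S,M)}$ in~\eqref{eq:typed-operator-metric} only takes a supremum over $\mathbb{B}_M(S)$.

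Next, the triangle inequality. Using $V^\star=TV^\star$ and ${V^\star}'=T'{V^\star}'$, split
\[
\|V^\star-{V^\star}'\|_\infty
\le \|TV^\star-T{V^\star}'\|_\infty+\|T{V^\star}'-T'{V^\star}'\|_\infty .
\]
The first term is at most $\kappa\|V^\star-{V^\star}'\|_\infty$ by the contraction property of $T$. Since ${V^\star}'\in\mathbb{B}_M(S)$, the second term is at most $d_\infty^{(S,M)}(T,T')$.

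Finally, rearrange to get $(1-\kappa)\|V^\star-{V^\star}'\|_\infty\le d_\infty^{(S,M)}(T,T')$ and divide by $1-\kappa>0$. This is legitimate because every quantity involved is finite: elements of the ball have norm at most $M$, so the difference of fixed points is bounded by $2M$ and the operator metric is at most $2M$.

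No step is genuinely hard. The only point needing care is the membership of ${V^\star}'$ in the invariant ball, which is exactly what lets the supremum in $d_\infty^{(S,M)}$ control the cross term. Only the contractivity of $T$ is used; by symmetry, the contractivity of $T'$ would serve equally well, using the cross term $TV^\star-T'V^\star$ instead.
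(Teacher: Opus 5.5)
Your proposal is correct and follows essentially the same argument as the paper: insert the cross term $T{V^\star}'$, bound the first piece by $\kappa\|V^\star-{V^\star}'\|_\infty$ and the second by $d_\infty^{(S,M)}(T,T')$, then rearrange. Your additional checks are sound and left implicit in the paper. These are that ${V^\star}'$ lies in $\mathbb{B}_M(S)$, so the operator metric controls the cross term, and that all quantities are finite, so the rearrangement is legitimate.
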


\begin{definition}[Certified admitted linear guarded context]
\label{def:admitted-linear-context}
Fix a hole type
$
\mathbb{B}_{M_Y}(Y)\longrightarrow \mathbb{B}_{M_X}(X)
$
in the continuation direction. A \emph{linear one-hole context} $\mathcal C[\cdot]$ is a well-typed expression generated by the grammar
\[
\mathcal C
::=
[\cdot]
\mid T_0\circ \mathcal C
\mid \mathcal C\circ T_0
\mid \mathcal C\otimes T_0
\mid T_0\otimes \mathcal C
\mid \Tr^Z(\mathcal D),
\]
where $T_0$ ranges over fixed side transformers and $\mathcal D$ is itself a one-hole pre-trace expression of product type
$X\times Z\to Y\times Z$ on the relevant invariant balls.

The word \emph{linear} means syntactic single-use of the hole: no diagonal copying, merging of two copies, deletion followed by reintroduction, or nonlinear inspection of the hole is allowed. Tensor products use the product sup metric. If a later scalarization adds, maximizes, thresholds, or otherwise post-processes tensor outputs, that post-processing must be represented as an additional fixed Lipschitz transformer $T_0$ in the grammar.

A context is \emph{admitted} if every subexpression carries a certificate consisting of:
\begin{enumerate}
\item source and target invariant balls;
\item a Lipschitz constant for every fixed side transformer;
\item for each trace node, uniform constants
$\alpha_Z<1,\eta_Z,\beta_Z,a_X$ satisfying the inequalities in Lemma~\ref{lem:context-recursion};
\item a structurally computed perturbation gain $L(\mathcal C)<\infty$;
\item for a closed context, a contraction modulus $\kappa(\mathcal C)<1$ on the final invariant ball.
\end{enumerate}
The same certificate must be valid for both plugged components when two circuits are compared.
\end{definition}

\begin{definition}[Context gain and context contraction modulus]
\label{def:context-gain}
For a certified admitted context $\mathcal C[\cdot]$, the \emph{context gain} $L(\mathcal C)$ is the finite perturbation-amplification constant generated by the structural rules of Lemma~\ref{lem:context-recursion}. Thus, for all compatible components $\mathsf M,\mathsf N$ with the same hole type,
\[
d_\infty^{(S,M)}\!\big(\mathcal{T}_{\mathcal{C}[\mathsf M]},\mathcal{T}_{\mathcal{C}[\mathsf N]}\big)
\le
L(\mathcal C)\,
d_\infty^{X\leftarrow Y}(\mathcal{T}_{\mathsf M},\mathcal{T}_{\mathsf N}).
\]
We write $\kappa(\mathcal C)$ for a certified contraction modulus of the induced closed operator on its invariant ball.
Thus $L(\mathcal C)$ measures perturbation amplification, whereas $\kappa(\mathcal C)$ measures closed-loop feedback sensitivity.
\end{definition}

\begin{lemma}[Structural recursion for $L(\mathcal C)$ and $\kappa(\mathcal C)$]
\label{lem:context-recursion}
Assume every primitive transformer in the context maps its typed invariant ball to the next typed invariant ball and has a known Lipschitz constant. For each trace subexpression, write the pre-trace feedback map as
\[
F:X\times Z\to Y\times Z,
\qquad F(x,z)=(F_Y(x,z),F_Z(x,z)).
\]
Assume that, on the relevant balls,
\begin{align*}
 d_Z(F_Z(x,z),F_Z(x,z'))&\le \alpha_Z d_Z(z,z'), &&\alpha_Z<1,\\
 d_Z(F_Z(x,z),F_Z(x',z))&\le \eta_Z d_X(x,x'),\\
 d_Y(F_Y(x,z),F_Y(x,z'))&\le \beta_Z d_Z(z,z'),\\
 d_Y(F_Y(x,z),F_Y(x',z))&\le a_X d_X(x,x').
\end{align*}
Then the traced map $\Tr^Z(F):X\to Y$ is Lipschitz with
$
\operatorname{Lip}(\Tr^Z(F))
\le
a_X+\frac{\beta_Z\eta_Z}{1-\alpha_Z}.
$
A traced closed-loop subexpression is admitted only when the resulting external modulus is strictly smaller than one.

The perturbation gain is certified by the following structural rules:
\begin{itemize}
\item \textbf{Hole.} $L([\cdot])=1$.
\item \textbf{Left composition.}
$
L(T_0\circ \mathcal C)\le \operatorname{Lip}(T_0)L(\mathcal C).
$
\item \textbf{Right composition.}
$
L(\mathcal C\circ T_0)\le L(\mathcal C),
$
provided $T_0$ maps the source ball into the discrepancy ball on which the hole-side transformer is compared.
\item \textbf{Tensor with a fixed side context.}
$
L(\mathcal C\otimes T_0)\le L(\mathcal C),
\qquad
L(T_0\otimes \mathcal C)\le L(\mathcal C),
$
under the product sup metric, provided the side transformer $T_0$ is identical in the two compared circuits and the tensor constructor does not copy the hole.
\item \textbf{Guarded trace.}
Suppose the two pre-trace feedback maps have product-metric discrepancy at most $\delta$ and share the same certified constants $\alpha_Z,\beta_Z$ at that trace node. Then tracing amplifies the discrepancy by at most
$
1+\frac{\beta_Z}{1-\alpha_Z}.
$
Consequently, if the pre-trace subcontext has gain $L(\mathcal D)$, then
$
L(\Tr^Z(\mathcal D))
\le
\left(1+\frac{\beta_Z}{1-\alpha_Z}\right)L(\mathcal D).
$
\end{itemize}
The closed-context modulus $\kappa(\mathcal C)$ is obtained by the same structural certificate: composition multiplies Lipschitz constants, tensoring takes the maximum under the product sup metric, and each closed trace node is admitted only when its certified external modulus is strictly smaller than one.
\end{lemma}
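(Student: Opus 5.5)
The lemma has two parts: a Lipschitz bound for a single guarded trace, and a set of structural rules for the gain $L(\mathcal C)$. I would prove the trace bound first, since the guarded-trace rule reuses its argument. The gain rules are then checked one constructor at a time and assembled by induction on the grammar of Definition~\ref{def:admitted-linear-context}.

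\textbf{Trace Lipschitz bound.} Fix $x,x'\in X$ and let $z_x,z_{x'}$ be the unique fixed points supplied by Definition~\ref{def:banach-trace}. These exist because $F_Z(x,\cdot)$ is an $\alpha_Z$-contraction of the relevant complete ball.
\begin{itemize}
\item Insert the intermediate point $F_Z(x',z_x)$ and apply the triangle inequality:
$d_Z(z_x,z_{x'})\le d_Z(F_Z(x,z_x),F_Z(x',z_x))+d_Z(F_Z(x',z_x),F_Z(x',z_{x'}))\le \eta_Z d_X(x,x')+\alpha_Z d_Z(z_x,z_{x'})$.
\item Rearranging gives $d_Z(z_x,z_{x'})\le \eta_Z d_X(x,x')/(1-\alpha_Z)$.
\item The same splitting on the output, $d_Y(F_Y(x,z_x),F_Y(x',z_{x'}))\le a_X d_X(x,x')+\beta_Z d_Z(z_x,z_{x'})$, yields $a_X+\beta_Z\eta_Z/(1-\alpha_Z)$.
\end{itemize}

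\textbf{Guarded-trace rule.} Take two pre-trace maps $F,F'$ that share $\alpha_Z$ and $\beta_Z$ and satisfy $\sup d(F(x,z),F'(x,z))\le\delta$ in the product sup metric.
\begin{itemize}
\item Compare the fixed points $z_x$ of $F$ and $z'_x$ of $F'$ exactly as in Lemma~\ref{lem:fp-stability}: $d_Z(z_x,z'_x)\le d_Z(F_Z(x,z_x),F'_Z(x,z_x))+d_Z(F'_Z(x,z_x),F'_Z(x,z'_x))\le \delta+\alpha_Z d_Z(z_x,z'_x)$.
\item This gives $d_Z(z_x,z'_x)\le\delta/(1-\alpha_Z)$.
\item The output discrepancy is then at most $\delta+\beta_Z\delta/(1-\alpha_Z)$. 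This bound uses the Lipschitz constant $\beta_Z$ of $F'_Y$, which is why the constants must be shared.
\end{itemize}
Taking the sup over $x$ in the source ball gives the amplification factor $1+\beta_Z/(1-\alpha_Z)$. Substituting $\delta\le L(\mathcal D)\,d_\infty^{X\leftarrow Y}(\mathcal T_{\mathsf M},\mathcal T_{\mathsf N})$ gives the rule for $L(\Tr^Z(\mathcal D))$.

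\textbf{Remaining rules.} Each is a one-line sup estimate.
\begin{itemize}
\item \emph{Left composition:} $\|T_0\mathcal C[\mathsf M]V-T_0\mathcal C[\mathsf N]V\|\le \operatorname{Lip}(T_0)\|\mathcal C[\mathsf M]V-\mathcal C[\mathsf N]V\|$.
\item \emph{Right composition:} the sup over $T_0V$ is bounded by the sup over the discrepancy ball, because $T_0$ maps into that ball.
\item \emph{Tensor:} the $T_0$ coordinate contributes zero discrepancy, so the product sup metric returns the $\mathcal C$ coordinate's discrepancy.
\item \emph{Hole:} $L([\cdot])=1$ is trivial.
\end{itemize}
Induction over the grammar then yields the bound in Definition~\ref{def:context-gain}. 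Linearity is used only to guarantee that each constructor sees the hole in exactly one argument, so no discrepancies are added or squared.

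\textbf{Closed-context modulus.} The statement for $\kappa(\mathcal C)$ follows by the same induction applied to Lipschitz constants rather than discrepancies:
\begin{itemize}
\item composition multiplies Lipschitz constants;
\item tensoring takes the maximum under the product sup metric;
\item trace nodes use the first part of the lemma.
\end{itemize}

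\textbf{Main obstacle.} The subtle step is typing: ensuring that all fixed points and intermediate arguments stay inside the certified invariant balls. Without this, the contraction constants and the discrepancy sup are not applicable. I would handle it by requiring each $F_Z(x,\cdot)$ to map the closed $Z$-ball into itself, so that the ball is complete and Banach's theorem applies within it. I would also state the discrepancy $\delta$ as a sup over exactly those balls.
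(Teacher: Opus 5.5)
Your proposal is correct and follows essentially the same route as the paper. Both use structural induction over the context grammar, the same triangle-inequality estimates for the traced map's Lipschitz bound, and the same fixed-point comparison for the guarded-trace rule (contraction of the second map, then its $\beta_Z$-Lipschitz output). The differences are minor: you insert the intermediate point $F_Z(x',z_x)$ where the paper inserts $F_Z(x,z_{x'})$, and you explicitly require each $F_Z(x,\cdot)$ to preserve the closed $Z$-ball, which the paper leaves implicit in the phrase ``on the relevant balls.''
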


\begin{theorem}[Contextual equivalence for certified admitted guarded contexts]
\label{thm:congruence}
Let $\mathcal C[\cdot]$ be a certified admitted linear guarded one-hole context in the sense of Definition~\ref{def:admitted-linear-context}. Let $\mathsf M$ and $\mathsf N$ be compatible components with the same typed hole interface, and assume that all side components and all certificates are identical in the two compared circuits. Assume also that the resulting closed-loop operators are self-maps of a common invariant ball and are contractions with certified modulus $\kappa(\mathcal C)<1$.

If
$
d_\infty^{X\leftarrow Y}(\mathcal{T}_{\mathsf M},\mathcal{T}_{\mathsf N})\le \varepsilon,
$
then the contextual fixed-point values satisfy
$
\|V_{\mathcal C[\mathsf M]}-V_{\mathcal C[\mathsf N]}\|_\infty
\le
\frac{L(\mathcal C)}{1-\kappa(\mathcal C)}\,\varepsilon.
$
\end{theorem}

Theorem~\ref{thm:congruence} is an error-propagation law for decision circuits: a local typed semantic mismatch $\varepsilon$ is amplified first by the linear context gain $L(\mathcal C)$ and then by the closed-loop sensitivity factor $1/(1-\kappa(\mathcal C))$. The theorem is deliberately typed: it never compares value functions living on different state spaces until an explicit adapter has transported them to a common interface.

\section{Coalgebraic Abstraction and Representation}
\label{sec:abstraction}

Sections~\ref{sec:oddc}--\ref{sec:compositionality} developed a typed, compositional Bellman semantics:
open one-step components induce continuation transformers, and closing feedback yields a fixed-point value.
This section connects that semantics to \emph{representation and abstraction}.
We proceed in three steps.
First, we restate an MDP as a \emph{coalgebra} (a one-step dynamics map into a structured functor),
so that an abstraction becomes a (near-)coalgebra morphism in the sense of universal coalgebra \citep{rutten2000universal,jacobs2017introduction}.
Second, we formalize deterministic state abstraction as a commuting diagram between concrete and abstract kernels;
exact commutation implies value preservation via an operator-intertwining argument,
while approximate commutation yields an explicit simulation-style value bound.
Third, we expose a precise bridge to Section~\ref{sec:compositionality}:
the abstraction error appears as a typed \emph{transformer-level} mismatch, and therefore composes through
arbitrary circuit contexts by Theorem~\ref{thm:congruence}.

\subsection{MDP homomorphisms as commuting diagrams}
\label{sec:hom-diagrams}

Fix a reward signal space $R$ (e.g., $R=\R$) and an action space $A$.
A one-step controlled kernel $K:S\otimes A\to S\otimes R$ may be viewed, pointwise in the action, as a family
\[
c_a:S\longrightarrow \Delta(S\otimes R),
\qquad c_a(s):=K(\cdot\mid s,a).
\]
When $A$ is finite, or when a suitable measurable function-space structure is fixed, this family can be packaged as a coalgebra
$c:S\to(\Delta(S\otimes R))^A$ for the functor $F(X)=(\Delta(X\otimes R))^A$.
To avoid relying on function-space measurability, the formal results below use the equivalent pointwise kernel-commutation conditions in $a$.
Under this lens, an abstraction map $\phi:S\to \widehat S$ should make the one-step dynamics and rewards commute after pushing forward along $\phi$; the remainder of this section makes this concrete and quantitative.

For a measurable map $\phi:S\to\widehat S$ and a probability measure $\mu$ on $S$,
the pushforward $\phi_\#\mu$ is defined by $\phi_\#\mu(B):=\mu(\phi^{-1}(B))$ for measurable $B\subseteq\widehat S$.
We use total variation distance
\[
\TV(\mu,\nu):=\sup_{\|f\|_\infty\le 1}\big|\E_\mu[f]-\E_\nu[f]\big|,
\]
so that for any bounded measurable $f$,
\begin{equation}
\label{eq:tv-test-function}
\big|\E_\mu[f]-\E_\nu[f]\big| \le \|f\|_\infty\,\TV(\mu,\nu).
\end{equation}

\begin{definition}[Reward-preserving MDP homomorphism]
\label{def:mdp-hom}
Let $\mathcal{M}=(S,A,P,r,\gamma)$ and $\widehat{\mathcal{M}}=(\widehat S,A,\widehat P,\widehat r,\gamma)$ share the same action space.
A measurable surjection $\phi:S\to\widehat S$ is a (state) homomorphism if for all $s\in S$ and $a\in A$,
\[
\widehat r(\phi(s),a)=r(s,a),
\qquad
\phi_\# P(\cdot\mid s,a)=\widehat P(\cdot\mid \phi(s),a).
\]
Equivalently, in $\Stoch$ the following kernel-level square commutes:
\[
\begin{tikzcd}[column sep=large,row sep=large]
S\otimes A \arrow[r,"P"] \arrow[d,"\phi\otimes \id_A"'] & S \arrow[d,"\phi"] \\
\widehat S\otimes A \arrow[r,"\widehat P"'] & \widehat S
\end{tikzcd}
\qquad\text{and rewards are constant on fibers of }\phi.
\]
\end{definition}

Given an abstract policy $\widehat\pi:\widehat S\to\Delta(A)$, we define its lifted concrete policy by
$
\pi(a\mid s):=\widehat\pi(a\mid \phi(s)).
$
Conceptually, $\widehat\pi$ acts on a representation $\widehat s=\phi(s)$ and is then pulled back to the concrete state.

Define the pullback operator $\phi^*:\mathcal{B}(\widehat S)\to\mathcal{B}(S)$ by $\phi^*\widehat V := \widehat V\circ \phi$.
Then $\|\phi^*\widehat V\|_\infty \le \|\widehat V\|_\infty$, so comparing values via pullback never amplifies sup-norm errors.

\begin{lemma}[Bellman intertwining under a homomorphism]
\label{lem:intertwine}
Let $\phi$ be a homomorphism and let $\widehat\pi$ and $\pi=\widehat\pi\circ\phi$ be as above.
Let $\mathcal{T}^{\pi}$ and $\widehat{\mathcal{T}}^{\widehat\pi}$ denote the policy-evaluation Bellman operators
on $\mathcal{B}(S)$ and $\mathcal{B}(\widehat S)$ respectively.
Then for all bounded $\widehat V\in\mathcal{B}(\widehat S)$,
$
\mathcal{T}^{\pi}(\phi^*\widehat V) \;=\; \phi^*(\widehat{\mathcal{T}}^{\widehat\pi}\widehat V).
$
\end{lemma}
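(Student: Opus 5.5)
The plan is a direct pointwise computation: fix $s\in S$ and expand both sides of the claimed identity. The setting here is the MDP form $\mathcal{M}=(S,A,P,r,\gamma)$. The policy-evaluation operator is therefore
\[
(\mathcal{T}^{\pi}V)(s)=\int_A\Big(r(s,a)+\gamma\int_S V(s')\,P(ds'\mid s,a)\Big)\pi(da\mid s),
\]
with the analogous formula for $\widehat{\mathcal{T}}^{\widehat\pi}$ on $\widehat S$. This is the specialization of \eqref{eq:bellman-additive} to a kernel $K$ that outputs the next state together with the deterministic reward $r(s,a)$. Both operators map bounded functions to bounded functions by Lemma~\ref{lem:bellman-well-defined}, so every integral below is finite and Fubini-type rearrangements are harmless.

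\textbf{Step 1 (left-hand side).} Substitute $V=\phi^*\widehat V=\widehat V\circ\phi$. Then
\[
\big(\mathcal{T}^{\pi}(\phi^*\widehat V)\big)(s)=\int_A\Big(r(s,a)+\gamma\int_S \widehat V(\phi(s'))\,P(ds'\mid s,a)\Big)\,\widehat\pi(da\mid\phi(s)),
\]
where I have already replaced $\pi(\cdot\mid s)$ by $\widehat\pi(\cdot\mid\phi(s))$, which is the definition of the lifted policy.

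\textbf{Step 2 (apply the homomorphism conditions).} Inside the $a$-integral, use two facts from Definition~\ref{def:mdp-hom}.
\begin{itemize}
\item Reward preservation gives $r(s,a)=\widehat r(\phi(s),a)$.
\item For the transition term, use the change-of-variables formula for pushforwards,
\[
\int_S \widehat V(\phi(s'))\,P(ds'\mid s,a)=\int_{\widehat S}\widehat V(\hat s')\,(\phi_\#P(\cdot\mid s,a))(d\hat s'),
\]
and then the commutation condition $\phi_\#P(\cdot\mid s,a)=\widehat P(\cdot\mid\phi(s),a)$.
\end{itemize}
After these substitutions the integrand equals $\widehat r(\phi(s),a)+\gamma\int\widehat V\,d\widehat P(\cdot\mid\phi(s),a)$. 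Integrated against $\widehat\pi(da\mid\phi(s))$, this is exactly $(\widehat{\mathcal{T}}^{\widehat\pi}\widehat V)(\phi(s))=\phi^*(\widehat{\mathcal{T}}^{\widehat\pi}\widehat V)(s)$. Since $s$ was arbitrary, the identity holds.

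\textbf{Main obstacle.} The only real technical point is justifying the pushforward change of variables for a bounded measurable $\widehat V$. I would handle it in the standard way.
\begin{itemize}
\item It holds for indicators $\widehat V=\mathbf 1_B$ by the definition $\phi_\#\mu(B)=\mu(\phi^{-1}(B))$.
\item It extends to simple functions by linearity.
\item It extends to bounded measurable functions by uniform approximation with simple functions, or by dominated convergence.
\end{itemize}
I would also record two measurability facts. First, $\phi^*\widehat V$ is measurable because $\phi$ is measurable, so $\phi^*$ indeed maps $\mathcal{B}(\widehat S)$ into $\mathcal{B}(S)$. Second, $a\mapsto\int\widehat V\,d\widehat P(\cdot\mid\phi(s),a)$ is measurable, by the kernel property, so the outer integral is well defined.
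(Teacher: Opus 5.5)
Your proof is correct and follows essentially the same route as the paper's. Like the paper, you fix $s$, replace $\pi(\cdot\mid s)$ by $\widehat\pi(\cdot\mid\phi(s))$, and then use reward preservation and the commutation $\phi_\#P(\cdot\mid s,a)=\widehat P(\cdot\mid\phi(s),a)$ inside the action integral to obtain $(\widehat{\mathcal{T}}^{\widehat\pi}\widehat V)(\phi(s))$; your explicit justification of the pushforward change-of-variables formula is a detail the paper uses without comment.
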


The identity is an \emph{operator-level} commuting diagram:
pull back an abstract value $\widehat V$ to the concrete space and apply the concrete Bellman backup,
or apply the abstract backup first and then pull back---the results coincide.
This is exactly the categorical/coalgebraic idea that the abstraction commutes with one-step dynamics.

\begin{theorem}[Exact abstraction preserves policy values]
\label{thm:exact-hom}
Assume $\phi$ is a homomorphism.
For any abstract policy $\widehat\pi:\widehat S\to\Delta(A)$ and its lift $\pi=\widehat\pi\circ\phi$,
\[
V^{\pi}(s)=\widehat V^{\widehat\pi}(\phi(s))\quad \forall s\in S,
\qquad\text{i.e.,}\qquad
V^\pi=\phi^*\widehat V^{\widehat\pi}.
\]
\end{theorem}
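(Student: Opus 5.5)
The plan is to derive the result from the operator-level intertwining identity of Lemma~\ref{lem:intertwine} together with uniqueness of Bellman fixed points (Theorem~\ref{thm:policy-eval-vi}). Let $\widehat V^{\widehat\pi}\in\mathcal{B}(\widehat S)$ be the unique fixed point of $\widehat{\mathcal{T}}^{\widehat\pi}$. Set $W:=\phi^*\widehat V^{\widehat\pi}=\widehat V^{\widehat\pi}\circ\phi$. Since $\phi$ is measurable and $\|\phi^*\widehat V\|_\infty\le\|\widehat V\|_\infty$, we have $W\in\mathcal{B}(S)$, so $W$ is an admissible candidate in the concrete value space.

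Next, apply Lemma~\ref{lem:intertwine} with $\widehat V=\widehat V^{\widehat\pi}$:
\[
\mathcal{T}^{\pi}W=\mathcal{T}^{\pi}(\phi^*\widehat V^{\widehat\pi})=\phi^*(\widehat{\mathcal{T}}^{\widehat\pi}\widehat V^{\widehat\pi})=\phi^*\widehat V^{\widehat\pi}=W .
\]
Thus $W$ is a fixed point of $\mathcal{T}^\pi$ on $\mathcal{B}(S)$. Bounded rewards make $\mathcal{T}^\pi$ a $\gamma$-contraction (Lemma~\ref{lem:bellman-contraction}), so its fixed point is unique by Theorem~\ref{thm:policy-eval-vi}. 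Hence $W=V^\pi$. Proposition~\ref{prop:traj-fp} identifies this fixed point with the trajectory return, so the identity $V^\pi(s)=\widehat V^{\widehat\pi}(\phi(s))$ also holds for returns.

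The main obstacle is not the argument above, which is immediate, but making sure the hypotheses of the cited results hold on both sides:
\begin{itemize}
\item The abstract reward $\widehat r$ must be bounded. This follows from $\widehat r(\phi(s),a)=r(s,a)$ together with surjectivity of $\phi$, which gives $\|\widehat r\|_\infty=\|r\|_\infty$.
\item The lifted policy $\pi(\cdot\mid s)=\widehat\pi(\cdot\mid\phi(s))$ must be a Markov kernel. This holds because it is the composite of $\widehat\pi$ with the measurable map $\phi$.
\end{itemize}
If Lemma~\ref{lem:intertwine} were not available, I would verify the intertwining directly. First, the reward terms agree by reward preservation. Second, the continuation terms agree by change of variables: $\int\widehat V(\phi(s'))P(ds'\mid s,a)=\int\widehat V\,d\widehat P(\cdot\mid\phi(s),a)$. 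Finally, integrate both sides against $\widehat\pi(da\mid\phi(s))$.
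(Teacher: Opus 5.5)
Your proposal is correct and follows essentially the same route as the paper: pull back the abstract fixed point, use Lemma~\ref{lem:intertwine} to show $\phi^*\widehat V^{\widehat\pi}$ is a fixed point of $\mathcal{T}^{\pi}$, and conclude by uniqueness from the $\gamma$-contraction. Your extra checks are sound verifications that the paper leaves implicit: boundedness of $\widehat r$ via surjectivity of $\phi$, measurability of the lifted policy, and $\phi^*\widehat V^{\widehat\pi}\in\mathcal{B}(S)$.
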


By Lemma~\ref{lem:intertwine}, the pullback $\phi^*\widehat V$ evolves under $\mathcal{T}^{\pi}$ exactly as
$\widehat V$ evolves under $\widehat{\mathcal{T}}^{\widehat\pi}$.
Since both are $\gamma$-contractions in $\|\cdot\|_\infty$ (Section~\ref{sec:oddc}), they admit unique fixed points,
hence the fixed points correspond by pullback.

To state optimal-value preservation without technical distractions, we record a mild assumption that ensures the optimality
operator is well-defined on bounded measurable functions.

\begin{assumption}[Well-posed optimality backup]
\label{ass:opt-wellposed}
Either (i) $A$ is finite, or (ii) $A$ is a standard Borel space and the MDP satisfies conditions under which the supremum in the
optimality operator admits a measurable selection (so the Bellman optimality operator is well-defined on $\mathcal{B}(S)$).
\end{assumption}

\begin{lemma}[Optimality intertwining]
\label{lem:opt-intertwine}
Under Assumption~\ref{ass:opt-wellposed}, define the optimality operators
\begin{align*}
(\mathcal{T}^\star V)(s)
&:=\sup_{a\in A}\Big(r(s,a)+\gamma\!\int V(s')\,P(ds'\mid s,a)\Big),\\
(\widehat{\mathcal{T}}^\star \widehat V)(\widehat s)
&:=\sup_{a\in A}\Big(\widehat r(\widehat s,a)+\gamma\!\int \widehat V(\widehat s')\,\widehat P(d\widehat s'\mid \widehat s,a)\Big).
\end{align*}
If $\phi$ is a homomorphism, then for all $\widehat V\in\mathcal{B}(\widehat S)$,
$
\mathcal{T}^\star(\phi^*\widehat V)=\phi^*(\widehat{\mathcal{T}}^\star \widehat V).
$
\end{lemma}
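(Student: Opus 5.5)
The plan is to prove the identity pointwise: fix $\widehat V\in\mathcal{B}(\widehat S)$ and $s\in S$, and show that the quantity inside the supremum defining $\mathcal{T}^\star(\phi^*\widehat V)(s)$ agrees, for every fixed action $a\in A$, with the corresponding quantity inside the supremum defining $(\widehat{\mathcal{T}}^\star\widehat V)(\phi(s))$. Since the two suprema then range over the same family of real numbers indexed by the common action space $A$, they coincide. This matches how Lemma~\ref{lem:intertwine} is obtained; the only change is that the policy average is replaced by a supremum over $a$.

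Concretely, for fixed $(s,a)$ I will introduce the $Q$-type expression
\[
Q_{\widehat V}(s,a):=r(s,a)+\gamma\int_S \widehat V(\phi(s'))\,P(ds'\mid s,a).
\]
The reward clause of Definition~\ref{def:mdp-hom} gives $r(s,a)=\widehat r(\phi(s),a)$. For the integral, apply the change-of-variables formula for pushforward measures to the bounded measurable function $\widehat V$:
\[
\int_S \widehat V\circ\phi\,dP(\cdot\mid s,a)=\int_{\widehat S}\widehat V\,d(\phi_\#P(\cdot\mid s,a)).
\]
Then use the transition clause $\phi_\#P(\cdot\mid s,a)=\widehat P(\cdot\mid\phi(s),a)$. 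Together these show that $Q_{\widehat V}(s,a)$ equals the abstract bracket evaluated at $(\phi(s),a)$. Taking $\sup_{a\in A}$ on both sides yields $(\mathcal{T}^\star\phi^*\widehat V)(s)=(\widehat{\mathcal{T}}^\star\widehat V)(\phi(s))=\phi^*(\widehat{\mathcal{T}}^\star\widehat V)(s)$.

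The main point requiring care is well-definedness rather than the algebra. Both sides must be elements of $\mathcal{B}(S)$, and the equality must hold as functions and not only as formal expressions. Boundedness is immediate, since every bracket is bounded by $R_{\max}+\gamma\|\widehat V\|_\infty$. Measurability is exactly what Assumption~\ref{ass:opt-wellposed} supplies: under case (i) the supremum is a finite maximum of measurable functions, and under case (ii) we invoke the assumed measurable-selection property. Because the identity already holds pointwise at every $s$, nothing further is needed once the assumption guarantees that $\widehat{\mathcal{T}}^\star\widehat V$ is measurable. Measurability of $\phi^*(\widehat{\mathcal{T}}^\star\widehat V)$ then follows by composing with the measurable map $\phi$, and the equality transfers measurability to $\mathcal{T}^\star(\phi^*\widehat V)$ as well.
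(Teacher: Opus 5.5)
Your proposal is correct and follows the paper's proof. Both prove the bracket identity for each fixed action $a$, using reward preservation and the pushforward change of variables with $\phi_\#P(\cdot\mid s,a)=\widehat P(\cdot\mid\phi(s),a)$, and then take the supremum over the common action set $A$; your added remarks on boundedness and measurability via Assumption~\ref{ass:opt-wellposed} are sound and simply make explicit what the paper leaves implicit.
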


\begin{corollary}[Exact abstraction preserves optimal values]
\label{cor:opt-preserve}
Under Assumption~\ref{ass:opt-wellposed} and the homomorphism conditions of Definition~\ref{def:mdp-hom},
\[
V^\star=\phi^*\widehat V^\star,
\qquad\text{i.e.,}\qquad
V^\star(s)=\widehat V^\star(\phi(s))\ \ \forall s\in S.
\]
\end{corollary}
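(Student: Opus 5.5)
The plan is to mirror the argument for Theorem~\ref{thm:exact-hom}, replacing policy evaluation by the optimality backup and using Lemma~\ref{lem:opt-intertwine} in place of Lemma~\ref{lem:intertwine}. First, under Assumption~\ref{ass:opt-wellposed} and bounded rewards, I will check that $\mathcal{T}^\star$ and $\widehat{\mathcal{T}}^\star$ are self-maps of $\mathcal{B}(S)$ and $\mathcal{B}(\widehat S)$ respectively. I will then show they are $\gamma$-contractions in $\|\cdot\|_\infty$. For this I use the elementary inequality $|\sup_a f(a)-\sup_a g(a)|\le \sup_a|f(a)-g(a)|$ together with $\bigl|\int (V-W)\,dP(\cdot\mid s,a)\bigr|\le\|V-W\|_\infty$. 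Banach's theorem (Lemma~\ref{lem:BS-banach}) then gives unique fixed points $V^\star\in\mathcal{B}(S)$ and $\widehat V^\star\in\mathcal{B}(\widehat S)$.

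Next I apply Lemma~\ref{lem:opt-intertwine} to $\widehat V=\widehat V^\star$:
\[
\mathcal{T}^\star(\phi^*\widehat V^\star)=\phi^*(\widehat{\mathcal{T}}^\star\widehat V^\star)=\phi^*\widehat V^\star .
\]
So $\phi^*\widehat V^\star$ is a fixed point of $\mathcal{T}^\star$. It lies in $\mathcal{B}(S)$ because $\phi$ is measurable and $\|\phi^*\widehat V\|_\infty\le\|\widehat V\|_\infty$. Uniqueness of the fixed point of $\mathcal{T}^\star$ on $\mathcal{B}(S)$ then forces $V^\star=\phi^*\widehat V^\star$. Equivalently, running value iteration from $\phi^*\widehat V_0$ keeps every iterate of the form $\phi^*\widehat V_k$ by induction, and these iterates converge geometrically to both sides.

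The main obstacle is interpretive rather than computational. The corollary writes $V^\star$, but the paper defines it only implicitly, so I must fix what $V^\star$ means. I will take $V^\star$ to be the unique fixed point of $\mathcal{T}^\star$, which is the standard definition under Assumption~\ref{ass:opt-wellposed}. If instead $V^\star$ denotes $\sup_\pi V^\pi$, one must additionally invoke the classical identification of that supremum with the Bellman optimality fixed point. In the finite-$A$ case this follows from the existence of a greedy stationary policy plus monotonicity (Lemma~\ref{lem:bellman-monotone}). In the standard Borel case it is exactly the measurable-selection hypothesis in Assumption~\ref{ass:opt-wellposed}(ii). I would state this identification as a remark rather than reprove it.
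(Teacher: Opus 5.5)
Your proposal is correct and follows the paper's proof essentially verbatim: apply Lemma~\ref{lem:opt-intertwine} at $\widehat V^\star$ to see that $\phi^*\widehat V^\star$ is a fixed point of $\mathcal{T}^\star$, then conclude by uniqueness of that fixed point. The paper only asserts that the optimality backups are $\gamma$-contractions, and you verify this explicitly; your reading of $V^\star$ as the unique fixed point of $\mathcal{T}^\star$ is the same one the paper's argument uses implicitly.
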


\subsection{Approximate commutation and value distortion}
\label{sec:approx-hom}

Exact commutation is often too strict for learned representations:
$\phi$ may be trained from data, and the abstract kernel $(\widehat P,\widehat r)$ may be fit only approximately. This mirrors the motivation behind approximate MDP homomorphisms, bisimulation metrics, and state-abstraction bounds \citep{givan2003equivalence,ravindran2004approximate,li2006towards,ferns2004metrics,ferns2011bisimulation}.
We therefore quantify abstraction quality by a uniform reward mismatch and a uniform pushforward transition mismatch in total variation.

\begin{theorem}[Approximate abstraction yields an explicit value bound]
\label{thm:approx-hom}
Assume there exist $\widehat r,\widehat P$ such that
\[
\sup_{s\in S,\,a\in A}|r(s,a)-\widehat r(\phi(s),a)|\le \varepsilon_r,
\qquad
\sup_{s\in S,\,a\in A}\TV\!\big(\phi_\#P(\cdot\mid s,a),\widehat P(\cdot\mid \phi(s),a)\big)\le \varepsilon_P.
\]
Assume also a common reward bound $|r(s,a)|\le R_{\max}$ and $|\widehat r(\widehat s,a)|\le R_{\max}$, and set $V_{\max}:=R_{\max}/(1-\gamma)$.
Then for any abstract policy $\widehat\pi$ and its lift $\pi=\widehat\pi\circ\phi$,
\[
\|V^\pi-\phi^*\widehat V^{\widehat\pi}\|_\infty
\;\le\;
\frac{\varepsilon_r+\gamma V_{\max}\varepsilon_P}{1-\gamma}.
\]
\end{theorem}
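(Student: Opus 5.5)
The plan is a simulation-lemma argument: show that $\phi^*\widehat V^{\widehat\pi}$ is an approximate fixed point of the concrete Bellman operator $\mathcal{T}^\pi$, then convert the one-step residual into a fixed-point error using the $\gamma$-contraction of Lemma~\ref{lem:bellman-contraction}. Write $\widehat V:=\widehat V^{\widehat\pi}$. Since $|\widehat r|\le R_{\max}$, Lemma~\ref{lem:value-bounded} (or Lemma~\ref{lem:bellman-well-defined} applied to the abstract model) gives $\|\widehat V\|_\infty\le V_{\max}$. This is the only place the common reward bound on $\widehat r$ enters.

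\textbf{Step 1: bound the one-step residual.} Fix $s\in S$ and compute
\[
\big(\mathcal{T}^\pi\phi^*\widehat V\big)(s)-\big(\phi^*\widehat{\mathcal{T}}^{\widehat\pi}\widehat V\big)(s).
\]
Both sides integrate against the same action law $\widehat\pi(\cdot\mid\phi(s))$, because $\pi(\cdot\mid s)=\widehat\pi(\cdot\mid\phi(s))$. It therefore suffices to bound, uniformly in $a$, the integrand
\[
\big[r(s,a)-\widehat r(\phi(s),a)\big]+\gamma\Big[\int \widehat V\circ\phi\,dP(\cdot\mid s,a)-\int \widehat V\,d\widehat P(\cdot\mid\phi(s),a)\Big].
\]
The first bracket is at most $\varepsilon_r$ in absolute value. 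For the second, change variables to get $\int\widehat V\circ\phi\,dP=\int\widehat V\,d(\phi_\#P)$. Then \eqref{eq:tv-test-function} bounds the difference by $\|\widehat V\|_\infty\,\TV\le V_{\max}\varepsilon_P$. Averaging over $a$ preserves the bound, so
\[
\|\mathcal{T}^\pi\phi^*\widehat V-\phi^*\widehat{\mathcal{T}}^{\widehat\pi}\widehat V\|_\infty\le\varepsilon_r+\gamma V_{\max}\varepsilon_P.
\]
Because $\widehat{\mathcal{T}}^{\widehat\pi}\widehat V=\widehat V$, this says $\|\mathcal{T}^\pi\phi^*\widehat V-\phi^*\widehat V\|_\infty\le\varepsilon_r+\gamma V_{\max}\varepsilon_P$.

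\textbf{Step 2: convert the residual into a value bound.} Use $V^\pi=\mathcal{T}^\pi V^\pi$ (Theorem~\ref{thm:policy-eval-vi}) and the triangle inequality:
\[
\|V^\pi-\phi^*\widehat V\|_\infty\le\|\mathcal{T}^\pi V^\pi-\mathcal{T}^\pi\phi^*\widehat V\|_\infty+\|\mathcal{T}^\pi\phi^*\widehat V-\phi^*\widehat V\|_\infty\le\gamma\|V^\pi-\phi^*\widehat V\|_\infty+\varepsilon_r+\gamma V_{\max}\varepsilon_P.
\]
Rearranging, which is legitimate since all quantities are finite by Lemma~\ref{lem:value-bounded}, yields the claimed bound. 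Alternatively, one can invoke Lemma~\ref{lem:fp-stability} with $T=\mathcal{T}^\pi$ and $T'=\phi^*\widehat{\mathcal{T}}^{\widehat\pi}\widehat V$ restricted appropriately, but the direct argument is cleaner. It also makes visible that the residual needs to be measured only at the single function $\widehat V$, not uniformly over a ball.

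\textbf{Main obstacle.} The delicate step is the transition term in Step~1, for two reasons. First, the change of variables $\int \widehat V\circ\phi\,dP=\int\widehat V\,d\phi_\#P$ requires $\phi$ measurable, which is assumed. Second, the paper's total-variation convention, a supremum over $\|f\|_\infty\le1$ without a factor $\tfrac12$, is exactly what yields the constant $V_{\max}\varepsilon_P$ with no extra factor of $2$. I would double-check that \eqref{eq:tv-test-function} is applied with $f=\widehat V$ and the norm $\|\widehat V\|_\infty\le V_{\max}$ rather than with a centered version. Measurability of $\phi^*\widehat V$ and of the mixed integrals is routine under the standing standard Borel assumptions.
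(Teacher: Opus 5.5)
Your proof is correct and follows the paper's argument essentially verbatim. The paper also bounds the intertwining defect $\|\mathcal{T}^\pi(\phi^*\widehat V)-\phi^*(\widehat{\mathcal T}^{\widehat\pi}\widehat V)\|_\infty$ by $\varepsilon_r+\gamma V_{\max}\varepsilon_P$ using the reward mismatch and the TV test-function bound, specializes to the abstract fixed point, and concludes with the same triangle-inequality and $\gamma$-contraction rearrangement; the only cosmetic difference is that the paper proves the defect bound uniformly over the ball $\|\widehat V\|_\infty\le V_{\max}$ (reused for Corollary~\ref{cor:op-mismatch}), whereas you evaluate it only at $\widehat V^{\widehat\pi}$.
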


There are two error sources, and both propagate geometrically through discounting.
The immediate reward mismatch contributes at most $\varepsilon_r$ per step, hence $\varepsilon_r/(1-\gamma)$ overall.
The transition mismatch only affects the \emph{next-value expectation}:
by~\eqref{eq:tv-test-function} with $f=\widehat V^{\widehat\pi}$ and $\|\widehat V^{\widehat\pi}\|_\infty\le V_{\max}$,
the one-step expectation error is at most $\gamma V_{\max}\varepsilon_P$; summing over time yields the second term.

\begin{corollary}[Approximate commutation implies an intertwining defect bound]
\label{cor:op-mismatch}
Under the assumptions of Theorem~\ref{thm:approx-hom}, let $T:=\mathcal{T}^{\pi}$ on $\mathcal{B}(S)$
and $\widehat T:=\widehat{\mathcal{T}}^{\widehat\pi}$ on $\mathcal{B}(\widehat S)$.
Then on the ball $\mathbb{B}_{V_{\max}}(\widehat S):=\{\widehat V:\|\widehat V\|_\infty\le V_{\max}\}$,
\[
\sup_{\widehat V\in \mathbb{B}_{V_{\max}}(\widehat S)}
\big\|\,T(\phi^*\widehat V) - \phi^*(\widehat T\widehat V)\,\big\|_\infty
\ \le\ \varepsilon_r+\gamma V_{\max}\varepsilon_P.
\]
\end{corollary}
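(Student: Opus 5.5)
We prove the bound pointwise. Fix $\widehat V\in\mathbb{B}_{V_{\max}}(\widehat S)$ and $s\in S$. The lifted policy $\pi(\cdot\mid s)=\widehat\pi(\cdot\mid\phi(s))$ gives both backups the same action measure at $s$. Unfolding the definitions:
\[
T(\phi^*\widehat V)(s)=\int_A\Big(r(s,a)+\gamma\int_S \widehat V(\phi(s'))\,P(ds'\mid s,a)\Big)\widehat\pi(da\mid\phi(s)),
\]
\[
\phi^*(\widehat T\widehat V)(s)=\int_A\Big(\widehat r(\phi(s),a)+\gamma\int_{\widehat S}\widehat V(\widehat s')\,\widehat P(d\widehat s'\mid\phi(s),a)\Big)\widehat\pi(da\mid\phi(s)).
\]
The first step is the change of variables for pushforwards, $\int_S \widehat V(\phi(s'))\,P(ds'\mid s,a)=\int_{\widehat S}\widehat V\,d(\phi_\#P(\cdot\mid s,a))$. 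This puts both transition integrals over $\widehat S$ and against the same test function $\widehat V$.

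Next we subtract the two expressions and use the triangle inequality inside the $\widehat\pi$-integral. This splits the integrand difference into two terms.

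The reward term is $|r(s,a)-\widehat r(\phi(s),a)|\le\varepsilon_r$.

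The transition term is $\gamma\,\big|\E_{\phi_\#P(\cdot\mid s,a)}[\widehat V]-\E_{\widehat P(\cdot\mid\phi(s),a)}[\widehat V]\big|$. By \eqref{eq:tv-test-function} with $f=\widehat V$ and $\|\widehat V\|_\infty\le V_{\max}$, it is at most $\gamma V_{\max}\varepsilon_P$.

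Since $\widehat\pi(\cdot\mid\phi(s))$ is a probability measure, integrating the uniform bound $\varepsilon_r+\gamma V_{\max}\varepsilon_P$ over $a$ preserves it. Taking the supremum over $s$ and then over $\widehat V$ in the ball gives the claim.

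There is no real obstacle; the only points to check are measurability and integrability. We need $\widehat V\circ\phi$ to be bounded measurable, which holds because $\phi$ is measurable, so it lies in $\mathcal{B}(S)$ and the pushforward identity applies. We also need the action integrals to be finite, which holds because rewards are bounded by $R_{\max}$, so Fubini-type rearrangements are legitimate.

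One convention must be checked: \eqref{eq:tv-test-function} uses $\TV$ defined as a supremum over $\|f\|_\infty\le1$. That is the definition the paper adopts, so the bound holds without a factor of $2$.

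The same computation is the one-step estimate behind Theorem~\ref{thm:approx-hom}. That theorem then follows by applying Lemma~\ref{lem:fp-stability}-style reasoning to the intertwining defect.
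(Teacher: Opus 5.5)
Your proposal is correct and matches the paper's argument: the paper notes that the first half of the proof of Theorem~\ref{thm:approx-hom} already gives the pointwise bound without using the fixed-point property, and then takes the supremum over the ball. In that bound the reward term is at most $\varepsilon_r$, and the transition term is at most $\gamma V_{\max}\varepsilon_P$ by the TV test-function bound applied to $\phi_\#P(\cdot\mid s,a)$ versus $\widehat P(\cdot\mid\phi(s),a)$; your unfolding, pushforward change of variables, and integration against the common action measure $\widehat\pi(\cdot\mid\phi(s))$ are exactly those steps made explicit.
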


Corollary~\ref{cor:op-mismatch} is a \emph{typed transformer mismatch}.  To avoid mixing concrete and abstract interfaces, define two adapter-level transformers with the same type
\[
\mathbb{B}_{V_{\max}}(\widehat S)\longrightarrow \mathbb{B}_{V_{\max}}(S):
A_\phi(T)(\widehat V):=T(\phi^*\widehat V),\qquad
\widehat A_\phi(\widehat T)(\widehat V):=\phi^*(\widehat T\widehat V).
\]
Corollary~\ref{cor:op-mismatch} says precisely that
\[
d_\infty\big(A_\phi(T),\widehat A_\phi(\widehat T)\big)\le
\varepsilon_r+\gamma V_{\max}\varepsilon_P
\]
on this common adapter type.  Therefore abstraction can be inserted into a larger circuit only after these adapters have made the concrete and abstract blocks type-compatible.

\begin{proposition}[Typed abstraction error propagates through circuit contexts]
\label{prop:abstraction-context}
Fix a well-typed one-hole context $\mathcal{C}[\cdot]$ generated from $\circ,\otimes,\Tr$ whose hole has the common adapter type
$\mathbb{B}_{V_{\max}}(\widehat S)\to \mathbb{B}_{V_{\max}}(S)$ and whose induced closed-loop operator is contractive.  Let
$\varepsilon:=\varepsilon_r+\gamma V_{\max}\varepsilon_P$ denote the local adapter-level defect from Corollary~\ref{cor:op-mismatch}.  Then the contextual fixed-point values produced by plugging the adapted concrete block $A_\phi(T)$ and the adapted abstract block $\widehat A_\phi(\widehat T)$ into $\mathcal{C}[\cdot]$ lie in the same external value space and satisfy
\[
\big\|V_{\mathcal{C}[A_\phi(T)]}-V_{\mathcal{C}[\widehat A_\phi(\widehat T)]}\big\|_\infty
\le\frac{L(\mathcal{C})}{1-\kappa(\mathcal{C})}\,\varepsilon,
\]
with $L(\mathcal{C})$ and $\kappa(\mathcal{C})$ as in Definition~\ref{def:context-gain} and Lemma~\ref{lem:context-recursion}.  In particular, the proposition compares only values that have first been transported to a common typed interface; no norm is taken directly between a function on $S$ and a function on $\widehat S$.
\end{proposition}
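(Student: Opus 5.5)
The proposition is a direct instantiation of Theorem~\ref{thm:congruence}, with the two adapted blocks playing the roles of $\mathsf M$ and $\mathsf N$.

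\emph{Step 1: the hole type.} Both $A_\phi(T)$ and $\widehat A_\phi(\widehat T)$ must be typed transformers $\mathbb{B}_{V_{\max}}(\widehat S)\to\mathbb{B}_{V_{\max}}(S)$. For $\widehat V$ in the abstract ball, $\|\phi^*\widehat V\|_\infty\le V_{\max}$ because pullback is non-expansive. Lemma~\ref{lem:bellman-well-defined} then gives $\|T(\phi^*\widehat V)\|_\infty\le R_{\max}+\gamma V_{\max}=V_{\max}$. The same lemma applied to $\widehat T$, followed by non-expansiveness of $\phi^*$, gives $\|\phi^*(\widehat T\widehat V)\|_\infty\le V_{\max}$. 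So both blocks share the hole type $\mathbb{B}_{V_{\max}}(\widehat S)\to\mathbb{B}_{V_{\max}}(S)$. Since $\mathcal C$ is fixed, both plugged circuits produce closed operators on the same external invariant ball. Hence $V_{\mathcal C[A_\phi(T)]}$ and $V_{\mathcal C[\widehat A_\phi(\widehat T)]}$ live in the same value space, and the norm in the statement is meaningful.

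\emph{Step 2: the local discrepancy.} Definition~\eqref{eq:typed-operator-metric} with $X=S$, $Y=\widehat S$, $M_X=M_Y=V_{\max}$ gives
\[
d_\infty^{S\leftarrow\widehat S}\big(A_\phi(T),\widehat A_\phi(\widehat T)\big)=\sup_{\widehat V\in\mathbb{B}_{V_{\max}}(\widehat S)}\big\|T(\phi^*\widehat V)-\phi^*(\widehat T\widehat V)\big\|_\infty .
\]
By Corollary~\ref{cor:op-mismatch}, this is at most $\varepsilon=\varepsilon_r+\gamma V_{\max}\varepsilon_P$.

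\emph{Step 3: apply Theorem~\ref{thm:congruence}.} The side components of $\mathcal C$ are unchanged between the two circuits. I read the proposition's hypotheses ("well-typed one-hole context generated from $\circ,\otimes,\Tr$" with contractive closed-loop operator, and $L(\mathcal C),\kappa(\mathcal C)$ referenced via Definition~\ref{def:context-gain} and Lemma~\ref{lem:context-recursion}) as saying that $\mathcal C$ is a certified admitted linear guarded context. Theorem~\ref{thm:congruence} then gives the bound $\frac{L(\mathcal C)}{1-\kappa(\mathcal C)}\varepsilon$ immediately.

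\emph{Main obstacle: validity of the shared certificate.} Theorem~\ref{thm:congruence} requires the same certificate to hold for both plugged blocks. This includes trace-node constants $\alpha_Z,\beta_Z$ and the modulus $\kappa(\mathcal C)$, which depend on Lipschitz properties of the hole occupant. Both adapted blocks are $\gamma$-Lipschitz. For the concrete one, $\|T\phi^*\widehat V-T\phi^*\widehat W\|\le\gamma\|\phi^*(\widehat V-\widehat W)\|\le\gamma\|\widehat V-\widehat W\|$ by Lemma~\ref{lem:bellman-contraction}. For the abstract one, the bound follows from Lemma~\ref{lem:bellman-contraction} applied to $\widehat T$ and non-expansiveness of $\phi^*$. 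Any certificate built from the hole Lipschitz constant $\gamma$ and the ball radius $V_{\max}$ is therefore valid for both blocks simultaneously. I would state this explicitly and treat it as the implicit admissibility hypothesis of the proposition.
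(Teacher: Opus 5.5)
Your proposal is correct and follows the paper's proof: the paper likewise takes the local adapter-level discrepancy $\varepsilon$ from Corollary~\ref{cor:op-mismatch} and applies Theorem~\ref{thm:congruence} directly, implicitly treating the context as certified and admitted. Your extra checks make explicit what the paper compresses into the remark that the adapters give both blocks the same external type: both adapted blocks map $\mathbb{B}_{V_{\max}}(\widehat S)$ into $\mathbb{B}_{V_{\max}}(S)$, and both are $\gamma$-Lipschitz, so a shared certificate is available.
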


\subsection{Remark: symmetry as (approximate) commuting diagrams}
\label{sec:symmetry-remark}
\begin{remark}
Approximate symmetries (including learned linear or nonlinear transforms) fit the same near-commutation template.
To incorporate action relabelings, consider a pair $(\phi,\eta)$ with $\phi:S\to S$ and $\eta:A\to A$ measurable.
The symmetry condition is the exact commutation
\[
r(\phi(s),\eta(a))=r(s,a),
\qquad
\phi_\#P(\cdot\mid s,a)=P(\cdot\mid \phi(s),\eta(a)),
\]
and approximate symmetries replace these equalities by uniform sup-norm and TV errors.
This is precisely an (approximate) homomorphism of $\mathcal{M}$ to itself with action relabeling,
so Theorem~\ref{thm:approx-hom} provides an immediate value distortion bound,
which then composes further through any circuit context via Proposition~\ref{prop:abstraction-context}.
\end{remark}

\section{Quantale Contracts and Correct-by-Construction Safety}
\label{sec:contracts}

Sections~\ref{sec:oddc}--\ref{sec:abstraction} developed the additive discounted semantics via Banach (metric) fixed points.
Many safety/resource/risk requirements, however, are most naturally phrased as \emph{order} constraints:
``the induced closed-loop guarantee is below a specification bound''.
This section introduces an order-theoretic contract layer over a quantale-valued value space.
The development has three steps.
(i) We define quantale-valued value functions and (open) \emph{contract transformers} in continuation-passing style,
matching the typed Bellman transformers of Section~\ref{sec:compositionality}.
(ii) We recall least fixed points on complete lattices and show that \emph{inductive} contracts
(pre-fixed points) soundly bound the closed-loop semantics.
(iii) We state explicit \emph{wiring laws} for transformers under $\circ,\otimes,\Tr$ and derive
a correct-by-construction rule: local inductive bounds lift compositionally to global guarantees.

\subsection{Why safety is contract lifting}

Safety/resource/risk constraints are upper bounds in an ordered algebra of guarantees.
Quantales provide a minimal compositional structure \citep{lawvere1973metric,eklund2018fundamentals,yetter1990quantales}: $\otimes_{\cQ}$ composes step-wise guarantees,
and joins $\bigvee$ capture choice/nondeterminism (e.g., demonic worst-case or specification disjunction).
If the closed-loop semantics is given by a monotone operator admitting a least fixed point,
then a local inductive inequality $T(C)\le C$ certifies a global bound $\lfp(T)\le C$, following the same lattice-theoretic logic behind fixed-point semantics and abstract interpretation \citep{tarski1955lattice,kleene1952introduction,studer2008proof,cousot1977abstract}.
This is the precise sense in which safety is \emph{contract lifting}.

\paragraph{Notation.}
We use $\otimes$ for the monoidal product of system interfaces (cartesian product in $\Stoch$),
and $\otimes_{\cQ}$ for the monoidal product in the quantale (applied pointwise on value functions when needed).

\begin{definition}[Quantale-valued value space]
Fix a quantale $(\cQ,\le,\bigvee,\otimes_{\cQ},e)$ and a state space $S$.
Define the value space $\cV(S):=\cQ^S$ with pointwise order.
Let $\bot\in\cV(S)$ denote the pointwise least element.
\end{definition}

\begin{definition}[Closed-loop contract]
A (closed-loop) contract is a bound $C\in\cV(S)$.
A closed loop $(\mathsf{M},\pi)$ \emph{satisfies} $C$ if its semantic value exists and
$
V^\pi \le C \quad \text{pointwise on } S.
$
\end{definition}

For compositional reasoning we work with \emph{open} components and their continuation transformers.
Intuitively, an open transformer takes a desired post-condition on the output state
and returns a sufficient pre-condition on the input state.

\begin{definition}[Contract transformer]
\label{def:contract-transformer}
An open component with input state space $X$ and output state space $Y$
is assigned a monotone map
$
\mathcal{T}:\cV(Y)\to \cV(X),
$
called its \emph{contract transformer}.
Given a post-contract $C_Y\in\cV(Y)$, the induced pre-contract is $C_X:=\mathcal{T}(C_Y)$.
\end{definition}

\paragraph{Interpretation (why this matches Bellman transformers).}
This contract layer is the order-valued analogue of the Banach Bellman layer developed earlier.
For example, when $\cQ=[0,\infty]$ with $\otimes_{\cQ}=+$, a value $V\in\cV(S)$ can represent
a nonnegative accumulated badness, risk, resource usage, or violation budget.
A contract transformer $\mathcal{T}:\cV(Y)\to\cV(X)$ maps a post-bound on the output interface
to a sufficient pre-bound on the input interface.
Thus the inequality
$
\mathcal{T}(C_Y)\le C_X
$
is a one-step proof obligation: assuming the continuation satisfies $C_Y$, the current component
satisfies the stronger or equal bound $C_X$.

Standard signed discounted reward is not treated as a quantale-valued contract in this section;
it is handled by the Banach fixed-point semantics of Sections~\ref{sec:oddc}--\ref{sec:compositionality}.

\subsection{Least fixed points and inductive contracts}
\label{sec:lfp-contract}

A closed loop induces a monotone endomorphism on $\cV(S)$:
$
\mathcal{T}_{\mathsf{M},\pi}:\cV(S)\to \cV(S),
$
and its meaning is defined as a least fixed point whenever that exists:
$
V^\pi := \lfp(\mathcal{T}_{\mathsf{M},\pi}).
$
This order-theoretic view is particularly natural for ``badness'' measures
(e.g., risk, constraint-violation probability, accumulated resource usage),
where smaller is safer.

\begin{definition}[$\omega$-continuity]
A monotone map $T:\cV(S)\to\cV(S)$ is $\omega$-continuous if for every increasing chain
$V_0\le V_1\le\cdots$,
$
T\Big(\bigvee_{n\ge 0} V_n\Big)=\bigvee_{n\ge 0}T(V_n).
$
\end{definition}

\begin{lemma}[Least fixed point via Kleene iteration]
\label{lem:kleene}
If $T:\cV(S)\to\cV(S)$ is monotone and $\omega$-continuous, then $T$ admits a least fixed point
$
\lfp(T)=\bigvee_{n\ge 0} T^n(\bot).
$
\end{lemma}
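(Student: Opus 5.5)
We follow the standard Kleene argument on the complete lattice $\cV(S)=\cQ^S$, using only the monotonicity and $\omega$-continuity of $T$. First, $\cV(S)$ is a complete lattice under the pointwise order, because $\cQ$ is a complete lattice and joins in $\cQ^S$ are computed pointwise. In particular the least element $\bot$ exists, and every increasing chain has a join. We define the Kleene chain $V_0:=\bot$ and $V_{n+1}:=T(V_n)$, so that $V_n=T^n(\bot)$.

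Next we show by induction that the chain is increasing. The base case $V_0=\bot\le V_1$ holds because $\bot$ is least. If $V_n\le V_{n+1}$, then monotonicity gives $V_{n+1}=T(V_n)\le T(V_{n+1})=V_{n+2}$. We set $V^\ast:=\bigvee_{n\ge0}V_n$, which exists by completeness.

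To see that $V^\ast$ is a fixed point, apply $\omega$-continuity to the increasing chain:
\[
T(V^\ast)=\bigvee_{n\ge0}T(V_n)=\bigvee_{n\ge0}V_{n+1}=\bigvee_{n\ge0}V_n,
\]
where the last equality holds because adding the least element $V_0=\bot$ to a join does not change it.

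For leastness, let $W$ be any fixed point, or more generally any pre-fixed point with $T(W)\le W$. We show $V_n\le W$ by induction. The base case $\bot\le W$ is immediate. If $V_n\le W$, then $V_{n+1}=T(V_n)\le T(W)\le W$ by monotonicity. Taking the join gives $V^\ast\le W$. Hence $V^\ast=\lfp(T)=\bigvee_n T^n(\bot)$. As a byproduct, $\lfp(T)$ lies below every pre-fixed point, which is exactly the inductive-contract principle used later in Section~\ref{sec:contracts}.

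No step here is a real obstacle. The only point needing care is the index shift in the join, $\bigvee_n V_{n+1}=\bigvee_n V_n$, which depends on $V_0=\bot$. One should also note that $\omega$-continuity is applied only to a chain that has first been shown to be increasing, since the hypothesis guarantees continuity only for such chains.
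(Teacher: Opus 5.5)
Your proof is correct and uses the same Kleene-chain argument as the paper: monotone induction shows the chain is increasing, $\omega$-continuity plus the index shift gives the fixed-point property, and induction below an arbitrary fixed point gives leastness. The only difference is that you bound the chain below every pre-fixed point rather than every fixed point. This harmless strengthening recovers Lemma~\ref{lem:prefp-bounds-lfp} in the $\omega$-continuous case, which the paper proves separately by a Knaster--Tarski argument that needs only monotonicity.
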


\begin{lemma}[Inductive contract check: pre-fixed points bound the least fixed point]
\label{lem:prefp-bounds-lfp}
Let $T:\cV(S)\to\cV(S)$ be monotone and admit a least fixed point $\lfp(T)$.
If $C\in\cV(S)$ satisfies
$
T(C)\le C,
$
then $\lfp(T)\le C$.
\end{lemma}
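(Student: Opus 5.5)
The argument is the standard Park/Knaster--Tarski induction: the least fixed point lies below every pre-fixed point. Let $\mu:=\lfp(T)$, which exists by hypothesis. In the setting of Lemma~\ref{lem:kleene} we could argue by Kleene iteration, but since the statement only assumes monotonicity and existence of a least fixed point, I will use the lattice-theoretic route, which needs no continuity.

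First I would consider the set of pre-fixed points $P:=\{W\in\cV(S): T(W)\le W\}$. This set is nonempty, since $C\in P$ by hypothesis and the top element also belongs to it. Since $\cV(S)=\cQ^S$ is a complete lattice under the pointwise order, inherited from the complete lattice $\cQ$, the meet $m:=\bigwedge P$ exists.

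Second, for every $W\in P$ we have $m\le W$. Monotonicity then gives $T(m)\le T(W)\le W$, so $T(m)$ is a lower bound of $P$ and hence $T(m)\le m$. Applying monotonicity again yields $T(T(m))\le T(m)$, so $T(m)\in P$ and therefore $m\le T(m)$. Combining the two inequalities, $m$ is a fixed point of $T$.

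Third, I identify $m$ with $\mu$. Because $\mu$ is the least fixed point, $\mu\le m$. Every fixed point lies in $P$, so in particular $\mu\in P$, which gives $m\le\mu$. Hence $\mu=m\le C$, since $C\in P$.

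A more direct alternative avoids identifying $m$ altogether: $\mu$ is a fixed point and $m$ is a fixed point, so $\mu\le m\le C$. If one instead prefers the Kleene route under $\omega$-continuity, the proof is an induction showing $T^n(\bot)\le C$ for all $n$. The base case is $\bot\le C$, and the inductive step is $T^{n+1}(\bot)=T(T^n(\bot))\le T(C)\le C$. Taking the join over $n$ gives the bound via Lemma~\ref{lem:kleene}. There is no real obstacle in either route; the only point requiring care is noting that completeness of $\cV(S)$ follows pointwise from completeness of $\cQ$, so that the meet $\bigwedge P$ is defined.
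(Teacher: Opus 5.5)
Your proposal is correct and matches the paper's proof essentially step for step. The paper likewise takes the meet $A$ of all pre-fixed points in the complete lattice $\cV(S)$, shows $T(A)\le A$ and then $A\le T(A)$, identifies $A$ with $\lfp(T)$ because every fixed point is a pre-fixed point, and concludes $\lfp(T)=A\le C$; your Kleene-iteration alternative needs the extra $\omega$-continuity hypothesis, which you correctly flag.
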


Think of $C$ as a candidate invariant bound.
The inequality $T(C)\le C$ states: assuming the future behavior respects bound $C$,
one step of closed-loop evolution produces a bound no larger than $C$ again.
Lemma~\ref{lem:prefp-bounds-lfp} then formalizes \emph{correct-by-construction} safety:
the least fixed point (the strongest semantics consistent with the dynamics) is automatically below $C$.

To mirror the Banach trace from Section~\ref{sec:oddc} in an order-theoretic setting,
we use a least-fixed-point trace.

\begin{definition}[Least-fixed-point trace]
\label{def:lfp-trace}
Let $\mathbf{CPO}_{\omega,\bot}$ be the category of pointed $\omega$-complete partial orders and $\omega$-continuous maps,
with cartesian product as the monoidal product.
For a morphism $f:X\times Z\to Y\times Z$ in this category, write
$f=(f_Y,f_Z)$ and, for each $x\in X$, define
$
 g_x:Z\to Z,
\qquad
 g_x(z):=f_Z(x,z).
$
Because $f_Z$ is jointly $\omega$-continuous, each $g_x$ is $\omega$-continuous and the parameterized least-fixed-point map
$x\mapsto z_x:=\lfp(g_x)$ is $\omega$-continuous by the standard parameterized Kleene construction.  Define
$
\Tr^{Z}_{X,Y}(f)(x) := f_Y(x,z_x),
\qquad
z_x := \lfp(g_x)=\bigvee_{n\ge0}g_x^n(\bot_Z).
$
This is the least-fixed-point trace used in the contract layer; it is applied only to morphisms for which the pre-trace map is jointly $\omega$-continuous.
\end{definition}

In the discounted additive case of Section~\ref{sec:oddc}, the Bellman operator is a $\gamma$-contraction on a Banach space,
hence has a unique fixed point.
Whenever it is also monotone on the pointwise order, that unique fixed point coincides with the least fixed point
on the corresponding complete lattice (restricted to an invariant order interval),
so Lemma~\ref{lem:prefp-bounds-lfp} applies unchanged.

\subsection{Contract lifting under wiring}
\label{sec:contract-lifting}

The general theorem below is not meant to be vacuous: the following running instance satisfies the required laws for temporal series, separable parallel products, and least-fixed-point feedback.

\begin{proposition}[Concrete additive contract transformer]
\label{prop:concrete-contract-instance}
Let $\cQ=[0,\infty]$ with $\otimes_{\cQ}=+$, and let $K(dy\mid x)$ be a Markov kernel from $X$ to $Y$.
For this concrete probabilistic instance, use the measurable contract space
$
\mathcal V_+(S):=\{C:S\to[0,\infty]\mid C\text{ is measurable}\},
$
ordered pointwise.
For a nonnegative measurable one-step cost $c:X\to[0,\infty]$ and discount $\gamma\in(0,1)$, define
\[
(\mathcal T_{K,c}C)(x):=c(x)+\gamma\int_Y C(y)\,K(dy\mid x),\qquad C\in \mathcal V_+(Y),
\]
where the integral is the extended nonnegative integral.
Then $\mathcal T_{K,c}:\mathcal V_+(Y)\to\mathcal V_+(X)$ is monotone and $\omega$-continuous.  Moreover:
\begin{enumerate}
\item If $K_1:X\to\Delta(Y)$ and $K_2:Y\to\Delta(Z)$ are wired as two temporal micro-steps with costs $c_1$ and $c_2$, then the macro transformer equals $\mathcal T_{K_1,c_1}\circ \mathcal T_{K_2,c_2}$; hence the second cost and continuation are one discount factor deeper.
\item For independent parallel components with product kernel $K_1\otimes K_2$, separable cost $c_1\otimes_{\cQ}c_2$ (pointwise addition), and separable measurable contracts $C_1\otimes_{\cQ}C_2$, the product transformer satisfies
\[
\mathcal T_{K_1\otimes K_2,\,c_1\otimes_{\cQ}c_2}(C_1\otimes_{\cQ}C_2)
=(\mathcal T_{K_1,c_1}C_1)\otimes_{\cQ}(\mathcal T_{K_2,c_2}C_2).
\]
\item For a feedback circuit whose pre-trace order-valued transformer is a morphism in $\mathbf{CPO}_{\omega,\bot}$, closing the feedback wire is interpreted by the least-fixed-point trace of Definition~\ref{def:lfp-trace}.
\end{enumerate}
\end{proposition}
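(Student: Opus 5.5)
The plan is to verify each clause directly from properties of the extended nonnegative Lebesgue integral, with the monotone convergence theorem as the central tool.

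\emph{Well-definedness.} For measurable $C:Y\to[0,\infty]$, the map $x\mapsto\int_Y C(y)\,K(dy\mid x)$ is measurable and $[0,\infty]$-valued. I would show this first for indicators, where it is the kernel measurability axiom, then for simple functions by linearity. The general case follows by taking increasing simple approximations and applying monotone convergence. Adding $c(x)$ and scaling by $\gamma$, with the conventions $\gamma\cdot\infty=\infty$ and $q+\infty=\infty$, keeps the result in $\mathcal V_+(X)$.

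\emph{Monotonicity and $\omega$-continuity.} Monotonicity is immediate because the integral is monotone. For an increasing chain $C_0\le C_1\le\cdots$, the pointwise supremum is measurable, and monotone convergence gives $\int\sup_n C_n\,dK(\cdot\mid x)=\sup_n\int C_n\,dK(\cdot\mid x)$ at every $x$. Multiplication by $\gamma$ and translation by $c(x)$ are sup-preserving on increasing sequences in $[0,\infty]$. Hence $\mathcal T_{K,c}(\bigvee_n C_n)=\bigvee_n\mathcal T_{K,c}C_n$. This is the main technical step. Two points need care: the supremum must be taken inside the measurable class $\mathcal V_+$, not in all of $\cQ^S$, and the value $\infty$ must be handled. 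Monotone convergence covers both, because it holds in $[0,\infty]$ without integrability assumptions.

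\emph{Item 1.} I would define the temporal series macro transformer as the expected cost $c_1(x)+\gamma c_2(y_1)+\gamma^2 C(z)$, with $y_1\sim K_1(\cdot\mid x)$ and $z\sim K_2(\cdot\mid y_1)$, mirroring Theorem~\ref{thm:compositionality-series}. Unfolding $\mathcal T_{K_1,c_1}(\mathcal T_{K_2,c_2}C)$ gives $c_1(x)+\gamma\int\bigl(c_2(y)+\gamma\int C\,dK_2(\cdot\mid y)\bigr)K_1(dy\mid x)$. Additivity of nonnegative integrals, which needs no finiteness, together with Chapman--Kolmogorov (Tonelli for kernels) identifies the iterated integral with the composite kernel $K_2\circ K_1$. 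This yields exactly the stated equality and the extra discount depth.

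\emph{Item 2.} I would evaluate the product transformer at $(x_1,x_2)$: $c_1(x_1)+c_2(x_2)+\gamma\int\bigl(C_1(y_1)+C_2(y_2)\bigr)(K_1\otimes K_2)(d(y_1,y_2)\mid x_1,x_2)$. Splitting the integral by additivity and integrating out the unused coordinate with Tonelli, using that each marginal has total mass one, gives $\int C_i\,dK_i(\cdot\mid x_i)$ for each $i$. Regrouping yields $(\mathcal T_{K_1,c_1}C_1)(x_1)+(\mathcal T_{K_2,c_2}C_2)(x_2)$.

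\emph{Item 3.} This is essentially definitional. The space $\mathcal V_+$ is a pointed $\omega$-cpo: its bottom is $0$, and suprema of increasing chains are pointwise and measurable. Given a pre-trace transformer that is jointly $\omega$-continuous, Definition~\ref{def:lfp-trace} applies. Lemma~\ref{lem:kleene} then supplies the fixed point $z_x=\bigvee_n g_x^n(\bot)$, so closing the feedback wire is interpreted by that least-fixed-point trace. For closed loops built from $\mathcal T_{K,c}$, joint $\omega$-continuity is what the previous paragraphs established.
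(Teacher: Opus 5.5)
Your proposal is correct and follows the paper's proof essentially step for step. You get measurability from the indicator/simple-function (monotone-class) argument, monotonicity and $\omega$-continuity from the monotone convergence theorem, and items 1 and 2 by unfolding the definition and splitting the extended nonnegative integral over the composite or product kernel; item 3 is a direct appeal to Definition~\ref{def:lfp-trace} with Kleene iteration.

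Your extra remarks fill in details the paper leaves implicit: chain suprema stay inside $\mathcal V_+$, so it is a pointed $\omega$-cpo though not a complete lattice, and additivity of the integral needs no finiteness. Your closing comment on joint $\omega$-continuity is harmless but unnecessary, since item 3 takes it as a hypothesis.
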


For the remainder of the section we state the contract-lifting theorem for any semantics satisfying the same compilation laws.  This separates the concrete probabilistic instance above from the purely order-theoretic proof.

\begin{assumption}[Compositional transformer laws]
\label{ass:compilation-laws}
There is a semantics assignment $\llbracket\cdot\rrbracket$ mapping any well-typed open circuit
to a monotone, $\omega$-continuous contract transformer between the relevant pointed $\omega$-cpos.
Moreover, whenever a feedback constructor is used, the corresponding pre-trace transformer is a morphism in $\mathbf{CPO}_{\omega,\bot}$; equivalently, its components are jointly $\omega$-continuous in the post-contract and feedback-contract arguments.
Finally, $\llbracket\cdot\rrbracket$ respects wiring as follows (up to canonical reindexing of interfaces):
\begin{align*}
\llbracket \mathsf{M}_2\circ \mathsf{M}_1 \rrbracket
&= \llbracket \mathsf{M}_1 \rrbracket \circ \llbracket \mathsf{M}_2 \rrbracket,\\
\llbracket \mathsf{M}_1\otimes \mathsf{M}_2 \rrbracket
&= \llbracket \mathsf{M}_1 \rrbracket \otimes \llbracket \mathsf{M}_2 \rrbracket,\\
\llbracket \Tr(\mathsf{F}) \rrbracket
&= \Tr(\llbracket \mathsf{F}\rrbracket),
\end{align*}
where the rightmost $\Tr$ is the least-fixed-point trace from Definition~\ref{def:lfp-trace}.
\end{assumption}

It is the order-theoretic analogue of Section~\ref{sec:compositionality}:
typed open semantics is continuation passing, series wiring is function composition,
parallel wiring is product/tensor, and closing feedback is trace.
Once these laws hold, contract lifting follows by pure order reasoning.

\begin{theorem}[Contract lifting under composition and feedback]
\label{thm:contract-lifting}
Assume Assumption~\ref{ass:compilation-laws}.
Then inductive contracts are preserved by wiring in the following sense.

\begin{itemize}
\item \textbf{(Series / assume--guarantee).}
Let $\mathsf{M}_1:X\to Y$ and $\mathsf{M}_2:Y\to Z$ be open circuits with transformers
$\mathcal{T}_1:=\llbracket \mathsf{M}_1\rrbracket:\cV(Y)\to\cV(X)$ and
$\mathcal{T}_2:=\llbracket \mathsf{M}_2\rrbracket:\cV(Z)\to\cV(Y)$.
Suppose there exist bounds $C_Z\in\cV(Z)$, $C_Y\in\cV(Y)$, and $C_X\in\cV(X)$ such that
$
\mathcal{T}_2(C_Z)\le C_Y,
\qquad
\mathcal{T}_1(C_Y)\le C_X.
$
Then the composed circuit satisfies the induced bound
$
\llbracket \mathsf{M}_2\circ \mathsf{M}_1 \rrbracket(C_Z)\;=\;(\mathcal{T}_1\circ\mathcal{T}_2)(C_Z)\ \le\ C_X.
$
In particular, if $X=Z=S$ and $C_X=C_Z=:C_S$, this yields an inductive closed-loop contract
$(\mathcal{T}_1\circ\mathcal{T}_2)(C_S)\le C_S$.

\item \textbf{(Parallel / separable product contracts).}
Let $\mathsf{M}_i:X_i\to Y_i$ with transformers $\mathcal{T}_i:\cV(Y_i)\to\cV(X_i)$.
For two contracts $C_1\in\cV(Y_1)$ and $C_2\in\cV(Y_2)$, write their separable product contract as
\[
(C_1\otimes_{\cQ}C_2)(y_1,y_2):=C_1(y_1)\otimes_{\cQ} C_2(y_2).
\]
If $\mathcal{T}_i(C_{Y_i})\le C_{X_i}$ for $i=1,2$, then for the parallel circuit
\[
(\mathcal{T}_1\otimes \mathcal{T}_2)(C_{Y_1}\otimes_{\cQ} C_{Y_2})
\ \le\
C_{X_1}\otimes_{\cQ} C_{X_2},
\]
pointwise on $X_1\otimes X_2$. This statement is for separable product contracts; arbitrary contracts on
$Y_1\otimes Y_2$ need not decompose in this form.

\item \textbf{(Feedback / trace).}
Let $\mathsf{F}$ be an open circuit with a feedback wire of type $Z$, and write its transformer in components as
\begin{align*}
F:=\llbracket \mathsf{F}\rrbracket:
\cV(Y)\times\cV(Z)\longrightarrow \cV(X)\times\cV(Z),
F(C_Y,C_Z)=(F_X(C_Y,C_Z),F_Z(C_Y,C_Z)).
\end{align*}
Fix a post-bound $C_Y\in\cV(Y)$. If there exist bounds $C_X\in\cV(X)$ and $C_Z\in\cV(Z)$ such that
$
F_Z(C_Y,C_Z)\le C_Z,
\qquad
F_X(C_Y,C_Z)\le C_X,
$
then the traced transformer $\Tr^Z(F)$ is well-defined by the least-fixed-point trace and satisfies
$
\Tr^Z(F)(C_Y)\le C_X.
$
\end{itemize}
\end{theorem}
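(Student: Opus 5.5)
The plan is to prove the three clauses separately, using only monotonicity from Assumption~\ref{ass:compilation-laws} and, for feedback, Lemma~\ref{lem:prefp-bounds-lfp} applied to a parameterized endomap.

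\textbf{Series.} By the first wiring law, $\llbracket \mathsf{M}_2\circ\mathsf{M}_1\rrbracket=\mathcal{T}_1\circ\mathcal{T}_2$. Since $\mathcal{T}_1$ is monotone, $\mathcal{T}_2(C_Z)\le C_Y$ gives $\mathcal{T}_1(\mathcal{T}_2(C_Z))\le\mathcal{T}_1(C_Y)$, and this is $\le C_X$. The closed-loop special case is obtained by substituting $X=Z=S$ and $C_X=C_Z=C_S$.

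\textbf{Parallel.} Here I need to fix how $\mathcal{T}_1\otimes\mathcal{T}_2$ acts on separable contracts. I would read it as the separable law
\[
(\mathcal{T}_1\otimes\mathcal{T}_2)(C_1\otimes_{\cQ}C_2)=(\mathcal{T}_1C_1)\otimes_{\cQ}(\mathcal{T}_2C_2),
\]
as in Proposition~\ref{prop:concrete-contract-instance}(2). This law is implicit in the tensor wiring law of Assumption~\ref{ass:compilation-laws} restricted to separable inputs. With it, the bound follows pointwise from monotonicity of $\otimes_{\cQ}$ in each argument. That monotonicity holds in any quantale: $\otimes_{\cQ}$ preserves binary joins, and $a\le b$ means $a\vee b=b$, so $q\otimes_{\cQ} a\le q\otimes_{\cQ} b$, and symmetrically in the left argument. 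Chaining the two one-sided inequalities gives $(\mathcal{T}_1C_{Y_1})(x_1)\otimes_{\cQ}(\mathcal{T}_2C_{Y_2})(x_2)\le C_{X_1}(x_1)\otimes_{\cQ}C_{X_2}(x_2)$. I expect this step to be the main obstacle. The statement's $\otimes$ on transformers is only defined up to canonical reindexing, so I must make explicit that the separable-contract clause is part of the admitted tensor semantics. If it is not, only the inequality $(\mathcal{T}_1\otimes\mathcal{T}_2)(C_1\otimes_{\cQ}C_2)\le(\mathcal{T}_1C_1)\otimes_{\cQ}(\mathcal{T}_2C_2)$ is needed, and I would state that as the minimal hypothesis.

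\textbf{Feedback.} Fix $C_Y$ and set $g(z):=F_Z(C_Y,z)$ on $\cV(Z)$. By Assumption~\ref{ass:compilation-laws}, $F$ is jointly $\omega$-continuous and monotone, so $g$ is monotone and $\omega$-continuous. Lemma~\ref{lem:kleene} then gives $z^\ast:=\lfp(g)=\bigvee_n g^n(\bot)$. This shows that $\Tr^Z(F)(C_Y)=F_X(C_Y,z^\ast)$ is well-defined in the sense of Definition~\ref{def:lfp-trace}. The hypothesis $g(C_Z)\le C_Z$ says $C_Z$ is a pre-fixed point of $g$, so Lemma~\ref{lem:prefp-bounds-lfp} yields $z^\ast\le C_Z$. (Alternatively, induct on $n$: $g^n(\bot)\le C_Z$ for every $n$, then take the join.) Finally, monotonicity of $F_X$ in its second argument gives $F_X(C_Y,z^\ast)\le F_X(C_Y,C_Z)\le C_X$. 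The third wiring law identifies this value with $\llbracket\Tr(\mathsf F)\rrbracket(C_Y)$, which completes the clause.
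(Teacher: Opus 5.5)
Your proposal is correct and follows the paper's proof essentially step for step. For series you use monotonicity of $\mathcal{T}_1$; for parallel, the separable tensor identity plus monotonicity of $\otimes_{\cQ}$; for feedback, the pre-fixed-point bound $\lfp(g)\le C_Z$ followed by monotonicity of $F_X$. The paper also uses $(\mathcal{T}_1\otimes\mathcal{T}_2)(C_1\otimes_{\cQ}C_2)=(\mathcal{T}_1C_1)\otimes_{\cQ}(\mathcal{T}_2C_2)$ as the meaning of the tensor wiring law without comment, so making it explicit (or weakening it to the inequality) clarifies the paper's argument rather than departing from it.
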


The series and parallel cases are immediate from monotonicity and the transformer wiring equalities in
Assumption~\ref{ass:compilation-laws}.
For feedback, Definition~\ref{def:lfp-trace} reduces closing a loop to a least fixed point of the $Z$-component;
the inductive inequality ensures this fixed point is bounded by $C_Z$ via Lemma~\ref{lem:prefp-bounds-lfp},
and the $Y$-projection then yields the traced bound.
Complete proofs are deferred to Appendix~\ref{app:contracts-proofs}.

\section{Extensions}
\label{sec:extensions}

The previous sections establish a typed, compositional semantics for discounted decision components:
open circuits denote continuation-style transformers (Sections~\ref{sec:oddc}--\ref{sec:compositionality}),
feedback closure is a fixed point (Section~\ref{sec:oddc} and Section~\ref{sec:contracts}),
and representation maps are commuting diagrams with quantitative mismatch bounds (Section~\ref{sec:abstraction}).
This section shows that the same semantic viewpoint extends to three ubiquitous ``real-world'' wrinkles in a way
that remains compatible with contextual congruence (Theorem~\ref{thm:congruence}):
(i) \emph{partial observability} becomes a change of the state object from $S$ to the belief object $\Delta(S)$,
so the same fixed-point semantics applies after lifting;
(ii) \emph{off-policy evaluation} becomes a change-of-measure on trajectory space, and under genuine circuit decompositions
the Radon--Nikodym derivative admits a module-wise factorization (enabling modular control of importance-weight dispersion);
(iii) \emph{nonstationarity} becomes a time-indexed path of semantic operators, and fixed-point stability yields
tracking bounds that compose with our operator-mismatch metrics.

\subsection{Partial observability as belief-state lifting}
\label{sec:belief-mdp}

A discounted POMDP is $(S,A,O,P,G,r,\gamma,\nu_0)$, following the standard partially observable control formulation \citep{smallwood1973optimal,kaelbling1998planning}. Here $S$ is the state space, $A$ is the action space, $O$ is the observation space,
$
P(\cdot\mid s,a)
$
is the state-transition kernel on $S$,
$
G(\cdot\mid s',a)
$
is the observation kernel on $O$, $r:S\times A\to\R$ is bounded measurable, $\gamma\in(0,1)$, and $\nu_0\in\Delta(S)$ is the initial belief.

\begin{assumption}[Belief-lifting regularity]
\label{ass:belief-regularity}
The spaces $S,A,O$ are standard Borel. The transition and observation kernels are measurable Markov kernels, and regular conditional distributions are chosen so that there exists a measurable Bayes update
$
\tau:\Delta(S)\times A\times O\to\Delta(S).
$
For every prior belief $b\in\Delta(S)$ and action $a\in A$, $\tau(b,a,o)$ is a version of the posterior law of $s'$ given the observation $o$ under the joint kernel
$
b(ds)\,P(ds'\mid s,a)\,G(do\mid s',a).
$
On observations with zero predictive probability, $\tau$ is defined arbitrarily; this does not change any induced trajectory law.
\end{assumption}

Let $B:=\Delta(S)$ be the belief space equipped with its standard measurable structure. Given $(b,a)\in B\times A$, the predictive observation law is
\[
\ell(E\mid b,a)
:=
\int_S\int_S G(E\mid s',a)\,P(ds'\mid s,a)\,b(ds),
\qquad E\subseteq O\ \text{measurable}.
\]
The posterior belief is
$
b'=\tau(b,a,o).
$
This induces a belief transition kernel $P_B(\cdot\mid b,a)$ on $B$ by pushing $\ell(\cdot\mid b,a)$ through $\tau$:
\[
P_B(C\mid b,a)
:=
\int_O \mathbf 1\{\tau(b,a,o)\in C\}\,\ell(do\mid b,a),
\qquad C\subseteq B\ \text{measurable}.
\]
The one-step reward is lifted linearly:
$
r_B(b,a):=\int_S r(s,a)\,b(ds),
$
which remains bounded with $\|r_B\|_\infty\le \|r\|_\infty$.

We use the convention that the initial belief $b_0=\nu_0$ is specified before any initial observation is incorporated. After action $a_t$ and observation $o_{t+1}$, the history is
$
h_{t+1}=(a_0,o_1,\ldots,a_t,o_{t+1}),
$
and the belief is
$
b_{t+1}=\mathbb P(s_{t+1}\in\cdot\mid h_{t+1}).
$
Equivalently, $b_t$ is the posterior belief available when choosing $a_t$. The induced belief policy may be randomized and time-dependent; the statement is equality in law, not pathwise equality for every realized history.

\begin{lemma}[History policies induce nonstationary belief policies]
\label{lem:belief-policy}
Under Assumption~\ref{ass:belief-regularity}, fix an initial belief $\nu_0$. For any randomized history-dependent POMDP policy $\pi$, there exists a possibly time-dependent Markov kernel
$
\pi_{B,t}:B\to\Delta(A)
$
such that, for every $t$,
$
(b_t,a_t)\ \stackrel{d}{=}\ (b_t^{B},a_t^{B}),
$
where the right-hand side is generated by the belief process with transition $P_B$ and policy sequence $(\pi_{B,t})_{t\ge0}$. Consequently, the expected one-step rewards agree at every time $t$, and the discounted expected returns coincide.
\end{lemma}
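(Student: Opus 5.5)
The plan is to build $\pi_{B,t}$ by disintegrating the true joint law of $(b_t,a_t)$ under the POMDP with policy $\pi$. Then I show by induction on $t$ that the belief process driven by $P_B$ and $(\pi_{B,t})$ reproduces that joint law at every time. Concretely, let $\mathbb P^\pi$ be the law on histories, hidden states and actions induced by $\nu_0$, $P$, $G$ and $\pi$ (Ionescu--Tulcea). Under Assumption~\ref{ass:belief-regularity}, $b_t=\beta_t(h_t)$ for the measurable map $\beta_t$ obtained by iterating $\tau$ along $h_t$, starting from $\beta_0=\nu_0$. Let $\mu_t$ be the law of $(b_t,a_t)$ under $\mathbb P^\pi$. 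Since $B\times A$ is standard Borel, $\mu_t$ disintegrates along its first marginal, and I take $\pi_{B,t}(da\mid b)$ to be a regular conditional law of $a_t$ given $b_t=b$, chosen arbitrarily off the support.

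The induction hypothesis is that $b_t\stackrel d= b^B_t$. It holds at $t=0$ since both equal $\delta_{\nu_0}$. Given it, $(b_t,a_t)\stackrel d=(b^B_t,a^B_t)$ follows immediately, because both joint laws equal $\mathrm{Law}(b_t)\otimes\pi_{B,t}$ in the disintegrated sense.

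The key step is to show that, under $\mathbb P^\pi$, the conditional law of $b_{t+1}$ given $(b_t,a_t)$ is $P_B(\cdot\mid b_t,a_t)$. I would prove two facts.
\begin{enumerate}
\item The conditional law of $s_t$ given $(h_t,a_t)$ is $b_t$. Since $a_t$ is drawn from $\pi(\cdot\mid h_t)$ independently of $s_t$ given $h_t$, conditioning on $a_t$ does not change the posterior $\mathbb P(s_t\in\cdot\mid h_t)=b_t$.
\item Consequently, the conditional law of $o_{t+1}$ given $(h_t,a_t)$ is $\ell(\cdot\mid b_t,a_t)$, and $b_{t+1}=\tau(b_t,a_t,o_{t+1})$ holds a.s. The latter uses the defining property of $\tau$ as a version of the posterior under $b\,P\,G$; null observations do not matter.
\end{enumerate}
Together these give $\mathbb P(b_{t+1}\in C\mid h_t,a_t)=P_B(C\mid b_t,a_t)$. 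This conditional depends on $(h_t,a_t)$ only through $(b_t,a_t)$, so the tower property yields the same identity conditional on $(b_t,a_t)$. Hence $\mathrm{Law}(b_{t+1})=\int P_B(\cdot\mid b,a)\,\mu_t(db,da)$, which by the hypothesis equals $\mathrm{Law}(b^B_{t+1})$. This closes the induction.

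For the rewards, by fact 1 and the tower property, $\E[r(s_t,a_t)]=\E[\int r(s,a_t)\,b_t(ds)]=\E[r_B(b_t,a_t)]$. By the matched laws this equals $\E[r_B(b^B_t,a^B_t)]$. Boundedness of $r$ and dominated convergence (or Fubini with $\sum\gamma^t R_{\max}<\infty$, as in Lemma~\ref{lem:value-bounded}) then let us exchange sum and expectation, so the discounted returns coincide.

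The main obstacle is fact 1 in the key step: that conditioning on the randomized history-dependent action does not perturb the posterior. This needs care with regular conditional probabilities and with the measurability of $\beta_t$ and $\pi(\cdot\mid h_t)$ on history space. I would handle it by writing the joint density of $(s_t,h_t,a_t)$ as $\mathbb P(ds_t,dh_t)\,\pi(da_t\mid h_t)$ and checking the posterior identity against product test functions.
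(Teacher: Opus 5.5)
Your proposal is correct and takes the same route as the paper: you define $\pi_{B,t}$ as a regular conditional law of $a_t$ given $b_t$, induct on $t$ using the fact that both processes push the matched $(b_t,a_t)$-law through the same kernel $P_B$, and identify the expected rewards via $r_B$ before exchanging sum and expectation. Your facts 1--2, together with the tower-property reduction from $(h_t,a_t)$ to $(b_t,a_t)$, make explicit the Markov step that the paper's proof asserts in a single sentence.
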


\begin{theorem}[Belief-state lifting yields an equivalent fully observed process]
\label{thm:belief-mdp}
Under Assumption~\ref{ass:belief-regularity}, define the belief MDP
$
\mathcal M_B=(B,A,P_B,r_B,\gamma)
$
as above. Then for any randomized history-dependent policy $\pi$ in the original POMDP, there exists a possibly time-dependent randomized belief policy $(\pi_{B,t})_{t\ge0}$ such that the discounted return from the initial belief $b_0=\nu_0$ agrees:
$
V^\pi_{\mathrm{POMDP}}(\nu_0)
=
V^{(\pi_{B,t})}_{\mathcal M_B}(\nu_0).
$
Conversely, any such belief policy is implementable by a history-dependent POMDP policy because $b_t$ is a measurable function of the observation-action history. Hence optimizing over history-dependent POMDP policies is equivalent to optimizing over randomized nonstationary belief policies:
$
V^\star_{\mathrm{POMDP}}(\nu_0)
=
V^\star_{\mathcal M_B,\mathrm{nonstat}}(\nu_0).
$
For stationary belief policies, the fixed-point semantics of Sections~\ref{sec:oddc}--\ref{sec:compositionality} applies directly with state object $B$.
\end{theorem}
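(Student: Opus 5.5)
The plan is to assemble the theorem from Lemma~\ref{lem:belief-policy}, a converse implementability argument, and a sup over policies. For the forward direction, fix a randomized history-dependent policy $\pi$ and take the time-dependent belief kernels $(\pi_{B,t})_{t\ge0}$ from Lemma~\ref{lem:belief-policy}.

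The first step checks that the expected one-step rewards agree at each time $t$. Since $b_t$ is the conditional law of $s_t$ given the history, and the action $a_t$ is conditionally independent of $s_t$ given $h_t$ (the policy sees only the history), the tower property gives
\[
\E[r(s_t,a_t)] = \E\Big[\int_S r(s,a_t)\,b_t(ds)\Big] = \E[r_B(b_t,a_t)].
\]
The lemma's equality in law $(b_t,a_t)\stackrel{d}{=}(b^B_t,a^B_t)$ then transfers this to the belief process. The second step sums over $t$ with weights $\gamma^t$. Because $|r|\le R_{\max}$ and $\|r_B\|_\infty\le\|r\|_\infty$, the series are dominated by $R_{\max}/(1-\gamma)$ (Lemma~\ref{lem:value-bounded}). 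Fubini or dominated convergence therefore lets me exchange expectation and sum, giving $V^\pi_{\mathrm{POMDP}}(\nu_0)=V^{(\pi_{B,t})}_{\mathcal M_B}(\nu_0)$.

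For the converse, given belief kernels $(\pi_{B,t})$, I define the history policy $\pi_t(\cdot\mid h_t):=\pi_{B,t}(\cdot\mid \beta_t(h_t))$. Here $\beta_t$ is the iterated Bayes map $\beta_{t+1}(h_{t+1})=\tau(\beta_t(h_t),a_t,o_{t+1})$ with $\beta_0=\nu_0$, which is measurable by Assumption~\ref{ass:belief-regularity}. It then suffices to show by induction on $t$ that the joint law of $(\beta_t(h_t),a_t)$ under this policy equals the law of $(b^B_t,a^B_t)$ in $\mathcal M_B$. The inductive step uses two facts: the predictive observation law given $h_t$ is $\ell(\cdot\mid b_t,a_t)$, and the next belief is $\tau(b_t,a_t,o_{t+1})$, which is exactly the construction of $P_B$. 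Null-probability observations do not matter, since they change nothing in law. Rewards then agree as in the forward direction, so the values coincide.

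The value equality follows by taking suprema. Every history policy value is attained by some nonstationary belief policy, and conversely, so the two suprema coincide: $V^\star_{\mathrm{POMDP}}(\nu_0)=V^\star_{\mathcal M_B,\mathrm{nonstat}}(\nu_0)$. For a stationary $\pi_B$, the model $\mathcal M_B$ is an ordinary MDP on the standard Borel space $B$ with bounded reward $r_B$. Lemma~\ref{lem:bellman-well-defined}, Theorem~\ref{thm:policy-eval-vi} and Theorem~\ref{thm:bellman-trace} then apply verbatim with state object $B$.

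The main obstacle is the converse's inductive identification of the conditional observation law with $\ell(\cdot\mid b_t,a_t)$. This requires care with regular conditional distributions and with the fact that $a_t$ is chosen from $h_t$ alone. I would first establish it by disintegrating the joint law of $(h_t,s_t,a_t,s_{t+1},o_{t+1})$ and invoking the defining property of $\tau$ from Assumption~\ref{ass:belief-regularity}.
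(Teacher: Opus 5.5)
Your proposal is correct and follows essentially the same route as the paper. The forward direction comes from Lemma~\ref{lem:belief-policy} plus equality of one-step rewards, the converse implements belief policies through the measurable Bayes recursion, the value equality follows by taking suprema over the two policy classes, and stationary belief policies are handled by applying the Banach fixed-point results on $B$. The one difference is that the paper's proof merely asserts the converse value equality, whereas you prove it by induction on the law of $(\beta_t(h_t),a_t)$, including the disintegration check that $\beta_t(h_t)$ is a version of the posterior.
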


Belief lifting changes the state object from $S$ to $B=\Delta(S)$. Once lifted, the belief dynamics is Markov, so our fixed-point semantics and the wiring/compositional results apply without modification, with $S$ replaced by $B$ throughout. Proof details are deferred to Appendix~\ref{app:extensions-proofs}.

\subsection{Off-policy evaluation as compositional change-of-measure}
\label{sec:ope}

Let $\mu(\cdot\mid s)$ be a behavior policy and $\pi(\cdot\mid s)$ a target policy on the same MDP. Off-policy evaluation commonly controls the resulting trajectory-law change via likelihood ratios and related importance-sampling or doubly robust estimators \citep{precup2000eligibility,jiang2016doubly,thomas2016data}.
Let $\mathbb{P}_\mu$ and $\mathbb{P}_\pi$ denote the induced trajectory measures on
$\tau=(s_0,a_0,r_0,s_1,a_1,r_1,\ldots)$, with the same initial state distribution.

\begin{lemma}[Finite-prefix trajectory likelihood ratio in an MDP]
\label{lem:traj-rn}
Fix a horizon $T<\infty$ and let $\mathbb{P}^{T}_\mu$ and $\mathbb{P}^{T}_\pi$ denote the induced laws of the trajectory prefix $(s_0,a_0,r_0,\ldots,s_T)$ with the same initial state distribution.
If $\pi(\cdot\mid s)\ll \mu(\cdot\mid s)$ for all $s$, then $\mathbb{P}^{T}_\pi\ll\mathbb{P}^{T}_\mu$ and
\[
\frac{d\mathbb{P}^{T}_\pi}{d\mathbb{P}^{T}_\mu}
=
\prod_{t=0}^{T-1}
\frac{d\pi(\cdot\mid s_t)}{d\mu(\cdot\mid s_t)}(a_t).
\]
For infinite-horizon trajectory laws, the same products define the finite-prefix likelihood-ratio martingale; the infinite-horizon derivative is its almost-sure limit whenever the limit exists in the usual Radon--Nikodym sense.
\end{lemma}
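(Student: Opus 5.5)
We prove Lemma~\ref{lem:traj-rn} by computing both prefix laws explicitly as iterated kernel products on the finite product space $\Omega_T:=S\times(A\times R\times S)^T$ and comparing them factor by factor. Write the one-step environment kernel as $K(d(r,s')\mid s,a)$, or equivalently as $P(ds'\mid s,a)$ together with a reward kernel or deterministic reward, and let $\nu$ be the common initial law. By the Ionescu--Tulcea construction (finite horizon, so only iterated Chapman--Kolmogorov composition in $\Stoch$ is needed), the prefix law under a policy $\sigma\in\{\mu,\pi\}$ is
\[
\mathbb P^T_\sigma(d\tau)=\nu(ds_0)\prod_{t=0}^{T-1}\sigma(da_t\mid s_t)\,K(d(r_t,s_{t+1})\mid s_t,a_t).
\]
The environment factors $\nu$ and $K$ are identical in the two laws. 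Only the action factors differ.

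\textbf{Step 1 (measurable density).} We need a jointly measurable version $w(s,a)$ of $\frac{d\pi(\cdot\mid s)}{d\mu(\cdot\mid s)}(a)$. Under the standing standard Borel assumption this is supplied by the measurable Radon--Nikodym theorem for kernels, proved via a countable generating algebra and martingale convergence, or via Doob's lemma applied to kernels. This is the only genuinely technical point, and we expect it to be the main obstacle. We will cite it rather than reprove it, and note that the product formula is understood up to $\mathbb P^T_\mu$-null sets.

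\textbf{Step 2 (density identity).} Set $W_T(\tau):=\prod_{t<T}w(s_t,a_t)$, which is nonnegative and measurable on $\Omega_T$. We show that $\int_E W_T\,d\mathbb P^T_\mu=\mathbb P^T_\pi(E)$ for all measurable rectangles $E=E_0\times\prod_t(F_t\times G_t\times H_t)$. We proceed by induction on $T$, integrating out the last coordinates first: Fubini/Tonelli for kernels lets us integrate the innermost block. In that block, $\int_{F}w(s,a)\,\mu(da\mid s)\,K(\cdot\mid s,a)=\int_F \pi(da\mid s)\,K(\cdot\mid s,a)$ holds by the defining property of $w$, applied to the nonnegative integrand $a\mapsto \mathbf 1_F(a)K(G\times H\mid s,a)$. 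Rectangles form a $\pi$-system generating the product $\sigma$-algebra, and both sides are finite measures in $E$ (take $E=\Omega_T$ to obtain total mass $1$). Dynkin's $\pi$--$\lambda$ theorem therefore extends the identity to all measurable $E$. This yields both absolute continuity, since $\mathbb P^T_\mu(E)=0$ implies $\mathbb P^T_\pi(E)=0$, and the stated formula for the derivative.

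\textbf{Step 3 (infinite-horizon remark).} Let $\mathcal F_T$ be the prefix filtration. Consistency of the prefix laws gives $\mathbb E_\mu[W_{T+1}\mid\mathcal F_T]=W_T$, because the new factor integrates to $\int w(s_T,a)\mu(da\mid s_T)=1$. Hence $(W_T)$ is a nonnegative $\mathbb P_\mu$-martingale and converges almost surely by the martingale convergence theorem. If $\mathbb P_\pi\ll\mathbb P_\mu$ on $\mathcal F_\infty$, the limit is the Radon--Nikodym derivative by the standard Lebesgue decomposition argument for martingale densities. The statement only claims this conditionally, so we simply record it.
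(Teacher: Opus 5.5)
Your proposal is correct and follows the same route as the paper: both prefix laws are written as iterated kernel products sharing the initial law and environment kernel, so only the action factors contribute to the derivative, and the infinite-horizon remark is handled through the nonnegative mean-one likelihood-ratio martingale under $\mathbb{P}_\mu$. Your version is more careful than the paper's ``cancel the common factors'' step, because you obtain a jointly measurable version of $d\pi(\cdot\mid s)/d\mu(\cdot\mid s)$ (needed for the product to be a measurable function of the trajectory) and verify the density identity on rectangles before extending it by the $\pi$--$\lambda$ theorem.
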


Lemma~\ref{lem:traj-rn} always factorizes over \emph{time}.
Factorization over \emph{modules} requires additional structural assumptions:
the closed-loop trajectory measure must decompose according to the circuit wiring, and the policies must decompose
in the matching way. We make these assumptions explicit below.

\begin{proposition}[Compositional change-of-measure for decomposable decision circuits]
\label{prop:ope-factor}
Fix a finite horizon $T<\infty$ and consider a closed decision circuit built from series/parallel wiring of subcircuits (Section~\ref{sec:compositionality}).
Assume the behavior and target closed-loop prefix measures admit a \emph{genuine} decomposition matching the wiring:
\begin{itemize}
\item \textbf{Parallel case.} The global state decomposes as $S=S_1\times S_2$, the transition and reward streams
factorize under $\mu$ and $\pi$ into independent subsystems, and the policies factorize as product policies
$\mu=\mu_1\otimes \mu_2$ and $\pi=\pi_1\otimes \pi_2$.
\item \textbf{Series case.} There exists an interface random variable (or interface state) $U_t$ such that, conditional on $U_t$,
the upstream/downstream subtrajectories satisfy the corresponding conditional independence implied by the series wiring,
and the policies factorize conditionally along $U_t$ (chain-rule form).
\end{itemize}
Then the global finite-prefix Radon--Nikodym derivative equals the product of the corresponding local derivatives:
\[
\frac{d\mathbb{P}^{T}_\pi}{d\mathbb{P}^{T}_\mu}
=
\prod_{j\in \text{modules}}
\frac{d\mathbb{P}^{T,(j)}_\pi}{d\mathbb{P}^{T,(j)}_\mu},
\]
where in the series case the product is interpreted via the chain rule for conditional RN derivatives along the interface.
\end{proposition}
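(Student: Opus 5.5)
The plan is to write both finite-prefix laws as explicit products of kernels and to reduce every case to Lemma~\ref{lem:traj-rn} together with the multiplicativity of Radon--Nikodym derivatives. The general fact I will rely on is the following. If two measures on a product space $\Omega_1\times\Omega_2$ factor as $\mathbb{P}_\pi=\mathbb{P}^{(1)}_\pi\otimes\mathbb{P}^{(2)}_\pi$ and $\mathbb{P}_\mu=\mathbb{P}^{(1)}_\mu\otimes\mathbb{P}^{(2)}_\mu$, with $\mathbb{P}^{(j)}_\pi\ll\mathbb{P}^{(j)}_\mu$, then $\mathbb{P}_\pi\ll\mathbb{P}_\mu$ and
\[
\frac{d\mathbb{P}_\pi}{d\mathbb{P}_\mu}(\omega_1,\omega_2)=\frac{d\mathbb{P}^{(1)}_\pi}{d\mathbb{P}^{(1)}_\mu}(\omega_1)\,\frac{d\mathbb{P}^{(2)}_\pi}{d\mathbb{P}^{(2)}_\mu}(\omega_2).
\]
I will verify this on measurable rectangles by Fubini and then extend it by the $\pi$--$\lambda$ theorem. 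The conditional version, for mixed measures of the form $\nu(du)\,\kappa(d\omega\mid u)$, is the chain rule. There the joint density is the product of the marginal density of $u$ and a jointly measurable version of the conditional density of $\kappa_\pi(\cdot\mid u)$ with respect to $\kappa_\mu(\cdot\mid u)$. The standard Borel assumption guarantees that such a version exists.

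\textbf{Parallel case.} Under the hypothesis, the prefix $(s_0,a_0,r_0,\dots,s_T)$ is, up to a measurable reindexing bijection, the pair of subsystem prefixes. Both $\mathbb{P}^T_\mu$ and $\mathbb{P}^T_\pi$ are then product measures of the subsystem laws, because the kernels, rewards and policies factorize and the initial distribution is taken to be a product. The product rule above gives the claim directly. As a consistency check, I will also note that Lemma~\ref{lem:traj-rn} yields the same formula. Indeed,
\[
\frac{d(\pi_1\otimes\pi_2)}{d(\mu_1\otimes\mu_2)}(a^1,a^2)=\frac{d\pi_1}{d\mu_1}(a^1)\,\frac{d\pi_2}{d\mu_2}(a^2),
\]
so the time-product regroups into a product over modules.

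\textbf{Series case.} I will disintegrate along the interface variables $(U_t)_{t\le T}$. The hypothesis gives $\mathbb{P}^T=\mathbb{P}^{(1)}(d\tau_1)\,\mathbb{P}^{(2)}(d\tau_2\mid U(\tau_1))$, where the interface process is a measurable function of the upstream subtrajectory. The same form holds under both policies, and the policy factorization is in chain-rule form along $U_t$. I will then apply the chain rule iteratively in $t$ and identify the result with the conditional local derivatives. Equivalently, I will start from the time-indexed product in Lemma~\ref{lem:traj-rn} and split each factor $d\pi/d\mu(a_t\mid s_t)$ into an upstream factor and a downstream factor conditional on $U_t$. Collecting the factors by module then gives the stated product. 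For circuits built from several series and parallel steps, I will argue by induction on the wiring tree, applying one of the two cases at each node.

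The main obstacle is the series case. The hypothesis is stated only informally, so the key step is to fix precisely what the conditional independence and the conditional policy factorization mean, namely that the interface disintegrations coincide in form under $\mu$ and $\pi$. The second delicate point is ensuring that the conditional densities can be chosen jointly measurable in $u$. I will handle both using standard Borel disintegration, and I will only claim equality almost surely with respect to $\mathbb{P}^T_\mu$.
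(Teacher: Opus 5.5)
Your proposal is correct and follows essentially the paper's route. In the parallel case both prefix laws are product measures, so the Radon--Nikodym derivatives multiply; in the series case the prefix laws are disintegrated along the interface, and the Radon--Nikodym chain rule gives the product of conditional local derivatives. Your extra details (the $\pi$--$\lambda$ check on rectangles, jointly measurable conditional densities under the standard Borel assumption, the cross-check against Lemma~\ref{lem:traj-rn}, and induction over the wiring tree) make explicit what the paper's short proof leaves implicit. One caution: keep the series disintegration alternating in $t$, as your iterative chain rule does, rather than a single upstream/downstream split, because in a closed loop the upstream subtrajectory is not generated independently of the downstream one.
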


This proposition is a conditional factorization statement: the contribution is not that every circuit factorizes,
but that once the closed-loop trajectory law genuinely respects the circuit interface, the Radon--Nikodym
chain rule preserves that modular structure.

Module-wise factorization turns global importance-weight dispersion into a sum of local dispersions in log-space
\emph{when the module weights are independent} (e.g., parallel independent subsystems).
Without independence one still obtains upper bounds via standard inequalities, but the clean additivity becomes an inequality.

\begin{corollary}[Second-moment additivity under independent module weights]
\label{cor:ope-second-moment}
Under the parallel-case assumptions in Proposition~\ref{prop:ope-factor},
if the finite-prefix module-wise importance weights are independent under $\mathbb{P}^{T}_\mu$, then
\[
\log \E_\mu\!\Big[\Big(\tfrac{d\mathbb{P}^{T}_\pi}{d\mathbb{P}^{T}_\mu}\Big)^{\!2}\Big]
=
\sum_{j}\log \E_\mu\!\Big[\Big(\tfrac{d\mathbb{P}^{T,(j)}_\pi}{d\mathbb{P}^{T,(j)}_\mu}\Big)^{\!2}\Big].
\]
\end{corollary}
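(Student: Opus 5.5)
The plan is to reduce the statement to the elementary fact that the second moment of a product of independent nonnegative random variables factorizes, and then take logarithms. Write the global finite-prefix weight as $W:=d\mathbb{P}^{T}_\pi/d\mathbb{P}^{T}_\mu$ and the module weights as $W_j:=d\mathbb{P}^{T,(j)}_\pi/d\mathbb{P}^{T,(j)}_\mu$, where each $W_j$ is regarded as a random variable on the global prefix space by composing with the projection onto the $j$-th module's subtrajectory. By Proposition~\ref{prop:ope-factor} in the parallel case, $W=\prod_j W_j$ holds $\mathbb{P}^T_\mu$-almost surely. Consequently $W^2=\prod_j W_j^2$ almost surely as well.

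The next step uses the independence hypothesis. The $W_j$ are independent under $\mathbb{P}^T_\mu$, so the $W_j^2$ are independent too, since they are measurable functions of independent variables. They are also nonnegative. Applying the product rule for expectations of independent nonnegative random variables, which is valid in $[0,\infty]$ through Tonelli on the product of their joint law, gives
\[
\E_\mu[W^2]=\prod_j \E_\mu[W_j^2].
\]
Taking $\log$ of both sides, with the convention $\log\infty=\infty$, turns the product into the claimed sum. Here the modules are finitely many, as in the circuit of Proposition~\ref{prop:ope-factor}.

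Two points of bookkeeping need checking. First, each factor satisfies $\E_\mu[W_j]=1>0$, because $W_j$ is a Radon--Nikodym derivative of a probability measure (the marginal of $\mathbb{P}_\mu^T$ on module $j$ is $\mathbb{P}^{T,(j)}_\mu$). Cauchy--Schwarz then gives $\E_\mu[W_j^2]\ge 1$, so every logarithm is well-defined and nonnegative, and no $0\cdot\infty$ ambiguity arises. Second, the expectation of $W_j^2$ under the global $\mathbb{P}^T_\mu$ must equal the expectation in the module's own law. This follows because in the parallel case the global behavior law is the product of the module laws.

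I expect the main subtlety to be keeping the almost-sure factorization from Proposition~\ref{prop:ope-factor} separate from the independence assumption. The proposition only gives the product identity. Independence is an extra hypothesis, although in the genuinely parallel product-measure case it is automatic, and I would remark on this. When some $\E_\mu[W_j^2]=\infty$, both sides are $+\infty$, so the identity holds in the extended reals.
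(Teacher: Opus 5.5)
Your proposal is correct and follows the paper's proof: you write the global weight as $W=\prod_j W_j$ via Proposition~\ref{prop:ope-factor}, use independence under $\mathbb{P}^T_\mu$ to get $\E_\mu[W^2]=\prod_j\E_\mu[W_j^2]$, and take logarithms. Your extra bookkeeping is sound and makes explicit what the paper leaves implicit: Tonelli in $[0,\infty]$, the bound $\E_\mu[W_j^2]\ge 1$ (so every logarithm is defined and nonnegative), and the remark that independence is automatic when the prefix laws are genuine product measures.
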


Lemma~\ref{lem:traj-rn} is the universal MDP-level change-of-measure.
Proposition~\ref{prop:ope-factor} identifies additional structural conditions under which the change-of-measure
respects circuit structure, enabling modular reasoning about OPE variance and failure modes.
Proof sketches are provided in Appendix~\ref{app:extensions-proofs}.

\subsection{Nonstationarity as a time-indexed operator family}
\label{sec:nonstationary}

Let $R_{\max}$ bound one-step rewards uniformly over time, and set $V_{\max}=R_{\max}/(1-\gamma)$.
Let $\mathbb{B}:=\{V:\|V\|_\infty\le V_{\max}\}$.
For each time $t$, let $\mathcal{T}_t:\mathbb{B}\to\mathbb{B}$ be a $\gamma$-contraction.

Define the drift (operator mismatch) by
$
\eta_t := \sup_{V\in\mathbb{B}}\|\mathcal{T}_{t+1}V-\mathcal{T}_tV\|_\infty.
$
This is exactly the $d_\infty$-metric from Section~\ref{sec:contextual} applied to successive operators.

\begin{theorem}[Fixed-point tracking under time-varying contractions]
\label{thm:tracking-functor}
Let $\{\mathcal{T}_t\}_{t\ge 0}$ be $\gamma$-contractions on $\mathbb{B}$ with fixed points $V_t^\star$.
Then for every $t$,
$
\|V_{t+1}^\star-V_t^\star\|_\infty\le \frac{\eta_t}{1-\gamma}.
$
Moreover, the cumulative drift satisfies
$
\|V_{T}^\star-V_0^\star\|_\infty\le \sum_{t<T}\frac{\eta_t}{1-\gamma}.
$
\end{theorem}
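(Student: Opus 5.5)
The first claim follows from the fixed-point stability bound of Lemma~\ref{lem:fp-stability}, applied to the consecutive pair $T:=\mathcal{T}_t$, $T':=\mathcal{T}_{t+1}$ on the common invariant ball $\mathbb{B}=\mathbb{B}_{V_{\max}}(S)$ with $\kappa=\gamma$. Both operators are $\gamma$-contractive self-maps of $\mathbb{B}$ by hypothesis. Since $\mathbb{B}$ is a closed subset of the Banach space $\mathcal{B}(S)$ (Lemma~\ref{lem:BS-banach}), it is complete, so Banach's theorem gives the unique fixed points $V_t^\star,V_{t+1}^\star\in\mathbb{B}$. The operator distance $d_\infty^{(S,V_{\max})}(\mathcal{T}_t,\mathcal{T}_{t+1})$ of \eqref{eq:typed-operator-metric} is exactly $\eta_t$, so Lemma~\ref{lem:fp-stability} yields $\|V_{t+1}^\star-V_t^\star\|_\infty\le \eta_t/(1-\gamma)$.

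For a self-contained check, write
\[
V_{t+1}^\star-V_t^\star=\bigl(\mathcal{T}_{t+1}V_{t+1}^\star-\mathcal{T}_{t}V_{t+1}^\star\bigr)+\bigl(\mathcal{T}_tV_{t+1}^\star-\mathcal{T}_tV_t^\star\bigr).
\]
The first bracket is bounded in sup-norm by $\eta_t$, because $V_{t+1}^\star\in\mathbb{B}$. The second bracket is bounded by $\gamma\|V_{t+1}^\star-V_t^\star\|_\infty$. Rearranging gives the bound, since $1-\gamma>0$ and all norms are finite.

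The cumulative claim follows from the triangle inequality on the telescoping sum $V_T^\star-V_0^\star=\sum_{t<T}(V_{t+1}^\star-V_t^\star)$, applying the one-step bound to each term.

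The only step that needs care is ensuring that the fixed points lie in $\mathbb{B}$. That membership is what allows $\eta_t$, defined as a supremum over $\mathbb{B}$ only, to control the first bracket. Self-mapping of $\mathbb{B}$ together with its completeness guarantees it.
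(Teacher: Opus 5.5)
Your proposal is correct and matches the paper's proof: apply Lemma~\ref{lem:fp-stability} to the consecutive pair $(\mathcal{T}_t,\mathcal{T}_{t+1})$ with $\kappa=\gamma$, identify the operator discrepancy with $\eta_t$, and get the cumulative bound from the telescoping sum and the triangle inequality. Your remark that the fixed points must lie in $\mathbb{B}$, so that the supremum defining $\eta_t$ controls the first bracket, is a useful detail the paper leaves implicit.
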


Theorem~\ref{thm:tracking-functor} is a direct instance of Lemma~\ref{lem:fp-stability} with $\kappa=\gamma$ and
$d_\infty(\mathcal{T}_{t+1},\mathcal{T}_t)=\eta_t$.
Thus, any compositional operator-mismatch bound (including contextual congruence, Theorem~\ref{thm:congruence})
can be plugged in to control $\eta_t$.

Often one does not compute $V_t^\star$ exactly; instead, one applies a small number of backups per time step.
The next corollary quantifies the tracking error of the simplest update $V_{t+1}:=\mathcal{T}_t V_t$.

\begin{corollary}[Iterate tracking under drifting contractions]
\label{cor:iterate-tracking}
Let $V_{t+1}:=\mathcal{T}_t V_t$ and let $V_t^\star$ be the fixed point of $\mathcal{T}_t$.
Then
\[
\|V_{t+1}-V_{t+1}^\star\|_\infty
\le
\gamma\,\|V_t-V_t^\star\|_\infty + \frac{\eta_t}{1-\gamma}.
\]
Iterating yields
\[
\|V_T-V_T^\star\|_\infty
\le
\gamma^T \|V_0-V_0^\star\|_\infty
+
\sum_{t<T}\gamma^{T-1-t}\frac{\eta_t}{1-\gamma}.
\]
\end{corollary}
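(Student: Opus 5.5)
The plan is a triangle inequality through the intermediate point $\mathcal{T}_t V_t^\star$, followed by unrolling the resulting linear recursion.

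\textbf{Step 1 (one-step inequality).} Write $e_t:=\|V_t-V_t^\star\|_\infty$ and split
\[
\|V_{t+1}-V_{t+1}^\star\|_\infty\le \|\mathcal{T}_tV_t-\mathcal{T}_tV_t^\star\|_\infty+\|V_t^\star-V_{t+1}^\star\|_\infty .
\]
This uses $V_{t+1}=\mathcal{T}_tV_t$ and $\mathcal{T}_tV_t^\star=V_t^\star$.

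The first term is at most $\gamma e_t$, because $\mathcal{T}_t$ is a $\gamma$-contraction on $\mathbb{B}$. The second term is at most $\eta_t/(1-\gamma)$ by Theorem~\ref{thm:tracking-functor}. That theorem is itself Lemma~\ref{lem:fp-stability} applied with $\kappa=\gamma$ to the pair $\mathcal{T}_t,\mathcal{T}_{t+1}$, whose discrepancy on $\mathbb{B}$ is $\eta_t$. Adding the two bounds gives the first displayed inequality.

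\textbf{Step 2 (staying in the ball).} The contraction step needs $V_t\in\mathbb{B}$ for every $t$. I will prove this by induction, assuming $V_0\in\mathbb{B}$; implicitly the statement requires this. If $V_t\in\mathbb{B}$, then $V_{t+1}=\mathcal{T}_tV_t\in\mathbb{B}$, since $\mathcal{T}_t$ is a self-map of $\mathbb{B}$. Each fixed point $V_t^\star$ lies in $\mathbb{B}$ by Banach's theorem on the closed ball $\mathbb{B}$, which is complete by Lemma~\ref{lem:BS-banach}. This step is the only real subtlety, and it is handled by the standing hypothesis that every $\mathcal{T}_t$ maps $\mathbb{B}$ into itself.

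\textbf{Step 3 (unrolling).} The recursion $e_{t+1}\le\gamma e_t+\eta_t/(1-\gamma)$ holds for all $t$. Induction on $T$ gives
\[
e_T\le\gamma^Te_0+\sum_{t<T}\gamma^{T-1-t}\frac{\eta_t}{1-\gamma}.
\]
The base case $T=0$ is trivial. For the inductive step, apply the recursion once and use $\gamma\cdot\gamma^{T-1-t}=\gamma^{T-t}$.
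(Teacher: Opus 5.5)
Your proposal is correct and matches the paper's proof: it splits through $V_t^\star=\mathcal{T}_tV_t^\star$, bounds the first term by the $\gamma$-contraction of $\mathcal{T}_t$ and the second by Theorem~\ref{thm:tracking-functor}, then unrolls the scalar recursion by induction. Your Step~2, showing that the iterates stay in $\mathbb{B}$ when $V_0\in\mathbb{B}$, makes explicit a well-definedness hypothesis the paper leaves implicit; it adds rigor without changing the argument.
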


Nonstationarity is captured by a time-indexed family of semantic operators.
Theorem~\ref{thm:tracking-functor} controls how the \emph{ground-truth} fixed point drifts,
while Corollary~\ref{cor:iterate-tracking} controls the additional error incurred by limited computation.
Both statements are expressed in the same operator metric used by contextual congruence,
so they compose seamlessly with our circuit-level mismatch bounds.

\section{Scope and Limitations}
\label{sec:limitations}

The results in this paper are intentionally limited to settings where the semantic fixed points are well-defined. In the Banach layer we assume bounded rewards, standard-Borel measurable spaces, and a discount factor $\gamma\in(0,1)$. These assumptions make the Bellman backup a contraction on bounded value functions. Undiscounted, average-reward, unbounded-reward, or risk-sensitive objectives may require different value spaces or different fixed-point principles.

The guarded trace used here is partial. A feedback loop is admitted only when the feedback coordinate is uniformly contractive, and contextual trace laws are used only under this admissibility condition. Thus the paper does not claim that all stochastic kernels, all open decision components, or all Bellman transformers form a total traced symmetric monoidal category.

The paper is also not an algorithmic or empirical contribution. It does not propose a new RL optimizer, estimator, exploration method, or benchmark result. Its purpose is to give a typed compositional semantics for existing discounted Bellman evaluation and to derive error-propagation, abstraction, and contract-lifting theorems from that semantics.

Finally, the extensions in Section~\ref{sec:extensions} are interface checks rather than full replacements for specialized theories. Belief-state lifting recovers the standard fully observed belief process under the usual measurability assumptions. The off-policy factorization result requires genuine decomposition of the trajectory law and policy along the circuit interfaces; without those assumptions, importance weights need not factorize module-wise.

\section{A Minimal Example and Modular Robustness}
\label{sec:example}

This section instantiates the abstract semantics on a minimal yet fully explicit testbed whose purpose is twofold.
First, we give a sanity check that our \emph{parallel} wiring semantics recovers the classical additive value decomposition
for independent subsystems with additive rewards and product policies (Section~\ref{sec:example-parallel}).
Second, we provide a concrete \emph{two-module series} circuit in which we can track how \emph{local} model perturbations
propagate to the \emph{global} fixed-point value through the exact wiring calculus:
\begin{align*}
&\text{(TV/reward mismatch)}
\Rightarrow \text{(single-module operator mismatch)}\\
&\Rightarrow \text{(macro mismatch under series wiring)}
\Rightarrow \text{(fixed-point deviation)}.
\end{align*}
The resulting bound makes a depth-discount phenomenon explicit: perturbations deeper in the feedback loop
are geometrically attenuated by additional factors of $\gamma$ (Section~\ref{sec:example-series}).

\subsection{Parallel product MDP (sanity check)}
\label{sec:example-parallel}

Let $\mathcal{M}_i=(S_i,A_i,P_i,r_i,\gamma)$ be two discounted MDPs on measurable spaces
with bounded rewards $|r_i|\le R_{\max}^{(i)}$.
Define the product MDP on $S=S_1\times S_2$ and $A=A_1\times A_2$ by the product transition kernel: for measurable rectangles $C_1\times C_2$,
\[
P(C_1\times C_2\mid (s_1,s_2),(a_1,a_2))
=
P_1(C_1\mid s_1,a_1)\,P_2(C_2\mid s_2,a_2),
\]
extended uniquely to the product $\sigma$-algebra, and by the additive reward
\[
r((s_1,s_2),(a_1,a_2)) := r_1(s_1,a_1)+r_2(s_2,a_2).
\]
For a product policy $\pi=\pi_1\otimes\pi_2$ (independent actions across subsystems),
the closed-loop trajectory law factorizes across subsystems, and expectation linearity yields the value decomposition.

\begin{proposition}[Additive value factorization for independent parallel products]
\label{prop:parallel-factor}
Under the product dynamics and product policy above,
\[
V^\pi(s_1,s_2)=V^{\pi_1}(s_1)+V^{\pi_2}(s_2)\qquad \forall (s_1,s_2)\in S_1\times S_2.
\]
Equivalently, the fixed point of the parallel Bellman operator factorizes additively.
\end{proposition}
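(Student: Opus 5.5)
The plan is to reduce the statement to the separable-invariance identity of Theorem~\ref{thm:compositionality-parallel} and then use uniqueness of the Bellman fixed point. First I identify the product MDP with an instance of the parallel wiring of Section~\ref{sec:comp-bellman}. Take $K_i$ to be the kernel emitting $(s_i',r_i)$ with $s_i'\sim P_i(\cdot\mid s_i,a_i)$ and $r_i$ the deterministic reward $r_i(s_i,a_i)$, and let $\rho_i$ be the identity. The product kernel defined on rectangles extends uniquely to the product $\sigma$-algebra (standard Borel, $\pi$-$\lambda$ uniqueness). Hence $K_\otimes=K_1\otimes K_2$ in $\Stoch$, and the additive reward is exactly $\rho_\otimes(r_1,r_2)=\rho_1(r_1)+\rho_2(r_2)$. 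With $\pi=\pi_1\otimes\pi_2$, the closed-loop kernel $K_\otimes^{\pi}$ from $(s_1,s_2)$ is the product measure $K_1^{\pi_1}(\cdot\mid s_1)\otimes K_2^{\pi_2}(\cdot\mid s_2)$. I will verify this on rectangles via $\langle\id,\pi\rangle$ and Fubini.

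Second, I verify the invariance identity directly, since it is the heart of the matter. For bounded $V_i$, expand $\mathcal{T}_\otimes(V_1\oplus V_2)(s_1,s_2)$ as the integral of $r_1+r_2+\gamma V_1(s_1')+\gamma V_2(s_2')$ against the product measure. Linearity splits this into four terms. In each term Fubini integrates out the unused coordinate, and that coordinate has total mass $1$ because the kernels are Markov. This yields $(\mathcal{T}_1V_1)(s_1)+(\mathcal{T}_2V_2)(s_2)$. Boundedness of all integrands, given by $|r_i|\le R^{(i)}_{\max}$ and $V_i\in\mathcal{B}(S_i)$, justifies Fubini. The separable function $V_1\oplus V_2$ is measurable and bounded by $\|V_1\|_\infty+\|V_2\|_\infty$, so it lies in $\mathcal{B}(S_1\times S_2)$.

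Third, I conclude by uniqueness. By Theorem~\ref{thm:policy-eval-vi}, each $\mathcal{T}_i$ has a unique fixed point $V^{\pi_i}$. The invariance identity then gives $\mathcal{T}_\otimes(V^{\pi_1}\oplus V^{\pi_2})=V^{\pi_1}\oplus V^{\pi_2}$. The product reward is bounded by $R^{(1)}_{\max}+R^{(2)}_{\max}$, so Lemma~\ref{lem:bellman-contraction} makes $\mathcal{T}_\otimes$ a $\gamma$-contraction on $\mathcal{B}(S_1\times S_2)$. Its unique fixed point is therefore $V^{\pi_1}\oplus V^{\pi_2}$. Proposition~\ref{prop:traj-fp} identifies this fixed point with the trajectory value $V^\pi$, which gives the claimed pointwise identity.

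The only step requiring care is the first one: checking that the product closed-loop kernel really equals the product of the two closed-loop kernels, so that Fubini applies. I would do this on measurable rectangles in $(S_1\times R_1)\times(S_2\times R_2)$ and extend by the $\pi$-$\lambda$ theorem. As a cross-check, one could alternatively argue at the trajectory level: the joint trajectory law is the product of the marginal laws, so linearity of expectation, exchanged with the series sum by dominated convergence using the bound $\gamma^t(R^{(1)}_{\max}+R^{(2)}_{\max})$, yields the same decomposition. The fixed-point route is cleaner, however.
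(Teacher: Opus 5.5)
Your argument is correct, but its main route differs from the paper's.

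\textbf{The paper's route.} The paper argues at the trajectory level. Under product dynamics and a product policy, the two coordinate processes evolve independently, each with the law of its own subsystem. Linearity of expectation then splits $\E[\sum_t\gamma^t(r_1+r_2)]$ into the two subsystem returns, and bounded rewards justify exchanging expectation and the discounted series. This is essentially the ``cross-check'' you mention at the end.

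\textbf{Your route.} You treat the product MDP as an instance of the parallel wiring of Theorem~\ref{thm:compositionality-parallel}. You verify on rectangles that $K_\otimes^{\pi}(\cdot\mid s_1,s_2)=K_1^{\pi_1}(\cdot\mid s_1)\otimes K_2^{\pi_2}(\cdot\mid s_2)$, and you conclude by uniqueness of the contraction fixed point. The passage to trajectory values is delegated to Proposition~\ref{prop:traj-fp}.

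\textbf{What each buys.} Your approach only requires checking one-step product structure. It avoids the path-level claim that each coordinate of the joint chain is itself a Markov chain with the subsystem's closed-loop kernel. The paper takes that claim as evident, although it strictly needs an induction over finite-dimensional laws. Your route also presents the proposition as a corollary of the compositional semantics, proving the ``equivalently'' clause first and deriving the trajectory identity from it. The paper's argument, in turn, is self-contained and independent of fixed-point uniqueness.

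\textbf{Two bookkeeping points.}
\begin{enumerate}
\item You should also apply Proposition~\ref{prop:traj-fp} to each subsystem. The $V^{\pi_i}$ in the statement are subsystem trajectory values, whereas Theorem~\ref{thm:policy-eval-vi} only provides them as fixed points of $\mathcal{T}_i$.
\item With $\rho_i=\mathrm{id}$, take $R_i=[-R^{(i)}_{\max},R^{(i)}_{\max}]$ rather than $\R$. Otherwise the hypothesis $|\rho_i(r)|\le R^{(i)}_{\max}$ for all $r\in R_i$ in Lemma~\ref{lem:bellman-well-defined} does not literally hold.
\end{enumerate}
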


The decomposition is structural: it requires both (i) product dynamics $P=P_1\otimes P_2$ and
(ii) product policy $\pi=\pi_1\otimes\pi_2$.
If either coupling is present (shared noise, coupled rewards, or correlated actions),
the continuation transformer no longer decomposes as a sum of two independent subtransformers,
and cross-terms generally appear.

\subsection{Two-module series robustness (depth-discount)}
\label{sec:example-series}

Consider two open kernels wired in series through an interface $U$ (cf.\ Section~\ref{sec:comp-bellman}):
\[
K_1:S\otimes A_1\to U\otimes R_1,
\qquad
K_2:U\otimes A_2\to S\otimes R_2,
\]
with scalarizations $\rho_i:R_i\to\R$ and a common discount $\gamma\in(0,1)$.
Fix (possibly distinct) policies $\pi_S:S\to\Delta(A_1)$ and $\pi_U:U\to\Delta(A_2)$, and let
$T_1:\mathcal{B}(U)\to\mathcal{B}(S)$ and $T_2:\mathcal{B}(S)\to\mathcal{B}(U)$
denote the induced micro-step Bellman transformers (Section~\ref{sec:typed-bellman}).
We write $\widetilde T_i$ for perturbed versions of these transformers under a perturbed model.

Assume the scalarized rewards are uniformly bounded,
$
|\rho_i(r_i)|\le R_{\max}\qquad \text{for all } r_i\in R_i,\quad i=1,2,
$
so $V_{\max}:=R_{\max}/(1-\gamma)$ bounds all policy-evaluation values in sup norm.
We will work on the closed ball
$\mathbb{B}_S:=\{V\in\mathcal{B}(S):\|V\|_\infty\le V_{\max}\}$ and similarly $\mathbb{B}_U$.

For each module $i\in\{1,2\}$, assume a uniform bound on the scalar one-step reward mismatch
and the transition-kernel mismatch:
\[
\sup_{x,a}\big|\tilde r_i(x,a)- r_i(x,a)\big|\le \varepsilon_r^{(i)},
\qquad
\sup_{x,a}\TV\!\big(\widetilde P^{(i)}(\cdot\mid x,a),P^{(i)}(\cdot\mid x,a)\big)\le \varepsilon_P^{(i)},
\]
where $x$ denotes the appropriate state ($x\in S$ for $i=1$, $x\in U$ for $i=2$).
(When rewards are emitted as random signals $R_i$, $\tilde r_i(x,a)$ and $r_i(x,a)$ are understood as the
scalarized conditional expectations; this convention eliminates ambiguity in the mismatch statement.)

\begin{lemma}[TV controls Bellman expectation mismatch]
\label{lem:traj-tv-bound}
Let $T$ and $\widetilde T$ be Bellman transformers of the same type, with discount $\gamma$ and bounded values.
If the one-step scalar reward mismatch and transition TV mismatch are bounded by $(\varepsilon_r,\varepsilon_P)$ as above,
then on the ball $\mathbb{B}:=\{V:\|V\|_\infty\le V_{\max}\}$,
\[
d_\infty(T,\widetilde T)
:=\sup_{V\in\mathbb{B}}\|TV-\widetilde TV\|_\infty
\;\le\;
\varepsilon_r + \gamma V_{\max}\varepsilon_P
\;=:\;
\epsilon.
\]
\end{lemma}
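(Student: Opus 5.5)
The proof is a pointwise computation followed by a triangle-inequality split into a reward term and a transition term. Fix $V\in\mathbb{B}$ and a state $x$. Using the scalarized conditional-expectation convention from the statement, write the two transformers as
\[
(TV)(x)=\int \Big(r(x,a)+\gamma\!\int V(y)\,P(dy\mid x,a)\Big)\pi(da\mid x),
\]
and analogously for $(\widetilde TV)(x)$ with $\tilde r$ and $\widetilde P$. The policy $\pi$ is the same in both, since the transformers have the same type and closed-loop policy. This form follows from the typed Bellman transformer of Section~\ref{sec:typed-bellman}. Integrating $\rho(r)$ against $K^{\pi}$ and disintegrating over the action gives $\int r(x,a)\,\pi(da\mid x)$. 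The continuation part integrates $V$ against the next-state marginal $P(\cdot\mid x,a)$.

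Subtracting the two expressions, I split the integrand as
\[
\big(r(x,a)-\tilde r(x,a)\big)+\gamma\Big(\E_{P(\cdot\mid x,a)}[V]-\E_{\widetilde P(\cdot\mid x,a)}[V]\Big).
\]
The first term is bounded by $\varepsilon_r$ uniformly in $(x,a)$. For the second term I apply \eqref{eq:tv-test-function} with $f=V$ and $\|V\|_\infty\le V_{\max}$, which gives at most $\gamma V_{\max}\varepsilon_P$. Since $\pi(\cdot\mid x)$ is a probability measure, integrating these bounds over $a$ preserves them. Taking the supremum over $x$ and then over $V\in\mathbb{B}$ yields $d_\infty(T,\widetilde T)\le \varepsilon_r+\gamma V_{\max}\varepsilon_P$.

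I do not expect a substantive obstacle. The only care needed is bookkeeping: I must make sure the TV convention used here matches \eqref{eq:tv-test-function}, which is the sup over $\|f\|_\infty\le 1$. That convention gives the constant $V_{\max}$ rather than $2V_{\max}$, so if a reader uses the $\sup_C|\mu(C)-\nu(C)|$ convention, a factor of $2$ would appear. I would also note that measurability of $x\mapsto \int V\,dP(\cdot\mid x,a)$ is inherited from the kernel structure (Lemma~\ref{lem:bellman-well-defined}), so all the suprema are over well-defined bounded functions.
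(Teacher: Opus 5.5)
Your proposal is correct and follows essentially the same route as the paper's proof: a pointwise bound at each $(x,a)$ that splits into a reward term and a transition term, the total-variation test-function inequality applied with $f=V$, then integration against the common policy kernel and suprema over states and over $V\in\mathbb{B}$. Your remarks on the TV normalization and on reducing the joint kernel $K^{\pi}$ to the conditional-expectation reward and the next-state marginal are accurate bookkeeping that the paper leaves implicit.
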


The two-micro-step series circuit induces a macro Bellman operator on $\mathcal{B}(S)$:
\[
T_{\mathrm{ser}} := T_1\circ T_2:\mathbb{B}_S\to\mathbb{B}_S,
\qquad
\widetilde T_{\mathrm{ser}} := \widetilde T_1\circ \widetilde T_2:\mathbb{B}_S\to\mathbb{B}_S.
\]
Since each micro-step transformer is $\gamma$-Lipschitz in the continuation argument,
$T_{\mathrm{ser}}$ and $\widetilde T_{\mathrm{ser}}$ are $\gamma^2$-contractions.

\begin{lemma}[Series mismatch propagation (depth attenuation)]
\label{lem:series-mismatch}
Let $\epsilon_i:=d_\infty(T_i,\widetilde T_i)$ be the local operator mismatch of module $i$,
where the sup is taken over the corresponding balls $\mathbb{B}_U$ or $\mathbb{B}_S$.
Then
$
d_\infty(T_{\mathrm{ser}},\widetilde T_{\mathrm{ser}})
\;\le\;
\epsilon_1 + \gamma\,\epsilon_2.
$
\end{lemma}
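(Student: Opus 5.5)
The argument is the standard telescoping bound for a composition of two perturbed maps. Fix $V\in\mathbb{B}_S$ and insert the mixed term $T_1(\widetilde T_2V)$:
\[
T_1T_2V-\widetilde T_1\widetilde T_2V
=\big(T_1T_2V-T_1\widetilde T_2V\big)+\big(T_1\widetilde T_2V-\widetilde T_1\widetilde T_2V\big).
\]
We bound the two brackets separately in $\|\cdot\|_\infty$ and then take the supremum over $V\in\mathbb{B}_S$.

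\textbf{First bracket.} The outer transformer $T_1$ is $\gamma$-Lipschitz in its continuation argument. This is the typed version of Lemma~\ref{lem:bellman-contraction}: the reward terms cancel, and integrating against a probability kernel does not increase sup norm. Hence
\[
\|T_1T_2V-T_1\widetilde T_2V\|_\infty\le\gamma\,\|T_2V-\widetilde T_2V\|_\infty\le\gamma\,\epsilon_2,
\]
where the last step uses $V\in\mathbb{B}_S$ and the definition of $\epsilon_2$ as a supremum over $\mathbb{B}_S$. This is exactly where the extra discount factor on the deeper module comes from.

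\textbf{Second bracket.} This is the local mismatch of module $1$ evaluated at the continuation $W:=\widetilde T_2V$. It is bounded by $\epsilon_1$, provided $W$ lies in the ball $\mathbb{B}_U$ over which $\epsilon_1$ is taken as a supremum.

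\textbf{Main point to check.} The only non-routine step is showing $\widetilde T_2V\in\mathbb{B}_U$. For this I would apply Lemma~\ref{lem:bellman-well-defined} to the perturbed module, which assumes the perturbed scalarized rewards are also bounded by $R_{\max}$:
\[
\|\widetilde T_2V\|_\infty\le R_{\max}+\gamma V_{\max}=V_{\max}.
\]
So $\mathbb{B}_U$ is invariant and the step goes through. If the perturbed reward bound were only $R_{\max}+\varepsilon_r^{(2)}$, I would either enlarge the ball on which $\epsilon_1$ is defined, or use the other telescoping order, $\widetilde T_1$ applied after $T_2$. In that order $T_2V\in\mathbb{B}_U$ holds automatically, and one needs $\widetilde T_1$ to be $\gamma$-Lipschitz, which holds for any Bellman transformer. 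That alternative ordering is the safer choice, and I would present it to avoid the invariance issue altogether.

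Adding the two brackets and taking the supremum over $V\in\mathbb{B}_S$ gives $d_\infty(T_{\mathrm{ser}},\widetilde T_{\mathrm{ser}})\le\epsilon_1+\gamma\epsilon_2$.
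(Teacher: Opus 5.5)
Your proposal is correct, and the ordering you say you would actually present, inserting $\widetilde T_1T_2V$ and bounding the pieces by $\epsilon_1$ (using $T_2V\in\mathbb{B}_U$) and by $\gamma\epsilon_2$ (using that $\widetilde T_1$ is $\gamma$-Lipschitz), is exactly the paper's proof. You are also right that your first ordering would additionally need $\widetilde T_2$ to preserve the ball, i.e.\ a perturbed reward bound that the paper never states explicitly; that is why the $\widetilde T_1T_2V$ ordering is the clean one.
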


Let $V^\pi$ and $\widetilde V^\pi$ be the unique fixed points of $T_{\mathrm{ser}}$ and $\widetilde T_{\mathrm{ser}}$,
respectively. These fixed points coincide with the infinite-horizon micro-step value function, but are expressed here
through the macro (two-step) operator to expose depth effects algebraically.

\begin{lemma}[Two-module robustness with depth discount]
\label{lem:two-module-robust}
Let $\epsilon_i$ denote the local mismatch bounds from Lemma~\ref{lem:traj-tv-bound}:
$
\epsilon_i \;:=\; \varepsilon_r^{(i)}+\gamma V_{\max}\varepsilon_P^{(i)}.
$
Then the fixed-point values of the original and perturbed two-module series circuit satisfy
$
\|\widetilde V^\pi - V^\pi\|_\infty
\;\le\;
\frac{\epsilon_1 + \gamma\,\epsilon_2}{1-\gamma^2}.
$
\end{lemma}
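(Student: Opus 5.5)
The bound follows by chaining three earlier results: Lemma~\ref{lem:traj-tv-bound} converts the model mismatch into local operator mismatches, Lemma~\ref{lem:series-mismatch} combines them into a macro mismatch, and Lemma~\ref{lem:fp-stability} turns that into a fixed-point bound with contraction modulus $\kappa=\gamma^2$.

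\textbf{Step 1: local mismatches.} For each module $i$, apply Lemma~\ref{lem:traj-tv-bound} on the appropriate ball ($\mathbb{B}_S$ for $T_2$, $\mathbb{B}_U$ for $T_1$). This gives $d_\infty(T_i,\widetilde T_i)\le \varepsilon_r^{(i)}+\gamma V_{\max}\varepsilon_P^{(i)}=\epsilon_i$. The reward term is controlled pointwise after averaging over $\pi$. The transition term follows from \eqref{eq:tv-test-function} with $f=V$ and $\|V\|_\infty\le V_{\max}$.

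\textbf{Step 2: macro mismatch.} Invoke Lemma~\ref{lem:series-mismatch} to get $d_\infty(T_{\mathrm{ser}},\widetilde T_{\mathrm{ser}})\le\epsilon_1+\gamma\epsilon_2$. If this must be rederived, split
\[
T_1T_2V-\widetilde T_1\widetilde T_2V=(T_1T_2V-T_1\widetilde T_2V)+(T_1\widetilde T_2V-\widetilde T_1\widetilde T_2V).
\]
The first term is at most $\gamma\epsilon_2$, using the $\gamma$-Lipschitz property of $T_1$ (Lemma~\ref{lem:bellman-monotone}). The second term is at most $\epsilon_1$, provided $\widetilde T_2V\in\mathbb{B}_U$.

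\textbf{Step 3: fixed points.} Both $T_{\mathrm{ser}}$ and $\widetilde T_{\mathrm{ser}}$ are $\gamma^2$-contractions, by Theorem~\ref{thm:compositionality-series} applied to each model. Lemma~\ref{lem:fp-stability} with $\kappa=\gamma^2$ then yields $\|\widetilde V^\pi-V^\pi\|_\infty\le(\epsilon_1+\gamma\epsilon_2)/(1-\gamma^2)$.

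\textbf{Main obstacle: invariance of the balls.} Lemma~\ref{lem:fp-stability} requires both operators to be self-maps of $\mathbb{B}_S$, and Step 2 requires $\widetilde T_2$ to map $\mathbb{B}_S$ into $\mathbb{B}_U$. By Lemma~\ref{lem:bellman-well-defined}, $\|\widetilde T_iV\|_\infty\le R_{\max}+\gamma V_{\max}=V_{\max}$. This holds as long as the perturbed rewards obey the same bound $R_{\max}$, which is the standing assumption "uniformly bounded scalarized rewards" applied to both models. If the perturbed rewards are only bounded by $R_{\max}+\varepsilon_r^{(i)}$, I would enlarge the radius to $(R_{\max}+\max_i\varepsilon_r^{(i)})/(1-\gamma)$. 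The mismatch bound is then restated on that ball, and the $V_{\max}$ in $\epsilon_i$ changes accordingly. I would first check that the intended reading is the common bound, under which the argument closes as stated.
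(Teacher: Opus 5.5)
Your proposal is correct and follows the paper's proof step for step: Lemma~\ref{lem:traj-tv-bound} gives the local mismatches, Lemma~\ref{lem:series-mismatch} gives the macro bound $\epsilon_1+\gamma\epsilon_2$, and Lemma~\ref{lem:fp-stability} with $\kappa=\gamma^2$ finishes, while the paper simply builds the ball invariance you flag into the typing $\widetilde T_{\mathrm{ser}}:\mathbb{B}_S\to\mathbb{B}_S$. The only difference is your telescoping order in Step 2: the paper inserts $\widetilde T_1T_2V$ and uses the $\gamma$-Lipschitz property of $\widetilde T_1$, which needs only $T_2V\in\mathbb{B}_U$; that orientation, together with evaluating the discrepancy in Lemma~\ref{lem:fp-stability} at the original fixed point $V^\pi$, requires the perturbed operators only to be $\gamma$-Lipschitz, so your enlarged-radius fallback is not needed.
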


Lemma~\ref{lem:traj-tv-bound} turns local model perturbations into local operator mismatch.
Lemma~\ref{lem:series-mismatch} shows that, under series wiring, the mismatch of the deeper module (here module 2)
is attenuated by an additional factor $\gamma$ because its effect must pass through one extra discounted continuation.
Finally, Lemma~\ref{lem:two-module-robust} applies contraction stability (Lemma~\ref{lem:fp-stability} with $\kappa=\gamma^2$)
to convert macro operator mismatch into a fixed-point deviation bound.
Full proofs are provided in Appendix~\ref{app:example-proofs}.

{
\small
\nocite{*}
\bibliographystyle{unsrtnat}
\bibliography{ref}
}


\appendix

\section{Proofs}
\label{app:proofs}

This appendix proves the claims in the order in which they appear in the main text. All spaces are understood to satisfy the standing standard-Borel/measurability assumptions from Section~\ref{sec:prelim}. We use the sup norm on bounded real-valued value spaces and the pointwise order on quantale-valued value spaces.

\subsection{Proof of the guarded Banach trace facts}
\label{app:banach-trace-axioms}

\begin{lemma}[Uniform fixed point for a guarded parameterized map]
\label{lem:app-param-banach}
Let $(Z,d_Z)$ be complete, and let $g:X\times Z\to Z$ satisfy
\[
d_Z(g(x,z),g(x,z'))\le c d_Z(z,z')
\qquad \forall x\in X,\ z,z'\in Z
\]
for a constant $c\in[0,1)$. Then for every $x\in X$ there is a unique $z_x\in Z$ such that $z_x=g(x,z_x)$.
\end{lemma}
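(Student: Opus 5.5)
The plan is to fix $x\in X$ and apply Banach's fixed-point theorem to the single self-map $g_x:=g(x,\cdot):Z\to Z$. The hypothesis specialized to this $x$ says $d_Z(g_x(z),g_x(z'))\le c\,d_Z(z,z')$ with $c<1$, so $g_x$ is a strict contraction on the complete metric space $(Z,d_Z)$. Note that no metric on $X$ and no continuity of $g$ in $x$ is needed; the parameter is simply frozen.

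For existence I will reproduce the standard Picard argument rather than only cite it, so that the constants are visible for later use in the trace laws.

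\textbf{Step 1.} Pick any $z^{(0)}\in Z$ and set $z^{(n+1)}:=g_x(z^{(n)})$. Induction gives $d_Z(z^{(n+1)},z^{(n)})\le c^n d_Z(z^{(1)},z^{(0)})$.

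\textbf{Step 2.} Summing the geometric tail gives, for $m>n$,
\[
d_Z(z^{(m)},z^{(n)})\le \frac{c^n}{1-c}\,d_Z(z^{(1)},z^{(0)}),
\]
so the sequence is Cauchy. By completeness it converges to some $z_x\in Z$.

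\textbf{Step 3.} Since $g_x$ is $c$-Lipschitz, it is continuous. Passing to the limit in $z^{(n+1)}=g_x(z^{(n)})$ then gives $z_x=g_x(z_x)=g(x,z_x)$.

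For uniqueness, suppose $z=g(x,z)$ and $z'=g(x,z')$. Then $d_Z(z,z')\le c\,d_Z(z,z')$, and since $c<1$ this forces $d_Z(z,z')=0$.

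There is no real obstacle here. The only point to flag is that the contraction constant $c$ is uniform in $x$, which the statement already grants. That uniformity will matter afterwards: it yields the a priori estimate $d_Z(z,z_x)\le d_Z(z,g(x,z))/(1-c)$ for every $z\in Z$. That estimate is the tool I would later use to derive the Lipschitz bound for the trace in Lemma~\ref{lem:context-recursion}.
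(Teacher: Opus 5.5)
Your proof is correct and follows the paper's argument: freeze $x$, note that $g_x=g(x,\cdot)$ is a $c$-contraction on the complete space $Z$, obtain existence from Banach's theorem, and obtain uniqueness from $d_Z(z,z')\le c\,d_Z(z,z')$ with $c<1$. The only difference is that you write out the Picard iteration behind Banach's theorem instead of citing it, which changes nothing; both arguments tacitly take $Z\neq\emptyset$.
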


\begin{proof}
Fix $x\in X$ and define $g_x:Z\to Z$ by $g_x(z)=g(x,z)$. The displayed inequality says exactly that $g_x$ is a contraction with modulus at most $c$. Since $Z$ is complete, Banach's fixed-point theorem gives existence of a fixed point $z_x$. If $z$ and $z'$ are both fixed points, then
\[
d_Z(z,z')=d_Z(g_x(z),g_x(z'))\le c d_Z(z,z').
\]
Subtracting the right-hand side gives $(1-c)d_Z(z,z')\le 0$. Because $1-c>0$ and distances are nonnegative, $d_Z(z,z')=0$, hence $z=z'$. This proves uniqueness.
\end{proof}

\begin{proposition}[Guarded fixed-point identities used in the paper]
On the class of maps for which the displayed feedback operation is guarded, the operation in
Definition~\ref{def:banach-trace} is natural under pre- and post-composition, compatible with products by
side maps, satisfies vanishing for the monoidal unit, and satisfies yanking for the symmetry map.  We do not
claim a global traced-monoidal structure on all partial maps; any additional trace identity must be used only
when both sides of the identity are guarded.
\end{proposition}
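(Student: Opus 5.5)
The plan is to fix the class of guarded maps and check each identity by the same device. For every guarded expression, Lemma~\ref{lem:app-param-banach} gives a unique fixed point, and two expressions whose feedback equations have the same solution set must produce the same trace. Throughout, the hypothesis ``both sides guarded'' is what makes each fixed point unique, so the only work is to exhibit the relevant fixed point on each side and compare outputs.

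\textbf{Naturality.} For $f:X\times Z\to Y\times Z$ guarded and side maps $h:X'\to X$, $k:Y\to Y'$, consider the two composites $(h\times\id_Z)$ followed by $f$, and $f$ followed by $(k\times\id_Z)$. In both, the feedback component is $f_Z(h(x'),z)$ or $f_Z(x,z)$ respectively. Either way it is still $c$-contractive in $z$ uniformly in the parameter, so both composites are guarded.
\begin{itemize}
\item For pre-composition, the fixed point at $x'$ is exactly $z_{h(x')}$ by uniqueness. Hence $\Tr(f\circ(h\times\id))=\Tr(f)\circ h$.
\item For post-composition, the feedback equation is unchanged, so the fixed point is unchanged. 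Applying $k$ to the output gives $\Tr((k\times\id)\circ f)=k\circ\Tr(f)$.
\end{itemize}

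\textbf{Products with side maps (superposing).} For $g:W\to W'$, the map $g\times f$, reindexed so that the feedback coordinate is still $Z$, has feedback component $f_Z(x,z)$, independent of $w$. It is therefore guarded with the same constant, and the fixed point is $z_x$. Reading off the output gives $\Tr(g\times f)=g\times\Tr(f)$. Products are equipped with the product sup metric, and the contraction estimate holds verbatim in that metric.

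\textbf{Vanishing and yanking.} For the unit, $Z=I=\{\star\}$ is trivially a complete one-point space. The unique fixed point is $\star$, so $\Tr^I(f)=f$ modulo the unitor.

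For yanking, take the symmetry $\sigma:Z\times Z\to Z\times Z$, $\sigma(x,z)=(z,x)$. Its feedback component $(x,z)\mapsto x$ is constant in $z$, hence $0$-contractive and guarded. The fixed point is $z_x=x$, and the output is $f_Y(x,z_x)=x$, so $\Tr(\sigma)=\id_Z$.

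The only delicate point, and the main obstacle, is the scope claim: identities such as dinaturality or vanishing for a product feedback space $Z_1\times Z_2$ can fail to be guarded on one side even when the other is. Sliding a map $k:Z'\to Z$ around the loop can destroy the contraction. Likewise, a product feedback map may be jointly contractive while its partial traces are not, or the reverse.

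I would not prove these identities in general. Instead I would state that, when both sides are guarded, both sides are computed from fixed points of conjugate contractions. For vanishing, iterated Banach fixed points (the Beki\'c construction) agree with the joint fixed point whenever each stage is contractive. Uniqueness then forces agreement, which is exactly the restricted form asserted.
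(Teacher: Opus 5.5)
Your proposal is correct and follows essentially the same route as the paper. Each identity is checked by identifying the unique feedback fixed point on each side ($z_{h(x')}$ for naturality, the unchanged $z_x$ for superposing, $\star$ for the unit, $z=x$ for yanking) and reading off the outputs, and your explicit guardedness checks (e.g.\ the $0$-contractive feedback of $\sigma$) fill in details the paper leaves implicit. Your last paragraph goes beyond the statement, whose final sentence is a scope disclaimer rather than a claim to be proved; note also that one side remark there is off: under the product sup metric, joint guardedness on $Z_1\times Z_2$ does imply guardedness of both iterated partial traces with the same constant, so of the two failures you list only the converse (iterated stages guarded, joint map not) can actually occur.
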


\begin{proof}
Write a guarded map $f:X\times Z\to Y\times Z$ as $f=(f_Y,f_Z)$. For each $x$, the trace uses the unique solution $z_x=f_Z(x,z_x)$ and returns $f_Y(x,z_x)$.

Naturality follows by direct substitution. Let $u:X'\to X$ and $v:Y\to Y'$ be maps, and define
\[
f'=(v\times\id_Z)\circ f\circ (u\times\id_Z).
\]
The feedback component of $f'$ at $x'$ is $z\mapsto f_Z(u(x'),z)$. Its unique fixed point is therefore $z_{u(x')}$, the fixed point used by $f$ at $u(x')$. Hence
\[
\Tr^Z(f')(x')
= v\big(f_Y(u(x'),z_{u(x')})\big)
= (v\circ \Tr^Z(f)\circ u)(x').
\]

Compatibility with products is also pointwise. If $f:X\times Z\to Y\times Z$ and $h:X'\to Y'$ does not involve the feedback variable, then the feedback equation for $f\times h$ is the same equation $z=f_Z(x,z)$; the traced output is $(f_Y(x,z_x),h(x'))$, which is $(\Tr^Z f\times h)(x,x')$.

Vanishing for the unit follows because the monoidal unit is a singleton. If $Z=I=\{\star\}$, the only possible feedback point is $\star$, so tracing simply removes the unit coordinate:
\[
\Tr^I(f)(x)=\pi_Y f(x,\star).
\]

Yanking is the special case of the symmetry map $\sigma_{Z,Z}:Z\times Z\to Z\times Z$, $\sigma(z_1,z_2)=(z_2,z_1)$. Tracing the second wire gives the equation $z_2=z_1$, whose unique solution is $z_2=z_1$. The output is therefore $z_1$, so $\Tr^Z(\sigma)=\id_Z$.

These are exactly the guarded fixed-point identities used in the main text.
\end{proof}

\subsection{Proof of Lemma~\ref{lem:value-bounded}}
\begin{proof}
Fix a stationary policy $\pi$ and an initial state $s_0=s$. Let
\[
G_n:=\sum_{t=0}^{n}\gamma^t r(s_t,a_t)
\]
be the $n$-step discounted return. Since $|r(s,a)|\le R_{\max}$ and $\gamma\in(0,1)$,
\[
|G_n|
\le \sum_{t=0}^{n}\gamma^t |r(s_t,a_t)|
\le R_{\max}\sum_{t=0}^{n}\gamma^t
=R_{\max}\frac{1-\gamma^{n+1}}{1-\gamma}
\le \frac{R_{\max}}{1-\gamma}.
\]
Also, for $m>n$,
\[
|G_m-G_n|
\le R_{\max}\sum_{t=n+1}^{m}\gamma^t
\le R_{\max}\frac{\gamma^{n+1}}{1-\gamma}.
\]
The right-hand side tends to zero, so $(G_n)$ is Cauchy in $L^\infty$ and converges almost surely and in $L^\infty$ to
\[
G:=\sum_{t\ge0}\gamma^t r(s_t,a_t).
\]
The same uniform bound passes to the limit: $|G|\le R_{\max}/(1-\gamma)$ almost surely. Therefore
\[
|V^\pi(s)|
=\left|\E^\pi[G\mid s_0=s]\right|
\le \E^\pi[|G|\mid s_0=s]
\le \frac{R_{\max}}{1-\gamma}.
\]
Taking the supremum over $s$ gives $\|V^\pi\|_\infty\le R_{\max}/(1-\gamma)=V_{\max}$.
\end{proof}

\subsection{Proof of Lemma~\ref{lem:BS-banach}}
\begin{proof}
Let $(V_n)_{n\ge1}$ be a Cauchy sequence in $(\mathcal{B}(S),\|\cdot\|_\infty)$. For each fixed $s\in S$,
\[
|V_n(s)-V_m(s)|\le \|V_n-V_m\|_\infty,
\]
so $(V_n(s))_{n\ge1}$ is Cauchy in $\R$. Define $V(s):=\lim_{n\to\infty}V_n(s)$.

We first show that $V$ is bounded. Since $(V_n)$ is Cauchy, choose $N$ such that $\|V_n-V_N\|_\infty\le1$ for all $n\ge N$. Then for all $s$ and all $n\ge N$,
\[
|V_n(s)|\le |V_N(s)|+1\le \|V_N\|_\infty+1.
\]
Taking $n\to\infty$ gives $|V(s)|\le\|V_N\|_\infty+1$, so $V$ is bounded.

We next show measurability. A pointwise limit of measurable real-valued functions is measurable: for example,
\[
\{s:V(s)<a\}=\bigcup_{q<a,\ q\in\mathbb{Q}}\bigcup_{N=1}^{\infty}\bigcap_{n\ge N}\{s:V_n(s)<q\},
\]
which is measurable because it is built from countable unions and intersections of measurable sets. Hence $V\in\mathcal{B}(S)$.

Finally, fix $\varepsilon>0$ and choose $N$ such that $\|V_n-V_m\|_\infty<\varepsilon$ for all $m,n\ge N$. For $n\ge N$ and every $s$,
\[
|V_n(s)-V(s)|=\lim_{m\to\infty}|V_n(s)-V_m(s)|\le\varepsilon.
\]
Taking the supremum over $s$ gives $\|V_n-V\|_\infty\le\varepsilon$ for all $n\ge N$. Thus $V_n\to V$ in sup norm, so $\mathcal{B}(S)$ is complete.
\end{proof}

\subsection{Auxiliary measurability and total-variation facts}
\begin{lemma}[Kernel integration]
\label{lem:app-kernel-integration}
Let $k:X\to\Delta(Y)$ be a Markov kernel. If $f:Y\to\R$ is bounded and measurable, then
\[
x\mapsto \int_Y f(y)k(dy\mid x)
\]
is measurable and bounded by $\|f\|_\infty$ in absolute value.
\end{lemma}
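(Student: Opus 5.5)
The lemma has two claims, measurability and the bound $\|f\|_\infty$, and I will prove them separately. For measurability I will use the standard monotone-class (or simple-function approximation) argument. The only input is the defining property of a Markov kernel: $x\mapsto k(C\mid x)$ is measurable for every measurable $C\subseteq Y$.

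\emph{Step 1 (indicators and simple functions).} For $f=\mathbf 1_C$ the integral is exactly $k(C\mid x)$, which is measurable by definition. By linearity of the integral, the claim then holds for every simple function $f=\sum_{i=1}^n c_i\mathbf 1_{C_i}$, since finite linear combinations of measurable functions are measurable.

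\emph{Step 2 (bounded measurable functions).} Given bounded measurable $f$, choose simple functions $f_n$ with $|f_n|\le|f|\le\|f\|_\infty$ and $f_n\to f$ pointwise. One concrete choice is to truncate the dyadic approximations of $f^+$ and $f^-$ separately. For each fixed $x$, $k(\cdot\mid x)$ is a probability measure, so the constant $\|f\|_\infty$ is an integrable dominating function. Dominated convergence then gives $\int f_n\,dk(\cdot\mid x)\to\int f\,dk(\cdot\mid x)$ for every $x$. The map $x\mapsto\int f\,k(dy\mid x)$ is therefore a pointwise limit of measurable real functions, hence measurable, by the same countable union/intersection argument used in the proof of Lemma~\ref{lem:BS-banach}.

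\emph{Step 3 (bound).} For each $x$,
\[
\Big|\int_Y f(y)\,k(dy\mid x)\Big|\le\int_Y|f(y)|\,k(dy\mid x)\le\|f\|_\infty\,k(Y\mid x)=\|f\|_\infty,
\]
because $k(\cdot\mid x)$ has total mass one.

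The only step needing care is Step 2. The approximating sequence must be uniformly bounded so that dominated convergence applies at every $x$ simultaneously, using the single dominating constant $\|f\|_\infty$; truncating the standard approximations guarantees this. Everything else is routine.
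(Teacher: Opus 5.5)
Your proof is correct and follows essentially the same route as the paper: indicators, then simple functions by linearity, then a pointwise-limit argument for measurability, with the bound $\|f\|_\infty$ coming from $k(\cdot\mid x)$ having total mass one. The only cosmetic difference is that you pass to the limit with dominated convergence on signed simple approximants, whereas the paper applies monotone convergence to nonnegative $f$ and then splits $f=f^+-f^-$.
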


\begin{proof}
For $f=\mathbf{1}_B$, the integral is $k(B\mid x)$, measurable by definition of a kernel. By linearity, the claim holds for nonnegative simple functions. For a nonnegative bounded measurable $f$, choose simple $f_n\uparrow f$; the corresponding integrals are measurable and converge pointwise to the integral of $f$ by monotone convergence. A pointwise limit of measurable functions is measurable. For general bounded $f$, write $f=f^+-f^-$. The bound follows from
\[
\left|\int f\,dk(x)\right|\le \int |f|\,dk(x)\le\|f\|_\infty.
\]
\end{proof}

\begin{lemma}[Total-variation test-function bound]
\label{lem:app-tv-bound}
With the convention
\[
\TV(\mu,\nu)=\sup_{\|f\|_\infty\le1}\left|\int f\,d\mu-\int f\,d\nu\right|,
\]
for every bounded measurable $f$,
\[
\left|\int f\,d\mu-\int f\,d\nu\right|\le \|f\|_\infty\TV(\mu,\nu).
\]
\end{lemma}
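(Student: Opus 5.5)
The plan is to prove the bound directly from the variational definition of $\TV$ by rescaling the test function. I will treat separately the degenerate case $\|f\|_\infty=0$ and the generic case $\|f\|_\infty>0$.

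\emph{Degenerate case.} If $\|f\|_\infty=0$, then $f\equiv0$. Both integrals vanish, so the left-hand side is $0$ and the inequality holds trivially.

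\emph{Generic case.} Otherwise, set $g:=f/\|f\|_\infty$. This $g$ is bounded and measurable with $\|g\|_\infty\le1$, so it is admissible in the supremum defining $\TV(\mu,\nu)$. Hence
\[
\left|\int g\,d\mu-\int g\,d\nu\right|\le\TV(\mu,\nu).
\]
Next, use linearity of the integral. Both integrals are finite because $f$ is bounded and $\mu,\nu$ are probability measures. Linearity gives $\int f\,d\mu-\int f\,d\nu=\|f\|_\infty\bigl(\int g\,d\mu-\int g\,d\nu\bigr)$. Multiplying the displayed inequality by the positive constant $\|f\|_\infty$ then yields the claim.

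\emph{Where care is needed.} There is no real obstacle; the argument is essentially bookkeeping. The only point to check is that the supremum in the definition ranges over measurable $f$, so that the rescaled $g$ is admissible. This holds because a constant multiple of a measurable function is measurable.

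I also need to fix how $\|f\|_\infty$ is read. If it denotes the pointwise sup norm, as in $\mathcal{B}(S)$, then $|g|\le1$ everywhere. If instead an essential-sup reading were intended, I would first modify $f$ on a null set. However, the paper uses the pointwise supremum throughout, so I will adopt that convention and no modification is required.
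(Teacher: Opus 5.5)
Your proof is correct and matches the paper's argument: dispose of the case $\|f\|_\infty=0$, rescale to $g=f/\|f\|_\infty$, which is admissible in the supremum defining $\TV$, then multiply back by $\|f\|_\infty$. Your side remarks on the measurability of $g$ and on reading $\|\cdot\|_\infty$ as the pointwise sup norm are accurate bookkeeping that the paper leaves implicit.
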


\begin{proof}
If $\|f\|_\infty=0$, the claim is immediate. Otherwise define $g=f/\|f\|_\infty$. Then $\|g\|_\infty\le1$, so by the definition of $\TV$,
\[
\left|\int g\,d\mu-\int g\,d\nu\right|\le\TV(\mu,\nu).
\]
Multiplying by $\|f\|_\infty$ proves the result.
\end{proof}

\subsection{Proof of Lemma~\ref{lem:bellman-well-defined}}
\begin{proof}
Fix $V\in\mathcal{B}(S)$ and define
\[
f(s',r):=\rho(r)+\gamma V(s').
\]
The map $f$ is measurable because $\rho$ and $V$ are measurable. Under the uniform boundedness hypothesis,
$|\rho(r)|\le R_{\max}$ for all $r\in R$. Hence, for every $(s',r)\in S\otimes R$,
\[
|f(s',r)|\le R_{\max}+\gamma\|V\|_\infty
\]
Thus the integrand is globally bounded and measurable. The integral defining
$(\mathcal{T}_{\mathsf M,\pi}V)(s)$ is therefore finite and absolutely bounded by the same constant.

By Lemma~\ref{lem:app-kernel-integration}, the function $s\mapsto(\mathcal{T}_{\mathsf M,\pi}V)(s)$ is measurable. Taking the supremum over $s$ gives
\[
\|\mathcal{T}_{\mathsf M,\pi}V\|_\infty
\le R_{\max}+\gamma\|V\|_\infty.
\]
Thus $\mathcal{T}_{\mathsf M,\pi}$ maps $\mathcal{B}(S)$ to itself. If $\|V\|_\infty\le V_{\max}:=R_{\max}/(1-\gamma)$, then
\[
R_{\max}+\gamma\|V\|_\infty
\le R_{\max}+\gamma\frac{R_{\max}}{1-\gamma}
=\frac{R_{\max}}{1-\gamma}=V_{\max},
\]
so the closed ball of radius $V_{\max}$ is invariant.
\end{proof}

\subsection{Proof of Lemma~\ref{lem:bellman-monotone}}
\begin{proof}
If $V\le W$ pointwise, then $\rho(r)+\gamma V(s')\le \rho(r)+\gamma W(s')$ for every $(s',r)$ because $\gamma>0$. Integration against the probability kernel $K^\pi(\cdot\mid s)$ preserves order, so $(\mathcal{T}_{\mathsf M,\pi}V)(s)\le(\mathcal{T}_{\mathsf M,\pi}W)(s)$ for every $s$.

For a constant $c$,
\begin{align*}
(\mathcal{T}_{\mathsf M,\pi}(V+c))(s)
&=\int [\rho(r)+\gamma(V(s')+c)]K^\pi(d(s',r)\mid s)\\
&=\int [\rho(r)+\gamma V(s')]K^\pi(d(s',r)\mid s)+\gamma c\int 1\,K^\pi(d(s',r)\mid s)\\
&=(\mathcal{T}_{\mathsf M,\pi}V)(s)+\gamma c.
\end{align*}
Finally,
\begin{align*}
|\mathcal{T}_{\mathsf M,\pi}V(s)-\mathcal{T}_{\mathsf M,\pi}W(s)|
&=\left|\int \gamma(V(s')-W(s'))K^\pi(d(s',r)\mid s)\right|\\
&\le \gamma\int |V(s')-W(s')|K^\pi(d(s',r)\mid s)\\
&\le \gamma\|V-W\|_\infty.
\end{align*}
Taking the supremum over $s$ proves the Lipschitz bound.
\end{proof}

\subsection{Proof of Proposition~\ref{prop:traj-fp}}
\begin{proof}
Let
\[
G_n:=\sum_{t=0}^{n}\gamma^t\rho(r_t)
\]
be the finite-horizon scalarized return generated by the closed-loop kernel $K^\pi$.
By the uniform boundedness assumption,
\[
|G_n|
\le
\sum_{t=0}^{n}\gamma^t |\rho(r_t)|
\le
R_{\max}\sum_{t=0}^{n}\gamma^t
\le
\frac{R_{\max}}{1-\gamma}.
\]
Moreover, for $m>n$,
\[
|G_m-G_n|
\le
R_{\max}\sum_{t=n+1}^{m}\gamma^t
\le
\frac{R_{\max}\gamma^{n+1}}{1-\gamma}.
\]
Hence $(G_n)$ is Cauchy in $L^\infty$ and converges almost surely and in $L^\infty$ to
\[
G:=\sum_{t\ge0}\gamma^t\rho(r_t),
\]
with $|G|\le R_{\max}/(1-\gamma)$ almost surely. Therefore $V^\pi$ is well-defined and
\[
\|V^\pi\|_\infty\le \frac{R_{\max}}{1-\gamma}.
\]

Now split the infinite discounted return into its first reward and tail:
\[
G=\rho(r_0)+\gamma\sum_{t\ge0}\gamma^t\rho(r_{t+1}).
\]
Conditioning on $(s_1,r_0)$ and using the Markov property under $K^\pi$, the conditional expectation of the tail given $s_1$ is $V^\pi(s_1)$. Hence, for every initial state $s$,
\begin{align*}
V^\pi(s)
&=\E[\rho(r_0)+\gamma V^\pi(s_1)\mid s_0=s]\\
&=\int_{S\otimes R}[\rho(r)+\gamma V^\pi(s')]K^\pi(d(s',r)\mid s)\\
&=(\mathcal{T}_{\mathsf M,\pi}V^\pi)(s).
\end{align*}
Thus $V^\pi$ is a fixed point.

By Lemma~\ref{lem:bellman-contraction}, the same operator is a $\gamma$-contraction in the sup norm. Since $\mathcal{B}(S)$ is complete by Lemma~\ref{lem:BS-banach}, Banach's fixed-point theorem implies that the fixed point is unique. Hence the trajectory value and the fixed-point value agree.
\end{proof}

\subsection{Proof of Lemma~\ref{lem:bellman-contraction}}
\begin{proof}
This is the Lipschitz part of Lemma~\ref{lem:bellman-monotone}. For completeness, for all $s$,
\[
|\mathcal{T}_{\mathsf M,\pi}V(s)-\mathcal{T}_{\mathsf M,\pi}W(s)|
\le\gamma\|V-W\|_\infty.
\]
Taking the supremum over $s$ gives
$\|\mathcal{T}_{\mathsf M,\pi}V-\mathcal{T}_{\mathsf M,\pi}W\|_\infty\le\gamma\|V-W\|_\infty$.
Since $0<\gamma<1$, this is a contraction.
\end{proof}

\subsection{Proof of Theorem~\ref{thm:policy-eval-vi}}
\begin{proof}
By Lemma~\ref{lem:bellman-well-defined}, $\mathcal{T}_{\mathsf M,\pi}$ is a self-map of $\mathcal{B}(S)$, and by Lemma~\ref{lem:bellman-contraction} it is a $\gamma$-contraction. Since $\mathcal{B}(S)$ is complete, Banach's theorem gives a unique fixed point $V^\pi$.

For the convergence bound, use induction. Since $V^\pi=\mathcal{T}_{\mathsf M,\pi}V^\pi$,
\[
\|V_{k+1}-V^\pi\|_\infty
=\|\mathcal{T}_{\mathsf M,\pi}V_k-\mathcal{T}_{\mathsf M,\pi}V^\pi\|_\infty
\le\gamma\|V_k-V^\pi\|_\infty.
\]
Iterating this inequality gives
\[
\|V_k-V^\pi\|_\infty\le\gamma^k\|V_0-V^\pi\|_\infty.
\]
\end{proof}

\subsection{Proof of Theorem~\ref{thm:bellman-trace}}
\begin{proof}
The feedback component of $f_\pi$ is the map
\[
V\mapsto \pi_{\mathcal{B}(S)}f_\pi(\star,V)=\mathcal{T}_{\mathsf M,\pi}V.
\]
By Lemma~\ref{lem:bellman-contraction}, this map is $\gamma$-contractive on the complete metric space $\mathcal{B}(S)$. Hence $f_\pi$ is guarded and Definition~\ref{def:banach-trace} applies. Let $z_\star$ be the unique fixed point of the feedback component. Then
\[
z_\star=\mathcal{T}_{\mathsf M,\pi}z_\star.
\]
By Theorem~\ref{thm:policy-eval-vi}, the unique fixed point of $\mathcal{T}_{\mathsf M,\pi}$ is $V^\pi$, so $z_\star=V^\pi$. The traced output is
\[
\Tr^{\mathcal{B}(S)}_{I,\mathcal{B}(S)}(f_\pi)(\star)
=\pi_{\mathcal{B}(S)}f_\pi(\star,z_\star)
=\mathcal{T}_{\mathsf M,\pi}z_\star
=z_\star
=V^\pi.
\]
This is exactly the Bellman fixed point, which equals the trajectory value by Proposition~\ref{prop:traj-fp}.
\end{proof}

\subsection{Proof of Theorem~\ref{thm:compositionality-series}}
\begin{proof}
For $V\in\mathcal{B}(S)$,
\[
(\mathcal{T}_2V)(u)=\int_{S\otimes R_2}[\rho_2(r_2)+\gamma V(s_2)]K_2^{\pi_U}(d(s_2,r_2)\mid u).
\]
Applying $\mathcal{T}_1$ gives
\begin{align*}
(\mathcal{T}_1\mathcal{T}_2V)(s)
&=\int_{U\otimes R_1}\left[\rho_1(r_1)+\gamma(\mathcal{T}_2V)(u_1)\right]K_1^{\pi_S}(d(u_1,r_1)\mid s)\\
&=\int_{U\otimes R_1}\rho_1(r_1)K_1^{\pi_S}(d(u_1,r_1)\mid s)\\
&\quad+\gamma\int_{U\otimes R_1}\int_{S\otimes R_2}[\rho_2(r_2)+\gamma V(s_2)]K_2^{\pi_U}(d(s_2,r_2)\mid u_1)K_1^{\pi_S}(d(u_1,r_1)\mid s).
\end{align*}
This is precisely the expectation generated by first sampling $(u_1,r_1)$ from $K_1^{\pi_S}(\cdot\mid s)$ and then sampling $(s_2,r_2)$ from $K_2^{\pi_U}(\cdot\mid u_1)$:
\[
(\mathcal{T}_{\mathrm{series}}V)(s)=\E[\rho_1(r_1)+\gamma\rho_2(r_2)+\gamma^2V(s_2)\mid s_0=s].
\]
Thus $\mathcal{T}_{\mathrm{series}}=\mathcal{T}_1\circ\mathcal{T}_2$.

For the contraction claim, use the one-step Lipschitz bounds:
\[
\|\mathcal{T}_1\mathcal{T}_2V-\mathcal{T}_1\mathcal{T}_2W\|_\infty
\le \gamma\|\mathcal{T}_2V-\mathcal{T}_2W\|_\infty
\le \gamma^2\|V-W\|_\infty.
\]
\end{proof}

\subsection{Proof of Theorem~\ref{thm:compositionality-parallel}}
\begin{proof}
For a separable continuation $V_1\oplus V_2$, the parallel Bellman backup is
\begin{align*}
&\mathcal{T}_\otimes(V_1\oplus V_2)(s_1,s_2)\\
&=\int [\rho_1(r_1)+\rho_2(r_2)+\gamma V_1(s_1')+\gamma V_2(s_2')]
(K_1^{\pi_1}\otimes K_2^{\pi_2})(d(s_1',r_1),d(s_2',r_2)\mid s_1,s_2).
\end{align*}
Because the kernel is the product kernel and the integrand is a sum of a function of subsystem $1$ and a function of subsystem $2$, the integral splits:
\begin{align*}
\mathcal{T}_\otimes(V_1\oplus V_2)(s_1,s_2)
&=\int[\rho_1(r_1)+\gamma V_1(s_1')]K_1^{\pi_1}(d(s_1',r_1)\mid s_1)\\
&\quad+\int[\rho_2(r_2)+\gamma V_2(s_2')]K_2^{\pi_2}(d(s_2',r_2)\mid s_2)\\
&=(\mathcal{T}_1V_1)(s_1)+(\mathcal{T}_2V_2)(s_2)\\
&=((\mathcal{T}_1V_1)\oplus(\mathcal{T}_2V_2))(s_1,s_2).
\end{align*}
This proves invariance of the separable subspace.

If $V^{\pi_i}$ is the fixed point of $\mathcal{T}_i$, then the identity above gives
\[
\mathcal{T}_\otimes(V^{\pi_1}\oplus V^{\pi_2})=V^{\pi_1}\oplus V^{\pi_2}.
\]
The parallel Bellman operator is a $\gamma$-contraction on $\mathcal{B}(S_1\otimes S_2)$, hence has a unique fixed point. Therefore the unique fixed point is $V^{\pi_1}\oplus V^{\pi_2}$.
\end{proof}

\subsection{Proof of Lemma~\ref{lem:fp-stability}}
\begin{proof}
Since $V^\star=TV^\star$ and ${V^\star}'=T'{V^\star}'$,
\begin{align*}
\|V^\star-{V^\star}'\|_\infty
&=\|TV^\star-T'{V^\star}'\|_\infty\\
&\le \|TV^\star-T{V^\star}'\|_\infty+\|T{V^\star}'-T'{V^\star}'\|_\infty\\
&\le \kappa\|V^\star-{V^\star}'\|_\infty+d_\infty^{(S,M)}(T,T').
\end{align*}
Moving the first term to the left gives
\[
(1-\kappa)\|V^\star-{V^\star}'\|_\infty\le d_\infty^{(S,M)}(T,T'),
\]
and division by $1-\kappa>0$ proves the bound.
\end{proof}

\subsection{Proof of Lemma~\ref{lem:context-recursion}}
\label{app:context-proofs}
\begin{proof}
The proof is by structural induction on the certified linear one-hole context.

For the hole context, plugging in $\mathsf M$ or $\mathsf N$ gives exactly the two primitive transformers. Therefore the gain is $L([\cdot])=1$.

For left composition, compare $T_0\circ T_{\mathsf M}$ and $T_0\circ T_{\mathsf N}$. If $T_0$ has Lipschitz constant $\operatorname{Lip}(T_0)$ on the relevant invariant ball, then
\[
d(T_0T_{\mathsf M},T_0T_{\mathsf N})
\le
\operatorname{Lip}(T_0)d(T_{\mathsf M},T_{\mathsf N}).
\]
Thus the gain is multiplied by at most $\operatorname{Lip}(T_0)$.

For right composition, compare $T_{\mathsf M}\circ T_0$ and $T_{\mathsf N}\circ T_0$. Since $T_0$ maps the source ball into the discrepancy ball on which the hole-side operators are compared,
\[
\sup_V\|T_{\mathsf M}(T_0V)-T_{\mathsf N}(T_0V)\|_\infty
\le
\sup_U\|T_{\mathsf M}U-T_{\mathsf N}U\|_\infty.
\]
Hence right composition by a fixed well-typed transformer does not increase the gain.

For tensor side contexts, equip the product value space with the product sup metric. Tensoring by a fixed side transformer does not introduce a second occurrence of the hole. Therefore the discrepancy between the two plugged tensor expressions is exactly the discrepancy on the component containing the hole, possibly after canonical reindexing. Thus the gain is not increased. Any subsequent additive or nonlinear scalarization must be counted separately as a fixed Lipschitz post-composition, as required by Definition~\ref{def:admitted-linear-context}.

It remains to justify the trace estimates. First consider a single guarded feedback map
\[
F:X\times Z\to Y\times Z,\qquad F=(F_Y,F_Z),
\]
satisfying the Lipschitz assumptions in the statement. For each $x$, let $z_x$ be the unique solution of
\[
z_x=F_Z(x,z_x).
\]
For $x,x'\in X$,
\begin{align*}
d_Z(z_x,z_{x'})
&=d_Z(F_Z(x,z_x),F_Z(x',z_{x'}))\\
&\le d_Z(F_Z(x,z_x),F_Z(x,z_{x'}))
+d_Z(F_Z(x,z_{x'}),F_Z(x',z_{x'}))\\
&\le \alpha_Z d_Z(z_x,z_{x'})
+\eta_Z d_X(x,x').
\end{align*}
Since $\alpha_Z<1$,
\[
d_Z(z_x,z_{x'})
\le
\frac{\eta_Z}{1-\alpha_Z}d_X(x,x').
\]
Using the output estimate,
\begin{align*}
d_Y(\Tr^Z(F)(x),\Tr^Z(F)(x'))
&=d_Y(F_Y(x,z_x),F_Y(x',z_{x'}))\\
&\le d_Y(F_Y(x,z_x),F_Y(x,z_{x'}))
+d_Y(F_Y(x,z_{x'}),F_Y(x',z_{x'}))\\
&\le \beta_Z d_Z(z_x,z_{x'})
+a_X d_X(x,x')\\
&\le
\left(a_X+\frac{\beta_Z\eta_Z}{1-\alpha_Z}\right)d_X(x,x').
\end{align*}
This proves the displayed Lipschitz bound for the traced map.

Now compare two pre-trace maps
\[
F_{\mathsf M},F_{\mathsf N}:X\times Z\to Y\times Z
\]
induced after plugging $\mathsf M$ and $\mathsf N$ into the same certified trace context. Let
\[
\delta:=\sup_{x,z}d_{Y\times Z}\big(F_{\mathsf M}(x,z),F_{\mathsf N}(x,z)\big).
\]
For a fixed $x$, let $z_M$ and $z_N$ be the corresponding feedback fixed points. By the shared uniform guardedness certificate, the $Z$-component of $F_{\mathsf N}$ is $\alpha_Z$-contractive in the feedback variable. Therefore
\begin{align*}
d_Z(z_M,z_N)
&=d_Z(F_{\mathsf M,Z}(x,z_M),F_{\mathsf N,Z}(x,z_N))\\
&\le d_Z(F_{\mathsf M,Z}(x,z_M),F_{\mathsf N,Z}(x,z_M))
+d_Z(F_{\mathsf N,Z}(x,z_M),F_{\mathsf N,Z}(x,z_N))\\
&\le \delta+\alpha_Z d_Z(z_M,z_N).
\end{align*}
Thus
\[
d_Z(z_M,z_N)\le \frac{\delta}{1-\alpha_Z}.
\]
If the output component is $\beta_Z$-Lipschitz in the feedback variable, then
\begin{align*}
d_Y(\Tr F_{\mathsf M}(x),\Tr F_{\mathsf N}(x))
&=d_Y(F_{\mathsf M,Y}(x,z_M),F_{\mathsf N,Y}(x,z_N))\\
&\le \delta+\beta_Z d_Z(z_M,z_N)\\
&\le
\left(1+\frac{\beta_Z}{1-\alpha_Z}\right)\delta.
\end{align*}
Therefore a trace node amplifies the pre-trace discrepancy by at most
\[
1+\frac{\beta_Z}{1-\alpha_Z}.
\]
Combining this estimate with the induction hypothesis for the pre-trace context gives the stated recursive rule for $L(\Tr^Z(\mathcal D))$.

The contraction modulus $\kappa(\mathcal C)$ is obtained by the same structural induction on the certified Lipschitz constants: composition multiplies constants, tensoring takes the maximum under the product sup metric, and a closed trace node is admitted only when its certified external modulus is strictly smaller than one. Hence every admitted closed context has a structurally certified modulus $\kappa(\mathcal C)<1$.
\end{proof}

\subsection{Proof of Theorem~\ref{thm:congruence}}
\begin{proof}
By Lemma~\ref{lem:context-recursion}, plugging $\mathsf M$ and $\mathsf N$ into the same certified admitted linear context gives closed-loop operators satisfying
\[
d_\infty^{(S,M)}(\mathcal{T}_{\mathcal C[\mathsf M]},\mathcal{T}_{\mathcal C[\mathsf N]})
\le
L(\mathcal C)
d_\infty^{X\leftarrow Y}(\mathcal{T}_{\mathsf M},\mathcal{T}_{\mathsf N})
\le
L(\mathcal C)\varepsilon.
\]
The same certificate gives a common contraction modulus $\kappa(\mathcal C)<1$ for the two closed-loop operators. Applying Lemma~\ref{lem:fp-stability} with
$T=\mathcal{T}_{\mathcal C[\mathsf M]}$ and
$T'=\mathcal{T}_{\mathcal C[\mathsf N]}$ yields
\[
\|V_{\mathcal C[\mathsf M]}-V_{\mathcal C[\mathsf N]}\|_\infty
\le
\frac{1}{1-\kappa(\mathcal C)}
d_\infty^{(S,M)}(\mathcal{T}_{\mathcal C[\mathsf M]},\mathcal{T}_{\mathcal C[\mathsf N]})
\le
\frac{L(\mathcal C)}{1-\kappa(\mathcal C)}\varepsilon.
\]
\end{proof}

\subsection{Proof of Lemma~\ref{lem:intertwine}}
\begin{proof}
Fix $s\in S$ and $\widehat V\in\mathcal{B}(\widehat S)$. Since $\pi(a\mid s)=\widehat\pi(a\mid\phi(s))$,
\begin{align*}
(\mathcal{T}^{\pi}(\phi^*\widehat V))(s)
&=\int_A\left[r(s,a)+\gamma\int_S\widehat V(\phi(s'))P(ds'\mid s,a)\right]\pi(da\mid s)\\
&=\int_A\left[\widehat r(\phi(s),a)+\gamma\int_{\widehat S}\widehat V(\widehat s')\widehat P(d\widehat s'\mid \phi(s),a)\right]\widehat\pi(da\mid\phi(s)).
\end{align*}
The first equality is the concrete Bellman backup applied to the pulled-back value. In the second equality, reward preservation gives $r(s,a)=\widehat r(\phi(s),a)$, and transition commutation gives
\[
\int_S \widehat V(\phi(s'))P(ds'\mid s,a)
=\int_{\widehat S}\widehat V(\widehat s')\widehat P(d\widehat s'\mid\phi(s),a).
\]
The final expression is $(\widehat{\mathcal T}^{\widehat\pi}\widehat V)(\phi(s))=(\phi^*\widehat{\mathcal T}^{\widehat\pi}\widehat V)(s)$.
\end{proof}

\subsection{Proof of Theorem~\ref{thm:exact-hom}}
\begin{proof}
Let $\widehat V^{\widehat\pi}$ be the unique fixed point of $\widehat{\mathcal T}^{\widehat\pi}$. By Lemma~\ref{lem:intertwine},
\[
\mathcal{T}^{\pi}(\phi^*\widehat V^{\widehat\pi})
=\phi^*(\widehat{\mathcal T}^{\widehat\pi}\widehat V^{\widehat\pi})
=\phi^*\widehat V^{\widehat\pi}.
\]
Thus $\phi^*\widehat V^{\widehat\pi}$ is a fixed point of the concrete policy-evaluation operator $\mathcal{T}^{\pi}$. The concrete operator is a $\gamma$-contraction, so its fixed point is unique. Therefore $V^\pi=\phi^*\widehat V^{\widehat\pi}$, i.e. $V^\pi(s)=\widehat V^{\widehat\pi}(\phi(s))$.
\end{proof}

\subsection{Proof of Lemma~\ref{lem:opt-intertwine}}
\begin{proof}
Fix $s\in S$. By the homomorphism assumptions, for each action $a$,
\[
r(s,a)+\gamma\int_S \widehat V(\phi(s'))P(ds'\mid s,a)
=\widehat r(\phi(s),a)+\gamma\int_{\widehat S}\widehat V(\widehat s')\widehat P(d\widehat s'\mid\phi(s),a).
\]
Taking the supremum over the same action set $A$ on both sides gives
\begin{align*}
(\mathcal{T}^\star(\phi^*\widehat V))(s)
&=\sup_{a\in A}\left[r(s,a)+\gamma\int_S \widehat V(\phi(s'))P(ds'\mid s,a)\right]\\
&=\sup_{a\in A}\left[\widehat r(\phi(s),a)+\gamma\int_{\widehat S}\widehat V(\widehat s')\widehat P(d\widehat s'\mid\phi(s),a)\right]\\
&=(\widehat{\mathcal T}^\star\widehat V)(\phi(s)).
\end{align*}
This is exactly $\mathcal{T}^\star(\phi^*\widehat V)=\phi^*(\widehat{\mathcal T}^\star\widehat V)$.
\end{proof}

\subsection{Proof of Corollary~\ref{cor:opt-preserve}}
\begin{proof}
Let $\widehat V^\star$ be the unique fixed point of $\widehat{\mathcal T}^\star$. Lemma~\ref{lem:opt-intertwine} gives
\[
\mathcal{T}^\star(\phi^*\widehat V^\star)=\phi^*(\widehat{\mathcal T}^\star\widehat V^\star)=\phi^*\widehat V^\star.
\]
Thus $\phi^*\widehat V^\star$ is a fixed point of $\mathcal{T}^\star$. The optimality operator is a $\gamma$-contraction under Assumption~\ref{ass:opt-wellposed}, so this fixed point is unique. Hence $V^\star=\phi^*\widehat V^\star$.
\end{proof}

\subsection{Proof of Theorem~\ref{thm:approx-hom}}
\begin{proof}
Let $T:=\mathcal{T}^\pi$ and $\widehat T:=\widehat{\mathcal T}^{\widehat\pi}$. For any $\widehat V$ with $\|\widehat V\|_\infty\le V_{\max}$ and any $s\in S$,
\begin{align*}
&\left|T(\phi^*\widehat V)(s)-\phi^*(\widehat T\widehat V)(s)\right|\\
&\le \int_A\big|r(s,a)-\widehat r(\phi(s),a)\big|\pi(da\mid s)\\
&\quad+\gamma\int_A\left|\int_S\widehat V(\phi(s'))P(ds'\mid s,a)-\int_{\widehat S}\widehat V(\widehat s')\widehat P(d\widehat s'\mid\phi(s),a)\right|\pi(da\mid s).
\end{align*}
The reward term is at most $\varepsilon_r$. The transition term compares the expectation of $\widehat V$ under $\phi_\#P(\cdot\mid s,a)$ with its expectation under $\widehat P(\cdot\mid\phi(s),a)$. By Lemma~\ref{lem:app-tv-bound}, it is at most $V_{\max}\varepsilon_P$. Hence
\[
\|T(\phi^*\widehat V)-\phi^*(\widehat T\widehat V)\|_\infty
\le \varepsilon_r+\gamma V_{\max}\varepsilon_P=:\varepsilon.
\]
Apply this with $\widehat V=\widehat V^{\widehat\pi}$. Since $\widehat T\widehat V^{\widehat\pi}=\widehat V^{\widehat\pi}$,
\[
\|T(\phi^*\widehat V^{\widehat\pi})-\phi^*\widehat V^{\widehat\pi}\|_\infty\le\varepsilon.
\]
The concrete fixed point $V^\pi$ satisfies $T V^\pi=V^\pi$. Therefore
\begin{align*}
\|V^\pi-\phi^*\widehat V^{\widehat\pi}\|_\infty
&=\|T V^\pi-\phi^*\widehat V^{\widehat\pi}\|_\infty\\
&\le\|T V^\pi-T(\phi^*\widehat V^{\widehat\pi})\|_\infty
+\|T(\phi^*\widehat V^{\widehat\pi})-\phi^*\widehat V^{\widehat\pi}\|_\infty\\
&\le \gamma\|V^\pi-\phi^*\widehat V^{\widehat\pi}\|_\infty+\varepsilon.
\end{align*}
Rearranging gives
\[
\|V^\pi-\phi^*\widehat V^{\widehat\pi}\|_\infty
\le\frac{\varepsilon_r+\gamma V_{\max}\varepsilon_P}{1-\gamma}.
\]
\end{proof}

\subsection{Proof of Corollary~\ref{cor:op-mismatch}}
\begin{proof}
The first half of the proof of Theorem~\ref{thm:approx-hom} did not use the fixed-point property. It showed directly that for every $\widehat V$ in the ball $\|\widehat V\|_\infty\le V_{\max}$,
\[
\|T(\phi^*\widehat V)-\phi^*(\widehat T\widehat V)\|_\infty
\le \varepsilon_r+\gamma V_{\max}\varepsilon_P.
\]
Taking the supremum over that ball gives the claimed operator mismatch bound.
\end{proof}

\subsection{Proof of Proposition~\ref{prop:abstraction-context}}
\begin{proof}
Corollary~\ref{cor:op-mismatch} states that the adapted concrete block $A_\phi(T)$ and the adapted abstract block $\widehat A_\phi(\widehat T)$ have local typed transformer discrepancy at most
\[
\varepsilon=\varepsilon_r+\gamma V_{\max}\varepsilon_P.
\]
The adapters give both blocks the same external type, so they can be substituted into the same context. Theorem~\ref{thm:congruence} applied to this local discrepancy gives
\[
\|V_{\mathcal C[A_\phi(T)]}-V_{\mathcal C[\widehat A_\phi(\widehat T)]}\|_\infty
\le \frac{L(\mathcal C)}{1-\kappa(\mathcal C)}\varepsilon.
\]
Substituting the expression for $\varepsilon$ proves the proposition.
\end{proof}

\subsection{Proof of Lemma~\ref{lem:kleene}}
\begin{proof}
Let
\[
V_\infty:=\bigvee_{n\ge0}T^n(\bot).
\]
The sequence $T^n(\bot)$ is increasing: $\bot\le T(\bot)$ by minimality of $\bot$, and monotonicity gives $T^n(\bot)\le T^{n+1}(\bot)$ for all $n$. By $\omega$-continuity,
\[
T(V_\infty)=T\left(\bigvee_{n\ge0}T^n(\bot)\right)=\bigvee_{n\ge0}T^{n+1}(\bot).
\]
Because the chain starts at $\bot$, adding or deleting the first term does not change its supremum, so $T(V_\infty)=V_\infty$. Thus $V_\infty$ is a fixed point.

If $W$ is any fixed point of $T$, then $\bot\le W$. By monotonicity, $T^n(\bot)\le T^n(W)=W$ for every $n$. Taking the supremum over $n$ gives $V_\infty\le W$. Therefore $V_\infty$ is the least fixed point.
\end{proof}

\subsection{Proof of Lemma~\ref{lem:prefp-bounds-lfp}}
\begin{proof}
The value space $\cV(S)$ is a complete lattice because $\cQ$ is complete and the order is pointwise. Let
\[
P:=\{D\in\cV(S):T(D)\le D\}
\]
be the set of pre-fixed points of $T$. The hypothesis says $C\in P$. Let $A:=\bigwedge P$, the meet of all pre-fixed points. Since $A\le D$ for every $D\in P$, monotonicity gives $T(A)\le T(D)\le D$ for every $D\in P$. Hence $T(A)\le\bigwedge P=A$, so $A\in P$.

Because $T(A)\le A$ and $T$ is monotone, $T(T(A))\le T(A)$, so $T(A)$ is also a pre-fixed point. Since $A$ is the meet of all pre-fixed points, $A\le T(A)$. Combining $A\le T(A)$ with $T(A)\le A$ gives $T(A)=A$. Thus $A$ is a fixed point. Moreover, every fixed point is a pre-fixed point, so $A\le W$ for every fixed point $W$. Therefore $A=\lfp(T)$.

Since $C\in P$ and $A=\bigwedge P$, we have $\lfp(T)=A\le C$, as claimed.
\end{proof}

\subsection{Proof of Proposition~\ref{prop:concrete-contract-instance}}
\begin{proof}
For $C\in \mathcal V_+(Y)$, the map $x\mapsto\int_Y C(y)K(dy\mid x)$ is measurable by the defining measurability property of Markov kernels and the standard monotone-class argument for nonnegative measurable integrands.  Hence $\mathcal T_{K,c}C\in\mathcal V_+(X)$.

Monotonicity follows from monotonicity of the extended nonnegative integral: if $C\le D$, then
\[
\int C(y)K(dy\mid x)\le \int D(y)K(dy\mid x)
\]
for every $x$, hence $\mathcal T_{K,c}C\le \mathcal T_{K,c}D$.
For $\omega$-continuity, let $C_n\uparrow C$ pointwise with $C_n\in\mathcal V_+(Y)$.  The monotone convergence theorem gives
\[
\int C(y)K(dy\mid x)=\sup_n\int C_n(y)K(dy\mid x),
\]
and therefore $\mathcal T_{K,c}(\bigvee_n C_n)=\bigvee_n\mathcal T_{K,c}(C_n)$.

For series wiring, substitute the definition twice:
\begin{align*}
(\mathcal T_{K_1,c_1}\mathcal T_{K_2,c_2}C)(x)
&=c_1(x)+\gamma\int_Y\left(c_2(y)+\gamma\int_Z C(z)K_2(dz\mid y)\right)K_1(dy\mid x).
\end{align*}
This is exactly the two-micro-step transformer with the second cost and continuation discounted by one additional factor.
For parallel wiring, product kernels and separability give
\begin{align*}
&\mathcal T_{K_1\otimes K_2,\,c_1\otimes_{\cQ}c_2}(C_1\otimes_{\cQ}C_2)(x_1,x_2)\\
&=c_1(x_1)+c_2(x_2)+\gamma\int\big(C_1(y_1)+C_2(y_2)\big)(K_1\otimes K_2)(d(y_1,y_2)\mid x_1,x_2)\\
&=(\mathcal T_{K_1,c_1}C_1)(x_1)+(\mathcal T_{K_2,c_2}C_2)(x_2).
\end{align*}
The feedback clause is Definition~\ref{def:lfp-trace}: a jointly $\omega$-continuous pre-trace transformer is a morphism in $\mathbf{CPO}_{\omega,\bot}$, so its feedback component has a parameterized least fixed point by Kleene iteration, and tracing returns the output component at that fixed point.
\end{proof}

\subsection{Proof of Theorem~\ref{thm:contract-lifting}}
\label{app:contracts-proofs}
\begin{proof}
For series, Assumption~\ref{ass:compilation-laws} gives
\[
\llbracket \mathsf M_2\circ\mathsf M_1\rrbracket(C_Z)
=(\mathcal T_1\circ\mathcal T_2)(C_Z).
\]
The hypotheses give $\mathcal T_2(C_Z)\le C_Y$ and $\mathcal T_1(C_Y)\le C_X$. Since $\mathcal T_1$ is monotone,
\[
(\mathcal T_1\circ\mathcal T_2)(C_Z)
\le \mathcal T_1(C_Y)
\le C_X.
\]
If $X=Z=S$ and $C_X=C_Z=C_S$, the same inequality reads $(\mathcal T_1\circ\mathcal T_2)(C_S)\le C_S$, which is an inductive closed-loop contract. Lemma~\ref{lem:prefp-bounds-lfp} then gives the closed-loop guarantee.

For parallel composition, Assumption~\ref{ass:compilation-laws} gives the tensor transformer. Monotonicity of each $\mathcal T_i$ gives
\[
\mathcal T_i(C_{Y_i})\le C_{X_i}.
\]
The quantale tensor is monotone in each argument because it distributes over joins and is a monoid operation on the complete lattice. Therefore, pointwise on $X_1\otimes X_2$,
\[
(\mathcal T_1\otimes\mathcal T_2)(C_{Y_1}\otimes_{\cQ}C_{Y_2})
=\mathcal T_1(C_{Y_1})\otimes_{\cQ}\mathcal T_2(C_{Y_2})
\le C_{X_1}\otimes_{\cQ}C_{X_2}.
\]

For feedback, fix $C_Y$ and define the feedback functional
\[
g(Z'):=F_Z(C_Y,Z').
\]
By Assumption~\ref{ass:compilation-laws}, $g$ is monotone and $\omega$-continuous, so the traced feedback value is $Z^*:=\lfp(g)$. The hypothesis $F_Z(C_Y,C_Z)\le C_Z$ says exactly that $C_Z$ is a pre-fixed point of $g$. Lemma~\ref{lem:prefp-bounds-lfp} gives $Z^*\le C_Z$. Monotonicity of $F_X$ in the feedback argument yields
\[
\Tr^Z(F)(C_Y)=F_X(C_Y,Z^*)\le F_X(C_Y,C_Z)\le C_X.
\]
Thus the traced circuit satisfies the claimed bound.
\end{proof}

\subsection{Proof of Lemma~\ref{lem:belief-policy}}
\label{app:extensions-proofs}
\begin{proof}
Let $b_0=\nu_0$. For $t\ge1$, let
\[
h_t=(a_0,o_1,\ldots,a_{t-1},o_t)
\]
and let $b_t$ denote the conditional distribution of $s_t$ given $h_t$. Thus $b_t$ is the posterior belief available when action $a_t$ is chosen. Given the original history-dependent policy $\pi$, define a regular conditional distribution
\[
\pi_{B,t}(\cdot\mid b):=\mathbb{P}_\pi(a_t\in\cdot\mid b_t=b),
\]
which exists under the standard-Borel assumptions. This is a Markov kernel from the belief space $B$ to $A$.

We prove by induction on $t$ that the law of $(b_t,a_t)$ under the original POMDP policy equals the law under the belief process controlled by $(\pi_{B,t})$. At $t=0$, both processes start from the same prior belief $b_0=\nu_0$ and choose actions according to the same conditional distribution by construction. Assume the equality holds at time $t$. Given $(b_t,a_t)$, the predictive observation law and Bayes update determine the law of $b_{t+1}$ through the same kernel $P_B$ in both processes. Then $a_{t+1}$ is sampled from $\pi_{B,t+1}(\cdot\mid b_{t+1})$, which by definition matches the conditional law of the original action given the belief. Hence the joint law matches at time $t+1$.

The lifted reward is $r_B(b,a)=\int r(s,a)b(ds)$, so the conditional expected reward given $(b_t,a_t)$ is the same in the original and belief processes. Since the joint laws of $(b_t,a_t)$ match for every fixed $t$, the expected one-step rewards match for every $t$. Boundedness of $r$ and $\gamma<1$ justify exchanging expectation with the absolutely convergent discounted sum. Therefore the discounted expected returns coincide.
\end{proof}

\subsection{Proof of Theorem~\ref{thm:belief-mdp}}
\begin{proof}
By Lemma~\ref{lem:belief-policy}, every history-dependent POMDP policy induces a possibly time-dependent belief policy with the same law of belief-action pairs at each time and the same expected one-step reward at each time. Therefore the expected discounted returns from $\nu_0$ are equal:
\[
V^\pi_{\mathrm{POMDP}}(\nu_0)=V^{(\pi_{B,t})}_{\mathcal M_B}(\nu_0).
\]
Conversely, any belief policy is a valid POMDP history policy because the belief $b_t$ is a measurable function of the observation-action history. Thus the two policy classes generate the same achievable return values when expressed at the belief state. Taking suprema over the corresponding policy classes gives
\[
V^\star_{\mathrm{POMDP}}(\nu_0)=V^\star_{\mathcal M_B,\mathrm{nonstat}}(\nu_0).
\]
For a stationary belief policy, the belief process is a fully observed discounted MDP with state space $B$, transition kernel $P_B$, reward $r_B$, and discount $\gamma$, so the fixed-point semantics from the main text applies directly.
\end{proof}

\subsection{Proof of Lemma~\ref{lem:traj-rn}}
\begin{proof}
For a finite horizon $T$, the prefix density under a policy $\alpha\in\{\mu,\pi\}$ factors as
\[
d\mathbb{P}^T_\alpha
=d\lambda_0(s_0)\prod_{t=0}^{T-1}\alpha(da_t\mid s_t)P(ds_{t+1}\mid s_t,a_t)R_t(dr_t\mid s_t,a_t,s_{t+1}),
\]
where $\lambda_0$ is the common initial distribution and $R_t$ denotes the reward kernel or deterministic reward emission. The transition and reward factors are identical under $\pi$ and $\mu$; only the action kernels differ. Since $\pi(\cdot\mid s)\ll\mu(\cdot\mid s)$, each one-step action density $d\pi(\cdot\mid s)/d\mu(\cdot\mid s)$ exists. Cancelling the common factors gives
\[
\frac{d\mathbb{P}^{T}_\pi}{d\mathbb{P}^{T}_\mu}
=\prod_{t=0}^{T-1}\frac{d\pi(\cdot\mid s_t)}{d\mu(\cdot\mid s_t)}(a_t).
\]
When the action kernels admit densities with respect to a common reference measure, the derivative can be written as the familiar product $\prod_{t=0}^{T-1}\pi(a_t\mid s_t)/\mu(a_t\mid s_t)$. 

For the infinite-horizon statement, the finite-prefix likelihood ratios form a nonnegative martingale under $\mathbb{P}_\mu$ with mean one. If the infinite-horizon Radon--Nikodym derivative exists, it is the almost-sure limit of this martingale by the standard martingale convergence characterization of projective-limit likelihood ratios.
\end{proof}

\subsection{Proof of Proposition ~\ref{prop:ope-factor}}
\begin{proof}
In the parallel case, the finite-prefix behavior and target measures factorize as products over modules:
\[
\mathbb{P}^{T}_\mu=\bigotimes_j\mathbb{P}^{T,(j)}_\mu,
\qquad
\mathbb{P}^{T}_\pi=\bigotimes_j\mathbb{P}^{T,(j)}_\pi.
\]
For product measures, Radon--Nikodym derivatives multiply. Therefore
\[
\frac{d\mathbb{P}^{T}_\pi}{d\mathbb{P}^{T}_\mu}
=\prod_j\frac{d\mathbb{P}^{T,(j)}_\pi}{d\mathbb{P}^{T,(j)}_\mu}.
\]

In the series case, the assumed interface variable $U_t$ gives a chain-rule decomposition of the prefix measure into conditional subtrajectory laws. If
\[
d\mathbb{P}^{T}_\alpha=dQ^{(1)}_\alpha\,dQ^{(2)}_\alpha(\cdot\mid U)\cdots dQ^{(m)}_\alpha(\cdot\mid U)
\]
for $\alpha\in\{\mu,\pi\}$, then the Radon--Nikodym chain rule gives
\[
\frac{d\mathbb{P}^{T}_\pi}{d\mathbb{P}^{T}_\mu}
=\prod_{j=1}^m\frac{dQ^{(j)}_\pi}{dQ^{(j)}_\mu},
\]
with each factor interpreted conditionally on the relevant interface variables. This is the stated module-wise factorization.
\end{proof}

\subsection{Proof of Corollary~\ref{cor:ope-second-moment}}
\begin{proof}
Let
\[
W_j:=\frac{d\mathbb{P}^{T,(j)}_\pi}{d\mathbb{P}^{T,(j)}_\mu}
\]
be the module-wise finite-prefix importance weights. Proposition~\ref{prop:ope-factor} gives $W=\prod_j W_j$ for the global weight. If the $W_j$ are independent under $\mathbb{P}^T_\mu$, then
\[
\E_\mu[W^2]
=\E_\mu\left[\prod_j W_j^2\right]
=\prod_j\E_\mu[W_j^2].
\]
Taking logarithms gives
\[
\log\E_\mu[W^2]=\sum_j\log\E_\mu[W_j^2],
\]
which is the claimed identity.
\end{proof}

\subsection{Proof of Theorem~\ref{thm:tracking-functor}}
\begin{proof}
For each $t$, apply Lemma~\ref{lem:fp-stability} to $T=\mathcal T_t$ and $T'=\mathcal T_{t+1}$. Both are $\gamma$-contractions and their fixed points are $V_t^\star$ and $V_{t+1}^\star$. The operator discrepancy is exactly
\[
\eta_t=\sup_{V\in\mathbb B}\|\mathcal T_{t+1}V-\mathcal T_tV\|_\infty.
\]
Thus
\[
\|V_{t+1}^\star-V_t^\star\|_\infty\le\frac{\eta_t}{1-\gamma}.
\]
For the cumulative bound, use the triangle inequality:
\[
\|V_T^\star-V_0^\star\|_\infty
\le\sum_{t=0}^{T-1}\|V_{t+1}^\star-V_t^\star\|_\infty
\le\sum_{t<T}\frac{\eta_t}{1-\gamma}.
\]
\end{proof}

\subsection{Proof of Corollary~\ref{cor:iterate-tracking}}
\begin{proof}
By definition, $V_{t+1}=\mathcal T_tV_t$, while $V_t^\star=\mathcal T_tV_t^\star$. Therefore
\[
\|V_{t+1}-V_t^\star\|_\infty
=\|\mathcal T_tV_t-\mathcal T_tV_t^\star\|_\infty
\le\gamma\|V_t-V_t^\star\|_\infty.
\]
Then
\begin{align*}
\|V_{t+1}-V_{t+1}^\star\|_\infty
&\le\|V_{t+1}-V_t^\star\|_\infty+\|V_t^\star-V_{t+1}^\star\|_\infty\\
&\le\gamma\|V_t-V_t^\star\|_\infty+\frac{\eta_t}{1-\gamma},
\end{align*}
where the second term uses Theorem~\ref{thm:tracking-functor}. Iterating this scalar recursion gives
\[
\|V_T-V_T^\star\|_\infty
\le \gamma^T\|V_0-V_0^\star\|_\infty+
\sum_{t<T}\gamma^{T-1-t}\frac{\eta_t}{1-\gamma}.
\]
\end{proof}

\subsection{Proof of Proposition~\ref{prop:parallel-factor}}
\label{app:example-proofs}
\begin{proof}
The proof is the trajectory version of Theorem~\ref{thm:compositionality-parallel}. Under product dynamics and product policy, the two coordinate processes are independent conditional on their initial states. Therefore
\begin{align*}
V^\pi(s_1,s_2)
&=\E\left[\sum_{t\ge0}\gamma^t(r_1(s_{1,t},a_{1,t})+r_2(s_{2,t},a_{2,t}))\mid s_{1,0}=s_1,s_{2,0}=s_2\right]\\
&=\E\left[\sum_{t\ge0}\gamma^t r_1(s_{1,t},a_{1,t})\mid s_{1,0}=s_1\right]
+\E\left[\sum_{t\ge0}\gamma^t r_2(s_{2,t},a_{2,t})\mid s_{2,0}=s_2\right]\\
&=V^{\pi_1}(s_1)+V^{\pi_2}(s_2).
\end{align*}
Linearity of expectation justifies splitting the sum; bounded rewards justify exchanging expectation and the absolutely convergent discounted series.
\end{proof}

\subsection{Proof of Lemma~\ref{lem:traj-tv-bound}}
\begin{proof}
For any $V$ with $\|V\|_\infty\le V_{\max}$ and any state-action input $(x,a)$,
\begin{align*}
&\left|\tilde r(x,a)+\gamma\int V(y)\widetilde P(dy\mid x,a)-r(x,a)-\gamma\int V(y)P(dy\mid x,a)\right|\\
&\le |\tilde r(x,a)-r(x,a)|+\gamma\left|\int V\,d\widetilde P-\int V\,dP\right|\\
&\le \varepsilon_r+\gamma V_{\max}\varepsilon_P,
\end{align*}
where the last step uses Lemma~\ref{lem:app-tv-bound}. Integrating over the fixed policy kernel cannot increase the uniform bound. Taking the supremum over states and over $V\in\mathbb B$ gives
\[
d_\infty(T,\widetilde T)\le\varepsilon_r+\gamma V_{\max}\varepsilon_P.
\]
\end{proof}

\subsection{Proof of Lemma~\ref{lem:series-mismatch}}
\begin{proof}
For any $V\in\mathbb B_S$,
\begin{align*}
\|T_1T_2V-\widetilde T_1\widetilde T_2V\|_\infty
&\le\|T_1T_2V-\widetilde T_1T_2V\|_\infty
+\|\widetilde T_1T_2V-\widetilde T_1\widetilde T_2V\|_\infty\\
&\le \epsilon_1+\gamma\|T_2V-\widetilde T_2V\|_\infty\\
&\le \epsilon_1+\gamma\epsilon_2.
\end{align*}
The first term is bounded by the definition of $\epsilon_1$ on the relevant ball. The second term uses the $\gamma$-Lipschitz property of $\widetilde T_1$ and the definition of $\epsilon_2$. Taking the supremum over $V\in\mathbb B_S$ proves the result.
\end{proof}

\subsection{Proof of Lemma~\ref{lem:two-module-robust}}
\begin{proof}
By Lemma~\ref{lem:traj-tv-bound}, the local module mismatches satisfy
\[
\epsilon_i\le \varepsilon_r^{(i)}+\gamma V_{\max}\varepsilon_P^{(i)}.
\]
By Lemma~\ref{lem:series-mismatch}, the two-step macro-operator mismatch satisfies
\[
d_\infty(T_{\mathrm{ser}},\widetilde T_{\mathrm{ser}})
\le\epsilon_1+\gamma\epsilon_2.
\]
Both macro operators are $\gamma^2$-contractions, because each is a composition of two $\gamma$-Lipschitz micro-step transformers. Lemma~\ref{lem:fp-stability} with $\kappa=\gamma^2$ therefore yields
\[
\|\widetilde V^\pi-V^\pi\|_\infty
\le\frac{\epsilon_1+\gamma\epsilon_2}{1-\gamma^2}.
\]
Substituting the explicit local bounds gives the stated inequality.
\end{proof}

\end{document}